%% file: main.tex
\documentclass{article} 
\usepackage{paperstyle,times}
\usepackage{tikz}

\usepackage{amsmath}
\usepackage{amssymb}
\usepackage{url}
\usepackage{enumitem}
\usepackage{wrapfig}
\usepackage{graphicx}
\usepackage{tcolorbox}
\usepackage{subfig}
\usepackage{mathtools}
\usepackage{amsthm}
\usepackage{algorithm}
\usepackage{algpseudocode}
\usepackage{booktabs}
\usepackage{hyperref}
\usepackage{multirow}
\usepackage{xspace}

\input{math_commands.tex}

\input{macros.tex}

\title{\model: Spectral conjugate for Spectral Graph Convolutional Networks}

\author{%
  Chun Hei Michael Chan \hspace{4.8em} Flavia Petruso \hspace{4.8em} Dimitri Van De Ville \\[0.6em]
  Neuro-X Institute and Department of Electrical Engineering \\
  École polytechnique fédérale de Lausanne (EPFL) \\
  Geneva, 1202, Switzerland \\[0.4em]
}

\begin{document}


\maketitle

\begin{abstract}

Graph convolutional networks propagate information by repeated local aggregation through a graph shift operator; i.e., a $K$-layer network reaches $K$ hops neighborhood. On the one hand, such spreading can lead to oversmoothing. On the other hand, long-range dependencies demand the depth. Transporting information on long distances and without attenuation requires the shift to distinguish a direction of flow, which a symmetric operator cannot perform but a directed one can fulfill. A natural way to extract pure-directionality is to take the skew-symmetric part of the shift operator through the Cartesian split, which, however, generally does not commute with the shift itself, meaning that the filters built on it are not shift-invariant. We instead use the spectral conjugate; i.e., the image of the operator under $\tau:z\mapsto \bar{z}$, which commutes with the shift and splits it into a dissipative and a non-dissipative part. Two filter families follow: a sum filter, whose non-dissipative component transports signal without energy loss, and a \filtername filter, \filtername in the pair of components rather than polynomial in the shift. Both arise from non-holomorphic kernels, placing them outside the holomorphic class underlying classical spectral convolution. On the directed cycle, the \filtername filter becomes an IIR filter with global impulse response, for which we prove a long-range reach gap against every degree-$K$ polynomial filter. Chebyshev reparameterization gives stable vertex-domain layers with real coefficients, yielding \model, which solves graph transfer tasks at reduced depth and is competitive with state-of-the-art graph convolutional networks on node classification benchmarks.

\end{abstract}

\section{Introduction}
\label{sec:introduction}

Graph convolutional networks are among the most widely used architectures for learning on relational data, owing to their efficiency, being light weight and to their increased interpretability given the spectral theory that underpins them~\citep{bruna2014spectral, defferrard2016convolutional, kipf2017semisupervised}. However, two limitations are well documented: node representations tending to converge toward a constant as depth increases, known as oversmoothing~\citep{li2018deeper, oono2020graph, cai2020note, rusch2023survey} and a $K$-layer network aggregating over at most $K$ hops neighborhood, which requires sufficient depth to capture long-range dependencies~\citep{topping2022understanding, di2023over}. Residual connections, normalization, and rewiring~\citep{li2019deepgcns, alon2021on} serve as effective solutions for both obstacles, yet, because our goal is to address their common cause, we do not compare against this line of work.

The reasons for these limitations are largely related to the nature of the operator used in the convolutional networks. For a symmetric graph operator, high frequencies tend to be attenuated due to its real spectrum, leading to progressively less distinctive features since spectral convolution strengthens this attenuation when the number of convolution layers (depth) increases~\citep{oono2020graph, cai2020note}. Escaping this requires eigenvalues on the imaginary axis, an established criterion for stable non-dissipative dynamics \citep{haber2018stable, chang2019antisymmetricrnn}, which A-DGN and its successors obtain by imposing skew-symmetry on the learnable weights \citep{gravina2023adgn, heilig2025porthamiltonian,hariri2026return}. For directed graphs, the graph shift already possesses non-zero imaginary part as it is non-symmetric~\citep{singh2016graph, marques2020signal}.

A natural way to extract skew-symmetry and thus imaginary spectrum is the Cartesian split, which separates the shift operator into its symmetric and skew-symmetric parts. We show, however, that neither part commutes with the shift unless the graph is normal---which directed graphs generally are not~\citep{trefethen2005spectra, asllani2018structure}---so the induced filters are not shift-invariant. To overcome this limitation while still obtaining imaginary spectrum, we instead use the spectral conjugate~\citep{nevanlinna2018non}, which commutes with the shift on any graph, coincides with the transpose when the shift is normal, and remains defined for non-diagonalizable shifts. On this decomposition, we build learnable graph convolutional layers from filters in~\citep{chan2026graph}. Composing the resulting layers gives the \model framework. Importantly, since the underlying scalar functions (kernels) are not complex-differentiable, the construction falls outside the holomorphic functional calculus classical spectral filters rely on \citep{higham2008functions}, introducing a new class of spectral graph convolutions.


\paragraph{Contributions.}
(1) We propose the spectral conjugate, rather than the transpose, to split the directed Laplacian~\citet{singh2016graph} into dissipative and non-dissipative components while preserving shift invariance. (2) We construct the sum and \filtername filters from the resulting components, prove they are both non-holomorphic, and derive their vertex-domain forms via the Cauchy-Pompeiu formula. (3) We prove that on the directed cycle the \filtername filter is an IIR filter with global impulse response, and establish a reach gap against every degree-$K$ polynomial filter in the shift. (4) We reparameterize both filters in Chebyshev bases, yielding stable vertex-domain layers with real coefficients. (5) \model solves graph transfer tasks at a fraction of the depth conventional polynomial filters require, preserves feature distinctiveness with depth, and is competitive with state-of-the-art GCNs on general benchmarks for node classification.

\section{Dissipative, Asymmetric, and Spectral Conjugate Filters}
\label{sec:motivation}

\subsection{Why polynomial spectral filters attenuate}

Let $G=(\mathcal{V},\mathcal{E},\ma W)$\footnote{Notations are provided in Appendix~\ref{app:notations}} be a weighted graph with $N=|\mathcal{V}|$ nodes and weight matrix $\ma W\in\mathbb{R}^{N\times N}_{\geq0}$, and let $\ma L\in\mathbb{R}^{N\times N}$ be the corresponding graph shift operator~\citep{ortega_introduction_2022} with spectrum $\sigma(\ma L)=\{\lambda_n\}\subset\mathbb{C}$. Denote its Jordan decomposition as $\ma L=\ma P(\ma\Lambda+\ma N)\ma P^{-1}$, where $\operatorname{diag}(\ma\Lambda)=\sigma(\ma L)$ and $\ma N$ is the nilpotent part. The standard spectral filter is the matrix $h(\ma L)$ induced by a scalar kernel $h$ that is analytic on a neighborhood $\mathcal{D}\subset\mathbb{C}$ of $\sigma(\ma L)$, on the Jordan block $\ma J_n$ of size $m$ attached to $\lambda_n$,
\begin{equation}
    h(\ma J_n)=\sum_{k=0}^{m-1}\frac{h^{(k)}(\lambda_n)}{k!}\,\ma N_n^{k},
    \label{eq:jordan-filter}
\end{equation}
placing $h(\lambda_n)$ on the diagonal and its complex derivatives $h^{(k)}(\lambda_n)$ on the successive superdiagonals~\citep{sandryhaila_discrete_2014}. Filtering a signal reads as $\vc y=h(\ma L)\vc x$, and $l$ layers apply $h(\ma L)^{l}$. Since $\ma N_n$ is nilpotent, the sum terminates, so the gain of each spectral component is $|h(\lambda_n)|$ up to a factor polynomial in $m$. The standard GCN layer can be read as the degree one polynomial $h(\lambda)=1-t\lambda$ \citep{chamberlain2021grand}, the first-order truncation of the heat kernel $h(\lambda)=e^{-t\lambda}$ whose gain is upper bounded by $e^{-t\Re(\lambda_n)}$
(Appendix~\ref{app:heat-kernel-non-diag}). Components with $\Re(\lambda_n)>0$ thus decay geometrically with depth and only those with $\Re(\lambda_n)=0$ survive: as $l\to\infty$ the representation collapses onto the $\Re(\lambda)=0$ invariant subspace, which is the spectral form of oversmoothing. The following observation shows that under a symmetric shift this subspace is uninformative.

\paragraph{Symmetric shifts and non-dissipative propagation.} A real symmetric $\ma S$ has $\sigma(\ma S)\subset\mathbb{R}$, so $\Re(\lambda_n)=\lambda_n$ for every $n$; and for a Laplacian on a connected graph the eigenvalue $0$ is simple with eigenvector $\mathbf{1}$ \citep{fiedler1973algebraic, chung1997spectral}. Under a symmetric shift, the constant signal is therefore the only mode with $\Re(\lambda)=0$, and the only one preserved under arbitrary depth. Real symmetric operators therefore do not allow for non-dissipative propagation. For directed graphs, the operator is asymmetric, enables purely imaginary eigenvalues, and consequently allows non-dissipative propagation.


\subsection{What about the Cartesian split}
Specify $\mathcal{G}$ to be a directed graph, and consider the directed Laplacian $\ma L=\ma D-\ma W$ with real weight matrix $\ma W$ and $\ma D=\operatorname{diag}(\ma W\mathbf{1})$ the in-degree matrix~\citep{singh2016graph}. $\ma L$ is non-symmetric, and its spectrum is complex with $\Im(\lambda)\neq0$ in general. To isolate the asymmetric structure, a natural approach is the Cartesian split $\ma L=\tfrac{1}{2}(\ma L+\ma L^T)+\tfrac{1}{2}(\ma L-\ma L^T)$, whose second term is skew-symmetric and thus has purely imaginary spectrum. Besides pure imaginary spectrum, it is crucial for both terms to be linear shift-invariant (LSI) --- to commute with the shift operator~\citep{sandryhaila_discrete_2014} --- to enable the analysis of graph signals in the same frequency domain, and thereby the interpretation of the features. Additionally  filters based on either components are LSI w.r.t.\ to $\ma L$ only when they the components themselves are.

\begin{prop}[Cartesian split breaks shift invariance]
\label{prop:cartesian-commute}
The Cartesian components $\tfrac{1}{2}(\ma L\pm\ma L^{T})$ commute with $\ma L$ if and only if $\ma L$ is normal.
\end{prop}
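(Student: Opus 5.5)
The proof is a direct commutator computation. For either sign, the commutator of a Cartesian component with $\ma L$ is
\[
\bigl[\tfrac12(\ma L\pm\ma L^{T}),\ma L\bigr]=\tfrac12\bigl(\ma L^2\pm\ma L^{T}\ma L-\ma L^2\mp\ma L\ma L^{T}\bigr)=\pm\tfrac12\bigl(\ma L^{T}\ma L-\ma L\ma L^{T}\bigr).
\]
The $\ma L^2$ terms cancel because $\ma L$ commutes with itself. The remaining commutator is, up to the sign and the factor $\tfrac12$, the self-commutator $[\ma L^{T},\ma L]$.

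The equivalence then follows at once. Since $\ma L$ is real, $\ma L^{T}=\ma L^{*}$, so normality means exactly $\ma L^{T}\ma L=\ma L\ma L^{T}$. Each component commutes with $\ma L$ if and only if $[\ma L^{T},\ma L]=0$, which is the case if and only if $\ma L$ is normal. The argument is identical for the $+$ and $-$ signs, so the statement holds for each component separately. It also holds for the pair jointly, since the two components sum to $\ma L$, which always commutes with itself.

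There is no real obstacle. The only point needing care is to justify $\ma L^{T}=\ma L^{*}$ from real weights, so that "normal" in the complex sense agrees with the transpose condition. For the closing remark that filters built on the components are not LSI, note that for a polynomial filter $p(\tfrac12(\ma L\pm\ma L^T))$, commutation of the component with $\ma L$ is sufficient but not by itself necessary; the failure of LSI for filters should therefore not be claimed via this proposition.
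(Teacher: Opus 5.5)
Your proof is correct and matches the paper's argument. The paper likewise shows $[\ma L,\tfrac{1}{2}(\ma L\pm\ma L^{T})]=\pm\tfrac{1}{2}[\ma L,\ma L^{T}]$ and reads off both directions of the equivalence from the vanishing of $[\ma L,\ma L^{T}]$; your note that $\ma L^{T}=\ma L^{*}$ for real $\ma L$ only makes explicit what the paper leaves implicit when it equates normality with $\ma L\ma L^{T}=\ma L^{T}\ma L$.
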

Even when diagonalizable, directed Laplacians are usually non-normal~\citep{trefethen2005spectra, asllani2018structure}. Hence the Cartesian split yields components that are not LSI. Thereby, a different split satisfying commutation with $\ma L$ with summands respectively carrying spectrum on real $\sigma_R(\ma L)=\{\Re(\lambda_n)\}$ and imaginary axis $\sigma_{I}(\ma L)=\{j\Im(\lambda_n)\}$, is desirable. This leads us to naturally consider the spectral conjugate~\citep{nevanlinna2018non}. 

\subsection{Non-holomorphic functional and the spectral conjugate}
The spectral conjugate of $\ma L$ is the matrix induced by the non-analytic kernel $\tau:z\mapsto \bar{z}$~\citep{nevanlinna2018non}, non-analytic since its Wirtinger derivative $\bar{\partial}\tau:= \partial \tau/\partial \bar{z}=1$ (Appendix~\ref{app:wirtinger-holomorphicity-poly-ratio}). Yet in~\eqref{eq:jordan-filter}, $h$ is required to be analytic, equivalently holomorphic. Therefore to evaluate $\tau$, we leverage non-holomorphic functional calculus. First recall that holomorphic functional calculus~\citep{auscher1997holomorphic} presents the evaluation of an analytic function $h$ at $\lambda \in \Omega$ via the Cauchy integral
\begin{equation}
    h(\lambda)
    = \frac{1}{2\pi i}\oint_{\Gamma} \frac{h(z)}{z - \lambda}\,dz,
    \label{eq:contour-holomorphic}
\end{equation}
where $\Gamma$ is any contour enclosing $\lambda$ in its interior. Lifting this to the matrix setting results in the standard graph filters (Appendix~\ref{app:holomorphic-to-graphfilters}) and classical spectral graph filters relying on canonical, Chebyshev, or Cayley polynomials falls within this framework (Appendix~\ref{app:wirtinger-holomorphicity-poly-ratio}). Evaluating $\tau$ instead follows from the Cauchy-Pompeiu formula~\citep{bell2015cauchy}, which extends the Cauchy integral to smooth but non-holomorphic functions:
\begin{equation}
    h(\lambda)
    = \frac{1}{2\pi i}\oint_{\Gamma} \frac{h(z)}{z-\lambda}\,dz + \frac{1}{2\pi i}\iint_{\Omega} \frac{\bar\partial h(z)}{z-\lambda}\,dz\wedge d\bar z,
    \label{eq:contour-non-holomorphic}
\end{equation}
where $\Omega$ is the region enclosed by $\Gamma$. Evaluating $\tau$ at $\ma L$ through~\eqref{eq:contour-non-holomorphic}, as in~\citet{nevanlinna2018non}, yields the subsequent definition of spectral conjugate. 

\begin{definition}\cite[Def.~2.5 -- Spectral conjugate]{nevanlinna2018non}
\label{def:spectral-conjugate}
The spectral conjugate of $\ma L$ under the non-holomorphic function $\tau: z\mapsto \bar{z}$ is given by
\begin{equation}
    \ma L^{\sharp}:= \tau(\ma L) = \ma P\tau(\ma J)\ma P^{-1} = \ma P\overline{\ma \Lambda} \ma P^{-1},
    \label{eq:spectral-conjugate}
\end{equation}
where $\tau(\ma J)$ is composed of the Jordan blocks $\tau(\ma J_n) = \overline{\lambda}_n\ma I$.
\end{definition}




\begin{theorem}[Conjugate properties]
\label{thm:conjugate-properties} 
$\ma L^{\sharp}$ commutes with $\ma L$, is real, and coincides with $\ma L^{T}$ if and only if $\ma L$ is normal.
\end{theorem}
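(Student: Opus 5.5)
The plan is to reduce everything to the spectral projections of $\ma L$. By Definition~\ref{def:spectral-conjugate}, $\ma L^\sharp=\ma P\overline{\ma\Lambda}\ma P^{-1}$. Let $\mu_1,\dots,\mu_r$ be the distinct eigenvalues of $\ma L$, and let $\ma E_k=\ma P\,\mathrm{diag}(\text{ones on the block of }\mu_k)\,\ma P^{-1}$ be the Riesz projection onto the generalized eigenspace of $\mu_k$. With this notation $\ma L^\sharp=\sum_k\bar\mu_k\ma E_k$.

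\textbf{Commutation.} The projections $\ma E_k$ are Cauchy integrals of the resolvent, $\ma E_k=\frac{1}{2\pi i}\oint_{\Gamma_k}(z\ma I-\ma L)^{-1}dz$. Equivalently, each $\ma E_k$ is a polynomial in $\ma L$. Hence each $\ma E_k$ commutes with $\ma L$. In Jordan coordinates this is just the fact that $\ma J$ commutes with any matrix that is constant on each generalized eigenspace, because $\ma N$ preserves the block structure. It follows that $\ma L^\sharp$ commutes with $\ma L$.

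\textbf{Realness.} Since $\ma L$ is real, its spectrum is closed under conjugation. Conjugating the contour integral gives $\overline{\ma E_k}=\ma E_{k'}$, where $\mu_{k'}=\bar\mu_k$. Therefore
\[
\overline{\ma L^\sharp}=\sum_k\mu_k\overline{\ma E_k}=\sum_{k'}\bar\mu_{k'}\ma E_{k'}=\ma L^\sharp .
\]
I expect this to be the delicate step. A careless argument based on $\overline{\ma P}$ alone fails, because $\ma P$ is not canonical. The argument has to go through the basis-independent projections, which is why I use them throughout.

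\textbf{Characterization via normality.} If $\ma L$ is normal, it is unitarily diagonalizable, $\ma L=\ma U\ma\Lambda\ma U^*$. Then $\ma L^\sharp=\ma U\overline{\ma\Lambda}\ma U^*=\ma L^*=\ma L^T$, the last equality because $\ma L$ is real.

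Conversely, suppose $\ma L^\sharp=\ma L^T$. The first part gives that $\ma L^\sharp$ commutes with $\ma L$, so $\ma L^T\ma L=\ma L\ma L^T$, which is exactly normality for a real matrix. This converse is immediate.

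A side remark: if $\ma L$ has nontrivial Jordan blocks, then $\ma L^\sharp$ is diagonalizable while $\ma L^T$ is not, so the two cannot be equal. This is consistent with the converse and needs no separate treatment.
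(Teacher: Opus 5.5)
Your proof is correct. The normality part (unitary diagonalization in one direction, $\ma L^{\sharp}=\ma L^{T}$ plus commutation for the converse) is exactly the paper's argument; the commutation and realness parts take a different route.

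\textbf{How the paper proves the first two claims.} It takes commutation from Nevanlinna's Proposition~5.1. It gets realness from the non-holomorphic functional calculus, using two ingredients:
\begin{itemize}
\item a block-action formula $h(\ma J_n)=\sum_k(\partial^k h)(\lambda_n)\ma N_n^k/k!$ with Wirtinger derivatives, which collapses to $\bar\lambda_n\ma I$ for $\tau$ because $\partial\tau=0$;
\item a general conjugation lemma $\overline{h(\ma A)}=h^{*}(\overline{\ma A})$ with $h^{*}(z)=\overline{h(\bar z)}$, applied with $\tau^{*}=\tau$ and $\overline{\ma L}=\ma L$.
\end{itemize}

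\textbf{How yours differs.} You write $\ma L^{\sharp}=\sum_k\bar\mu_k\ma E_k$ with Riesz projections. Commutation then follows because each $\ma E_k$ is a polynomial in $\ma L$. Realness follows because conjugation permutes the $\ma E_k$ in step with the conjugation-closed spectrum.

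\textbf{What your route buys.} It is self-contained. It also makes explicit that $\ma L^{\sharp}$ does not depend on the choice of Jordan basis. The paper uses this tacitly, relying on the well-definedness of Nevanlinna's calculus, in two places: when it identifies $\overline{\ma P}\,h^{*}(\overline{\ma J})\,\overline{\ma P}^{-1}$ with $h^{*}(\overline{\ma A})$, and when it writes $\ma L^{\sharp}=\ma U\overline{\ma\Lambda}\ma U^{H}$ in the normal case. For the same reason, your remark that an argument via $\overline{\ma P}$ ``fails'' is overstated. That argument goes through once basis-independence is known, and basis-independence is exactly what your projection formula supplies.

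\textbf{What the paper's route buys.} Its conjugation lemma holds for every $h\in\mathcal{C}^{N-1}$. It therefore gives realness at no extra cost for any kernel with $h^{*}=h$, for example $\kappa^{\circ}$, $\kappa^{\uparrow}$, or sum kernels with real coefficients. Your formula only covers kernels that act as a scalar on each generalized eigenspace, as $\tau$ does. Extending it would require adding the nilpotent terms $\sum_{j\ge1}(\partial^{j}h)(\mu_k)(\ma L-\mu_k\ma I)^{j}\ma E_k/j!$.
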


\begin{corollary}[Shift-invariant spectral split]
\label{cor:split}
The decomposition
\begin{equation}
    \ma L=\underbrace{\tfrac{1}{2}\bigl(\ma L+\ma L^{\sharp}\bigr)}_{\ma L^{\circ}}
    +\underbrace{\tfrac{1}{2}\bigl(\ma L-\ma L^{\sharp}\bigr)}_{\ma L^{\uparrow}}
    \label{eq:spectral-split}
\end{equation}
is exact, both components commute with $\ma L$ and with each other, are real, and $\sigma(\ma L^{\circ})=\sigma_R(\ma L)$, $\sigma(\ma L^{\uparrow})=\sigma_I(\ma L)$. For normal $\ma L$, it reduces to the Cartesian split. Thereby we term $\ma L^{\circ}$ and $\ma L^{\uparrow}$  dissipative and asymmetric operators, respectively.
\end{corollary}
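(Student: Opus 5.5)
The plan is to derive everything from Theorem~\ref{thm:conjugate-properties} together with the explicit form $\ma L^{\sharp}=\ma P\overline{\ma\Lambda}\ma P^{-1}$ of Definition~\ref{def:spectral-conjugate}.

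\emph{Exactness and realness.} Exactness is immediate, since $\tfrac12(\ma L+\ma L^{\sharp})+\tfrac12(\ma L-\ma L^{\sharp})=\ma L$. Realness of both components follows because $\ma L$ is real and, by Theorem~\ref{thm:conjugate-properties}, $\ma L^{\sharp}$ is real.

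\emph{Commutation.} Theorem~\ref{thm:conjugate-properties} gives $\ma L\ma L^{\sharp}=\ma L^{\sharp}\ma L$. Hence both $\ma L^{\circ}$ and $\ma L^{\uparrow}$ are linear combinations of $\ma L$ and $\ma L^{\sharp}$, and therefore lie in the commutative algebra generated by these two commuting matrices. Both components consequently commute with $\ma L$ and with each other. For instance, $[\ma L^{\circ},\ma L^{\uparrow}]=\tfrac14\bigl([\ma L,-\ma L^{\sharp}]+[\ma L^{\sharp},\ma L]\bigr)=0$.

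\emph{Spectra.} I would work in the Jordan basis. We have
\[
\ma P^{-1}\ma L^{\circ}\ma P=\tfrac12(\ma\Lambda+\ma N+\overline{\ma\Lambda})=\Re(\ma\Lambda)+\tfrac12\ma N,
\qquad
\ma P^{-1}\ma L^{\uparrow}\ma P=j\Im(\ma\Lambda)+\tfrac12\ma N.
\]
Each right-hand side is block upper triangular. Within each Jordan block, the diagonal is constant, equal to $\Re(\lambda_n)$ or $j\Im(\lambda_n)$ respectively, and $\ma N$ is strictly upper triangular. The eigenvalues can therefore be read off the diagonal, giving $\sigma(\ma L^{\circ})=\{\Re(\lambda_n)\}=\sigma_R(\ma L)$ and $\sigma(\ma L^{\uparrow})=\sigma_I(\ma L)$, counted with algebraic multiplicity.

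\emph{Normal case.} For normal $\ma L$, Theorem~\ref{thm:conjugate-properties} gives $\ma L^{\sharp}=\ma L^{T}$, so~\eqref{eq:spectral-split} coincides with the Cartesian split.

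\emph{Main obstacle.} The only delicate point is the non-diagonalizable case, where one must verify that $\ma L^{\sharp}$ carries no nilpotent part, so that it does not perturb the triangular structure. This is guaranteed by the definition $\tau(\ma J_n)=\overline{\lambda}_n\ma I$. If realness of $\ma P\overline{\ma\Lambda}\ma P^{-1}$ had to be argued independently, I would use the fact that the Jordan data of a real matrix come in conjugate pairs. Since $\overline{\ma L}=\ma L$, one may choose the chains so that $\overline{\ma P}$ equals $\ma P$ with conjugate blocks permuted. Then $\overline{\ma P\overline{\ma\Lambda}\ma P^{-1}}=\ma P\overline{\ma\Lambda}\ma P^{-1}$. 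Equivalently, $\ma L^{\sharp}$ is the sum over conjugate pairs of $\overline{\lambda}$ times spectral projectors, and these pairs are mapped to each other under complex conjugation.
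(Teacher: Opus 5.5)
Your proposal is correct and follows the paper's proof essentially line by line. Exactness is trivial, commutation and realness are inherited from Theorem~\ref{thm:conjugate-properties}, and the normal case uses $\ma L^{\sharp}=\ma L^{T}$. Where the paper simply asserts the spectral claims ``by definition'', your Jordan-basis computation $\ma P^{-1}\Lc\ma P=\Re(\ma\Lambda)+\tfrac12\ma N$ (and likewise for $\Lu$) reads the eigenvalues off a triangular form, which makes that step rigorous for defective $\ma L$.
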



In practice, $\ma L^{\sharp}$ is computed with the Schur--Parlett algorithm~\citep{parlett1976recurrence,davies2003schur}, which evaluates a matrix function on the Schur form instead of the (numerically unstable) Jordan form. Its cost is dominated by the Schur decomposition which scales as $O(N^3)$.

\subsection{Spectral filters: the sum and \filtername kernels}
\label{sec:non-holomorphicity}
Recalling~\eqref{eq:spectral-split}, one can design kernels that operate separately on $\sigma_R(\ma L)$ and $\sigma_I(\ma L)$. We consider the two kernels of~\citep{chan2026graph}, 
\paragraph{Sum kernel.}
Separate polynomial expansions in the real and imaginary parts,
\begin{equation}
  h_{\sumf}(z)=\sum_{k=0}^{K_1}\vc a[k]\,\left(\frac{z+\bar{z}}{2}\right)^k \;+\;\sum_{k=0}^{K_2}\vc b[k]\,\left(\frac{z-\bar{z}}{2}\right)^k, \qquad \vc a[k],\vc b[k]\in\mathbb{C}.
  \label{eq:kernel-sum}
\end{equation}
\paragraph{Ratio kernel.}
Consider the polynomial in the ratio $(z-\bar{z})/(z+\bar{z})$,
\begin{equation}
  h_{\ratf}(z)= \vc c[0] + 
  \begin{cases}
    \displaystyle\sum_{k=1}^{K}\vc c[k]
      \left(\dfrac{z-\bar{z}}{z+\bar{z}}\right)^{\!k}, & \Re(z)\neq 0,\\[10pt]
    0, & \Re(z)=0,
  \end{cases}
  \qquad \vc c[k]\in\mathbb{C},
  \label{eq:kernel-rational}
\end{equation}
The case $\Re(z)=0$ arises only at the direct current (DC) eigenvalue $\lambda=0$. Setting $h_{\ratf}(0)=\vc c[0]$ removes the division-by-zero instability, and allows filtering of the DC component. 
We next show that the sum and \filtername~(\eqref{eq:kernel-sum}, \ref{eq:kernel-rational}) kernels span a new function class, being non-holomorphic. 

\begin{theorem}[Vertex-domain sum and \filtername filters]
\label{thm:non-holomorphicity}
The kernels $h_{\sumf}$ and $h_{\ratf}$ are non-holomorphic. Their corresponding graph filters admit the vertex-domain representations
\begin{align} \label{eq:graph-sum-filter}
    h_{\sumf}(\ma L)
    = \sum_{k=0}^{K_1}\vc a[k]\,\bigl(\ma L^{\circ}\bigr)^k + \sum_{k=0}^{K_2}\vc b[k]\,\bigl(\ma L^{\uparrow}\bigr)^k, \quad h_{\ratf}(\ma L) = \sum_{k=0}^{K}\vc c[k]\,\bigl(\ma L^{\uparrowcirc}\bigr)^k,
\end{align}
where $\ma L^{\uparrowcirc} = \ma L^{\uparrow}(\ma L^{\circ})^{D}$ and $(\ma L^{\circ})^{D}$ the Drazin inverse (Appendix~\ref{def:drazin}) of $\ma L^{\circ}$.
\end{theorem}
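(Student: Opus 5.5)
Non-holomorphicity and the vertex-domain representations are independent claims, so we treat them separately.

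\emph{Non-holomorphicity.} We use the Wirtinger criterion: a smooth $h$ is holomorphic on an open set iff $\bar\partial h\equiv 0$ there. For $h_{\sumf}$, write $x=(z+\bar z)/2$ and $y'=(z-\bar z)/2$. Then $\bar\partial x=\tfrac12$ and $\bar\partial y'=-\tfrac12$, so
\begin{equation*}
\bar\partial h_{\sumf}=\tfrac12\sum_{k\ge1}k\,\vc a[k]x^{k-1}-\tfrac12\sum_{k\ge1}k\,\vc b[k](y')^{k-1}.
\end{equation*}
This is a polynomial in $z,\bar z$. It vanishes identically only if $\vc a[k]=\vc b[k]=0$ for $k\ge2$ and $\vc a[1]=\vc b[1]$, i.e.\ only in the degenerate case where $h_{\sumf}$ reduces to an affine function of $z$. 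We state non-holomorphicity for the generic, non-degenerate case. For $h_{\ratf}$, set $r=(z-\bar z)/(z+\bar z)$ on $\Re z\neq0$. A direct computation gives
\begin{equation*}
\bar\partial r=\frac{-(z+\bar z)-(z-\bar z)}{(z+\bar z)^2}=\frac{-2z}{(z+\bar z)^2}\neq0 .
\end{equation*}
Hence $\bar\partial h_{\ratf}=p'(r)\,\bar\partial r$ with $p(r)=\sum_k \vc c[k]r^k$, and this is nonzero unless $p$ is constant. The kernel is also discontinuous at $0$, which already precludes holomorphy there.

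\emph{Vertex-domain forms.} We define $h(\ma L)$ for a non-holomorphic kernel as in Definition~\ref{def:spectral-conjugate}: through the Jordan form, with $h(\ma J_n)=h(\lambda_n)\ma I$ on each block. This is the value the Cauchy--Pompeiu representation \eqref{eq:contour-non-holomorphic} assigns, following \citet{nevanlinna2018non}. The map $h\mapsto h(\ma L)$ is then linear and multiplicative, because each block is sent to a scalar multiple of the identity; it is the functional calculus of the diagonalizable part $\ma P\ma\Lambda\ma P^{-1}$.

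By Corollary~\ref{cor:split}, $\ma L^{\circ}=\ma P\,\Re(\ma\Lambda)\ma P^{-1}$ and $\ma L^{\uparrow}=\ma P\,j\Im(\ma\Lambda)\ma P^{-1}$. These are exactly the images of $(z+\bar z)/2$ and $(z-\bar z)/2$ under the calculus. Multiplicativity gives $\bigl((z\pm\bar z)/2\bigr)^k\mapsto(\cdot)^k$, and linearity then yields the stated formula for $h_{\sumf}(\ma L)$.

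For $h_{\ratf}$, we first compute the Drazin inverse of $\ma L^{\circ}=\ma P\,\Re(\ma\Lambda)\ma P^{-1}$. Since this matrix is diagonalizable, its Drazin inverse is $\ma P\,\Re(\ma\Lambda)^{\dagger}\ma P^{-1}$, where $\Re(\ma\Lambda)^{\dagger}$ inverts the nonzero diagonal entries and keeps the zeros. We justify this by checking the three Drazin axioms blockwise. Consequently $\ma L^{\uparrowcirc}=\ma P\,\operatorname{diag}(\rho(\lambda_n))\ma P^{-1}$, where
\begin{equation*}
\rho(\lambda)=\frac{j\Im\lambda}{\Re\lambda}=\frac{\lambda-\bar\lambda}{\lambda+\bar\lambda}\ \text{ if }\Re\lambda\neq0,\qquad \rho(\lambda)=0 \ \text{ otherwise}.
\end{equation*}
This matches the piecewise definition of $h_{\ratf}$, including the convention $h_{\ratf}=\vc c[0]$ at $\Re z=0$, since $\rho^0=1$ there and $\rho^k=0$ for $k\ge1$. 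Raising to powers blockwise and summing gives $h_{\ratf}(\ma L)=\sum_k\vc c[k](\ma L^{\uparrowcirc})^k$, where we read $(\ma L^{\uparrowcirc})^0$ as $\ma I$.

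\emph{Main obstacle.} The delicate point is the non-diagonalizable case, where $h(\ma L)$ must be shown consistent with \eqref{eq:contour-non-holomorphic}. For the monomials $(z\pm\bar z)^k$, which are polynomials in $z$ and $\bar z$, the Cauchy--Pompeiu term has to produce exactly the blockwise-scalar values. My plan is to prove this for $z$ and $\bar z$ separately, using \citet{nevanlinna2018non}, and then to extend it by establishing multiplicativity on polynomials in $(z,\bar z)$. A further subtlety is that $\Re\lambda=0$ can occur at eigenvalues $\lambda\neq0$. The Drazin construction handles these automatically, and the same convention applies to them.
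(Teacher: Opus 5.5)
Your non-holomorphicity argument is correct. It is also sharper than needed: you correctly flag the degenerate case ($\vc a[k]=\vc b[k]=0$ for $k\ge2$ and $\vc a[1]=\vc b[1]$), where $h_{\sumf}$ collapses to $\vc a[0]+\vc b[0]+\vc a[1]z$ and is holomorphic.

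\textbf{Where the proof fails.} The gap is in the vertex-domain part, exactly in the non-diagonalizable case that this theorem is meant to cover. Your rule $h(\ma J_n)=h(\lambda_n)\ma I$ is the functional calculus of the semisimple part $\ma S=\ma P\ma\Lambda\ma P^{-1}$. It is \emph{not} what the Cauchy--Pompeiu formula assigns: already the identity kernel $z$ is sent to $\ma S\neq\ma L$. The Cauchy--Pompeiu (Nevanlinna) calculus acts on a Jordan block as $h(\ma J_n)=\sum_{k=0}^{m-1}\frac{(\partial^k h)(\lambda_n)}{k!}\ma N_n^k$, with $\partial$ the holomorphic Wirtinger derivative. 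This reduces to your scalar rule only when $\partial h=0$, as for $\tau$, which is why Definition~\ref{def:spectral-conjugate} looks blockwise scalar.

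For $\kappa^{\circ}=(z+\bar z)/2$ and $\kappa^{\uparrow}=(z-\bar z)/2$ one has $\partial\kappa^{\circ}=\partial\kappa^{\uparrow}=\tfrac12$. Hence $\Lc=\tfrac12(\ma L+\ma L^{\sharp})=\ma P(\Re\ma\Lambda+\tfrac12\ma N)\ma P^{-1}$ and $\Lu=\ma P(j\Im\ma\Lambda+\tfrac12\ma N)\ma P^{-1}$. So your identification of $\Lc$ and $\Lu$ ``by Corollary~\ref{cor:split}'' is false whenever $\ma N\neq0$.

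Concretely, the Laplacian of the directed path on three nodes is triangular with spectrum $\{0,1,1\}$ and a size-$2$ Jordan block at $1$. Since the spectrum is real, $\ma L^{\sharp}=\ma S$, so $\Lu=\tfrac12\ma P\ma N\ma P^{-1}\neq0$, whereas your formula gives $\Lu=0$. For the same reason $\Lc$ is not diagonalizable in general. Its Drazin inverse acts on a block as $(\Re\lambda_n\ma I+\tfrac12\ma N_n)^{-1}$, not as $(\Re\lambda_n)^{-1}\ma I$, so your computation of $(\Lc)^{D}$ and of $\ma L^{\uparrowcirc}$ also fails. The repair sketched in your last paragraph cannot succeed: Cauchy--Pompeiu applied to $z$ returns $\ma L$ itself, never a blockwise-scalar matrix.

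\textbf{How to fix it.} Use the correct calculus throughout. On $\mathcal C^{N-1}(\overline{\mathcal D})$ the Nevanlinna calculus is a homomorphism with $z\mapsto\ma L$ and $\bar z\mapsto\ma L^{\sharp}$. Hence $\kappa^{\circ}\mapsto\Lc$ and $\kappa^{\uparrow}\mapsto\Lu$, and linearity plus multiplicativity give the sum formula with no blockwise computation at all.

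For the ratio filter, take $\mathcal D$ to be a union of small disjoint discs around the eigenvalues. Define the smooth kernel $\xi=1/\kappa^{\circ}$ on discs around eigenvalues with $\Re\lambda\neq0$, and $\xi=0$ on the others. The homomorphism property yields the three Drazin axioms, so $\xi(\ma L)=(\Lc)^{D}$ and therefore $(\kappa^{\uparrow}\xi)^k(\ma L)=(\ma L^{\uparrowcirc})^k$. The kernel $\vc c[0]+\sum_k\vc c[k](\kappa^{\uparrow}\xi)^k$ then agrees with $h_{\ratf}$ off the imaginary axis and takes the prescribed value $\vc c[0]$ at the remaining eigenvalues.

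Your blockwise argument is correct, and more elementary, when $\ma L$ is diagonalizable. That case, however, is already covered by the earlier construction; the extension to defective $\ma L$ is what this theorem adds.
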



Theorem~\ref{thm:non-holomorphicity} has two consequences. First, it grounds the vertex-domain forms~\eqref{eq:graph-sum-filter} for any $\ma L$, diagonalizable or not, whereas~\citet{chan2026graph} define them only in the diagonalizable case. Second, no polynomial or convergent power series in $z$ represents either kernel. On a fixed graph, $h(\ma L)$ still equals a polynomial in $\ma L$ interpolating $h$ on $\sigma(\ma L)$, but one that is graph-specific, of degree up to $N$, and ill-conditioned (Appendix~\ref{app:finite-polynomial-interpolation}). The new class is thus distinguished at the kernel level, and at the matrix level by degree and conditioning.

\section{Spectral graph convolution networks}
\subsection{Graph filter design}
Canonical polynomial implementations of spectral filters are ill-conditioned in their coefficients, and Chebyshev bases are the typical choice to tackle this limitation~\citep{boyd2001chebyshev}; however, they apply to real spectra. We show that~\eqref{eq:graph-sum-filter} admits a Chebyshev reparameterization nonetheless. Throughout, $T_k$ denotes the degree $k$ Chebyshev polynomial, and is recursively defined as
\begin{equation}
   T_k(x)=2x\,T_{k-1}(x)-T_{k-2}(x),\quad k\ge 2,
   \qquad T_0(x)=1,\ T_1(x)=x.
  \label{eq:cheb_rec}
\end{equation}

\paragraph{Sum filter.}
The operator $\Lc$ has real eigenvalues $\{\lR[n]\}$, which we rescale to $[-1,1]$:
\begin{equation}
  \wLc=\frac{\Lc-c_R\ma I}{R_R},\qquad
  c_R=\tfrac{1}{2}\bigl(\max_n\lR[n]+\min_n\lR[n]\bigr),\qquad
  R_R=\max_n|\lR[n]-c_R|+\varepsilon .
  \label{eq:scale-diff}
\end{equation}
The operator $\Lu$ has purely imaginary eigenvalues $\{j\lI[n]\}$, so $-j\Lu$ has real spectrum $\{\lI[n]\}$. Because $\Lu$ is a real-valued matrix, its eigenvalues occur in conjugate pairs, giving $c_I=0$; no recentering is needed and
\begin{equation}
  \wLu=-j\,\Lu/R_I,\qquad R_I=\max_n|\lI[n]|+\varepsilon .
  \label{eq:scale-adv}
\end{equation}
Combining degree-$K_1$ and degree-$K_2$ expansions in the two rescaled operators yields
\begin{equation}
  h_{\sumf}(\ma L)=\sum_{k=0}^{K_1}\vct\alpha[k]\,T_k(\wLc) + \sum_{k=0}^{K_2}\vct\beta[k]\,T_k(\wLu),
  \label{eq:cheb-sum}
\end{equation}
with learnable $\vct\alpha\in\mathbb{C}^{K_1+1}$, $\vct\beta\in\mathbb{C}^{K_2+1}$.

\paragraph{Ratio filter.}
The quotient operator $\ma L^{\uparrowcirc}=\Lu(\Lc)^{D}$ likewise has purely imaginary eigenvalues $j\lI[n]/\lR[n]$ for $\lR[n]\neq 0$, so the same substitution applies. Setting $\ma Z=-j\ma L^{\uparrowcirc}$ gives the real spectrum $\vct z[n]=\lI[n]/\lR[n]$, occurring in conjugate pairs, hence $c_Q=\tfrac{1}{2}\bigl(\max_n \vct z[n]+\min_n \vct z[n]\bigr)=0$ and
\begin{equation}
  \widetilde{\ma Z}=\ma Z/R_Q,\qquad R_Q=\max_n|\vct z[n]|+\varepsilon ,
  \label{eq:scale-rat}
\end{equation}
so that $h_{\ratf}(\ma L)=\sum_{k=0}^{K}\vct\eta[k]\,T_k(\widetilde{\ma Z})$ with learnable $\vct\eta\in\mathbb{C}^{K+1}$. To observe the effect of reparameterization, we provide the frequency response for each filter $h_{\sumf}(\ma L), h_{\ratf}(\ma L)$ in Appendix~\ref{app:freq-response-cheb}.
\begin{theorem}[Real-valued twins filters]
\label{thm:real_matrix} 
Real-valued filters expressible by $h_{\sumf}(\ma L)$ are exactly those obtained with $\vct\alpha[k]\in\R$, $\vct\beta[2k]\in\R$ and $\vct\beta[2k+1]\in j\R$ for all $k$. Likewise, real-valued filters expressible by $h_{\ratf}(\ma L)$ are exactly those obtained with  $\vct\eta[2k]\in\R$ and $\vct\eta[2k+1]\in j\R$ for all $k$.
\end{theorem}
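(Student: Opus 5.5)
The plan is to work directly with complex conjugation of matrices and exploit the parity of Chebyshev polynomials. Throughout, we read ``exactly'' in the natural sense, since the matrix families involved need not be linearly independent (for instance $T_0(\wLc)=T_0(\wLu)=\ma I$). We show two things. First, every choice of coefficients satisfying the stated constraints yields a real matrix. Second, every real matrix of the form \eqref{eq:cheb-sum} admits a representation with coefficients obeying these constraints.

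\textbf{Step 1: reality of the building blocks.} By Corollary~\ref{cor:split}, $\Lc$ and $\Lu$ are real. The centering and scaling constants $c_R,R_R$ are real, so $\wLc$ is real and each $T_k(\wLc)$ is real, because $T_k$ has real coefficients. For the asymmetric branch we write $\wLu=-j\ma A$ with $\ma A=\Lu/R_I$ real. Since $T_k$ has the parity of $k$, we get $T_k(-jx)=(-j)^kT_k(jx)\cdot(\ldots)$; concretely, only powers $x^{k-2m}$ appear, and $(-j)^{k-2m}=(-j)^k(-1)^m$. Hence $T_k(\wLu)=(-j)^k\ma B_k$ with $\ma B_k$ a real polynomial in $\ma A$. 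It follows that $T_k(\wLu)$ is real for even $k$ and purely imaginary for odd $k$. Equivalently, $\overline{T_k(\wLu)}=(-1)^kT_k(\wLu)$. The ``if'' direction is now immediate: with $\vct\alpha[k]\in\R$, $\vct\beta[2k]\in\R$ and $\vct\beta[2k+1]\in j\R$, every summand is real.

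\textbf{Step 2: the converse by symmetrization.} Suppose $\ma H=h_{\sumf}(\ma L)$ is real for some complex $\vct\alpha,\vct\beta$. Then $\ma H=\tfrac12(\ma H+\overline{\ma H})$. Conjugating \eqref{eq:cheb-sum} and using Step 1 gives
\begin{equation*}
\ma H=\sum_{k=0}^{K_1}\Re(\vct\alpha[k])\,T_k(\wLc)+\sum_{k=0}^{K_2}\tfrac12\bigl(\vct\beta[k]+(-1)^k\overline{\vct\beta[k]}\bigr)T_k(\wLu).
\end{equation*}
The new coefficients are $\Re(\vct\alpha[k])\in\R$, $\Re(\vct\beta[2k])\in\R$ and $j\,\Im(\vct\beta[2k+1])\in j\R$, so this is a representation of the required type. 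The construction does not depend on any linear independence of the families $\{T_k(\wLc)\}$ and $\{T_k(\wLu)\}$, which is why it sidesteps the non-uniqueness issue.

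\textbf{Step 3: the ratio filter.} The same argument applies once we know that $\widetilde{\ma Z}=-j\,\Lu(\Lc)^D/R_Q$ has the form $-j\times$(real matrix). The only new ingredient, and the step I expect to need the most care, is that the Drazin inverse of a real matrix is real. We prove this from uniqueness. Conjugating the defining equations $\ma X\ma M\ma X=\ma X$, $\ma M\ma X=\ma X\ma M$ and $\ma M^{k+1}\ma X=\ma M^k$ with $\ma M=\Lc$ real shows that $\overline{(\Lc)^D}$ also satisfies them. Uniqueness then gives $(\Lc)^D=\overline{(\Lc)^D}$. Consequently $\Lu(\Lc)^D$ is real, $\overline{T_k(\widetilde{\ma Z})}=(-1)^kT_k(\widetilde{\ma Z})$, and the parity and symmetrization arguments of Steps 1--2 yield the constraints $\vct\eta[2k]\in\R$ and $\vct\eta[2k+1]\in j\R$.
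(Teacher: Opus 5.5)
Your proof is correct, and it takes a more direct route than the paper's.

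\textbf{The paper's route.} It argues in the canonical basis of Theorem~\ref{thm:non-holomorphicity}. First, a Vandermonde/conjugate-eigenvalue argument asserts that any real filter $\sum_k \vc a[k](\Lc)^k+\sum_k \vc b[k](\Lu)^k$ (resp.\ $\sum_k\vc c[k](\ma L^{\uparrowcirc})^k$) can be rewritten with real canonical coefficients. Then a top-down triangular change of basis to $T_k(\wLc)$, $T_k(\wLu)$, $T_k(\widetilde{\ma Z})$ reads off the parity pattern from the factor $(-j)^k$ introduced by $\wLu=-j\Lu/R_I$ and $\widetilde{\ma Z}=-j\ma L^{\uparrowcirc}/R_Q$.

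\textbf{Your route.} You stay in the Chebyshev basis throughout. You use the identity $\overline{T_k(\wLu)}=(-1)^kT_k(\wLu)$, which the paper only records as ``further intuition'' in Lemma~\ref{lem:parity}. You then get the converse by averaging $\ma H$ with $\overline{\ma H}$.

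\textbf{What each buys.} Your argument needs neither the Vandermonde step nor any coefficient identification, and no spectral reasoning beyond the reality of $\Lc$ and $\Lu$. It is therefore unaffected by the linear dependence among $\{T_k(\wLc)\}\cup\{T_k(\wLu)\}$ that you point out. It also proves that $(\Lc)^D$, hence $\ma L^{\uparrowcirc}$, is real, which the paper relies on implicitly when it asserts that real $\vc c$ yields a real ratio filter. The same symmetrization, applied in the canonical basis, is in fact all the paper's Vandermonde step requires. In exchange, the paper's route gives an explicit dictionary between canonical and Chebyshev coefficients, tying the reparameterization back to the vertex-domain forms.

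\textbf{Cosmetic fix.} Delete the garbled identity $T_k(-jx)=(-j)^kT_k(jx)\cdot(\ldots)$ in Step~1. The monomial computation right after it is what carries the argument.
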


Despite the odd-order coefficients being purely imaginary, factoring $j$ out for these terms brings back the problem to a real-valued optimization. This effectively handles concerns of memory and compute overhead when optimizing over complex-valued parameter space~\citep{kramer2024tutorial}.


\subsection{The \model architecture}
\label{sec:architecture}
In what follows, we describe the layer design for both the sum and \filtername filters.
\paragraph{Sum filter based layer design.}
We propose a simple layer design that leverages the sum graph filter proposed in~\eqref{eq:graph-sum-filter}. Denote $\ma X^{(l)}$ the node features of dimension $N \times C$ (nodes times channels) at the $l$-th layer. The $(l+1)$ layer update reads as:
\begin{equation}
    \label{eq:TwinSpecGCN-sum}
    \ma X^{(l+1)} = \psi\left(\nu \sum_{k=0}^{K_1} T_k(\wLc)\ma X^{(l)}\ma \Theta_k^{(l, \circ)} + (1-\nu)\sum_{k=0}^{K_2}j^k\, T_k(\wLu)\ma X^{(l)}\ma \Theta_k^{(l, \uparrow)}\, + \ma B^{(l)}\right),
\end{equation}
where $\psi$ is a pointwise nonlinear function, $\ma \Theta_{k}^{(l,\circ)}$ and $\ma \Theta_{k}^{(l,\uparrow)}$ are learnable real weight matrices, $\ma B^{(l)}$ is a real learnable bias term and $\nu\in[0,1]$ is real learnable parameter as introduced in \citet{rossi2024edge}, controlling importance of dissipative and non-dissipative parts. Composing sum filter based layers, we build the \modelS model.
\paragraph{Ratio filter based layer design.}
Similarly, the update rule for the $(l+1)$-th layer, based on the \filtername graph filter proposed in~\eqref{eq:graph-sum-filter} is given by:
\begin{equation}
    \label{eq:TwinSpecGCN-ratio}
    \ma X^{(l+1)} = \psi\left(\sum_{k=0}^{K} j^k\, T_k(\widetilde{\ma Z})\ma X^{(l)}\ma \Theta_k^{(l)}\, + \ma B^{(l)}\right),
\end{equation}
where $\ma \Theta_{k}^{(l)}$ is a learnable real weight matrix. Consecutive \filtername filters constitute the \modelR model. We also propose to combine both models leading to \modelC (Appendix~\ref{app:TwinSpecGCN-combined}).

\subsection{Theoretical analysis}
\label{subsec:theoretical-analysis}
\subsubsection{Mitigating oversmoothing}
\label{subsubsec:oversmoothing}
Oversmoothing arises when repeated graph convolution drives node features toward a constant vector, and is diagnosed by the decay across layers of the Dirichlet energy~\citep{rusch2023survey}
\begin{equation}
    \mathcal{S}(\ma X^{(l)})
    = \tfrac{1}{2}\!\!\sum_{(n,m)\in\bar{\mathcal{E}}}\!\!
      \bigl\|\ma X^{(l)}_n - \ma X^{(l)}_m\bigr\|^2,  \qquad \bar{\mathcal{E}} = \mathcal{E}\cup\{(m,n):(n,m)\in\mathcal{E}\}.
    \label{eq:dirichlet-energy}
\end{equation}
As shown by the sum in~\eqref{eq:dirichlet-energy}, the Dirichlet energy sees only the symmetrized edge set $\bar{\mathcal{E}}$, or in another words the symmetric part of the shift, making it a measure of feature distinctiveness rather than of directed smoothness~\citep{marques_signal_2020}. We retain it in that sense, following \citet{choi2023gread}, and verify the mitigation empirically in Section~\ref{sec:receptive-oversmoothing} by tracking $\mathcal{S}(\ma X^{(l)})$ across depth. 

We analyze this in particular on the directed cycle $\mathcal{C}_N$, where $\ma L$ is normal and the graph Fourier basis is unitary, so for a scalar signal $\vc x^{(l)}$ Parseval identity gives
\begin{equation}
    \mathcal{S}(\vc x^{(l)})
    = \bigl(\vc x^{(l)}\bigr)^{\!T}\!\ma L^{\circ}\,\vc x^{(l)}
    = \sum_{n=0}^{N-1}\lR[n]\,\bigl|\hat{\vc x}^{(l)}[n]\bigr|^2 .
    \label{eq:dirichlet-cycle}
\end{equation}

Since $\Lu$ has purely imaginary spectrum $\sigma_I(\ma L)$, any spectral function of unit modulus applied to it leaves every $|\hat{\vc x}[n]|$ unchanged and hence preserves~\eqref{eq:dirichlet-cycle} exactly. The filter $e^{\beta\Lu}$ relies on such function, which it acts as $\hat{\vc x}[n]\mapsto e^{j\beta\lI[n]}\hat{\vc x}[n]$, a pure phase rotation with $\mathcal{S}(e^{\beta\ma L^{\uparrow}}\vc x)=\mathcal{S}(\vc x)$. A degree one sum filter with $\vc a=(0,0)$ and $\vc b=(1,\beta)$ realizes its first-order truncation,
\begin{equation}
    \vc x^{(l+1)}_{\sumf} = \bigl(\ma I+\beta\ma L^{\uparrow}\bigr)\vc x^{(l)}
    \;\Longrightarrow\;
    \mathcal{S}\bigl(\vc x^{(l+1)}_{\sumf}\bigr) = \sum_{n}\lR[n]\bigl(1+\beta^{2}\lI[n]^{2}\bigr) \bigl|\hat{\vc x}^{(l)}[n]\bigr|^{2},
    \label{eq:energy-first-order}
\end{equation}
so the truncation never contracts energy.

\begin{wrapfigure}{r}{0.48\linewidth}
  \vspace{-14pt}
  \centering
  \includegraphics[width=\linewidth]{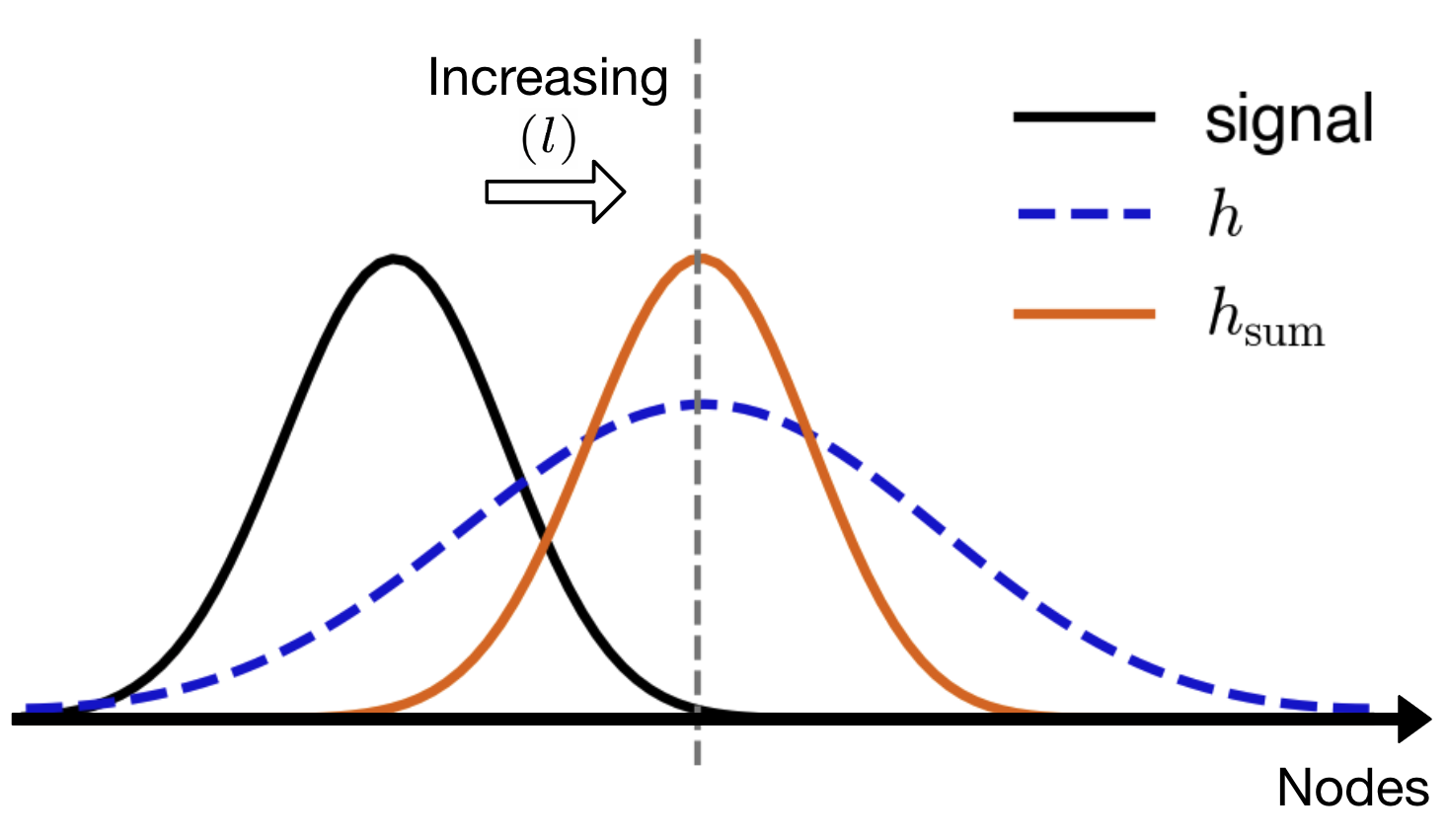}
  \caption{Isolating $\Lu$ allows $h_\sumf$ to translate the signal while the polynomial kernel $h$ inseparably moves and flattens it.}
  \label{fig:oversmoothing}
  \vspace{-12pt}
\end{wrapfigure}

The mechanism is the complementary spectral action of the two components: $\ma L^{\circ}$ has real non-negative spectrum and attenuates high-frequency modes, whereas $\ma L^{\uparrow}$ transports features by phase alone and does not attenuate. Therefore increasing the depth does not necessarily imply smoothing. The argument extends beyond $\mathcal{C}_N$. On any directed graph, the eigenvalues of $e^{\beta\Lu}$ are $e^{j\beta\lI[n]}$, all of unit modulus, so no mode is attenuated exponentially: for diagonalizable $\ma L$, the filter is a pure phase rotation of each Fourier coefficient (no attenuation), and for defective $\ma L$, the nilpotent part adds at most a factor polynomial in $\beta$~(Appendix~\ref{app:phase-kernel-non-diag}). 

\subsubsection{Expanding receptive field reach}

Message-passing networks built on a local shift operator share a structural ceiling: a $K$-layer network moves information at most $K$ hops (Theorem~\ref{thm:reach-gap} (i)), so long-range aggregation often demands large depth. We show that the quotient operator $\ma L^{\uparrowcirc}$ alleviates this ceiling, taking $\mathcal{C}_N$ with a pulse-translation task as a setting that isolates receptive field reach.

\paragraph{Graph \filtername and IIR filters.}
The operator $\ma L^{\uparrowcirc}$ shares the eigenbasis of the shift but has eigenvalues $\vct\mu[n] = j\lI[n]/\lR[n]$. On $\mathcal{C}_N$ these are a Möbius transform of the shift eigenvalues $\vct z[n] = e^{j\vct \theta[n]}$, which turns the graph \filtername filter into a \filtername function of the shift.

\begin{theorem}[IIR correspondence]
\label{thm:fir-iir}
On $\mathcal{C}_N$ with shift eigenvalues $\vct z[n]=e^{j\vct \theta[n]}$, the eigenvalues $\vct \mu[n]$ satisfy the relation $\vct \mu[n] =(\vct z[n]+1)/(\vct z[n]-1)$. Consequently, a degree-$K$ polynomial filter in $\ma L^{\uparrowcirc}$ has spectral response
\begin{equation}
\sum_{k=0}^{K}\vc c[k]\,\vct \mu[n]^k
\;=\;
\frac{\sum_{k=0}^{K}\vc c[k]\,(\vct z[n]+1)^k(\vct z[n]-1)^{K-k}}{(\vct z[n]-1)^{K}}
\;=\;\frac{Q_K(\vct z[n])}{(\vct z[n]-1)^{K}},
\label{eq:rational}
\end{equation}
i.e. a \filtername or infinite impulse response (IIR) filter of type $[K/K]$ in the shift, with all $K$ poles located at the DC mode $z=1$. Its vertex-domain impulse response therefore has global support.
\end{theorem}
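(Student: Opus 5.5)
The plan is to compute the spectrum of $\ma L$ on $\mathcal{C}_N$ explicitly, read off $\lR[n]$ and $\lI[n]$, and check the Möbius relation by elementary trigonometry. The rational form \eqref{eq:rational} then follows by clearing denominators. Global support is argued last, from the pole at $z=1$.

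\textbf{Step 1: spectrum and the Möbius relation.} On $\mathcal{C}_N$ every node has in-degree one, so $\ma D=\ma I$ and $\ma L=\ma I-\ma W$, where $\ma W$ is the cyclic permutation (the shift). $\ma W$ is normal with DFT eigenvectors and eigenvalues $\vct z[n]=e^{j\vct\theta[n]}$, $\vct\theta[n]=2\pi n/N$, so $\ma L$ has eigenvalues $1-\vct z[n]$. Hence
\begin{equation*}
\lR[n]=1-\cos\vct\theta[n], \qquad \lI[n]=-\sin\vct\theta[n].
\end{equation*}
By Corollary~\ref{cor:split} and the definition of the Drazin inverse, $\ma L^{\uparrowcirc}$ is diagonal in the same basis, with eigenvalue $j\lI[n]/\lR[n]$ for $n\neq0$ and $0$ at DC. Using half-angle identities, $-\sin\theta/(1-\cos\theta)=-\cot(\theta/2)$. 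On the other side, $(e^{j\theta}+1)/(e^{j\theta}-1)=\cos(\theta/2)/(j\sin(\theta/2))=-j\cot(\theta/2)$. Thus $\vct\mu[n]=(\vct z[n]+1)/(\vct z[n]-1)$ for every $n\neq 0$. I will fix the orientation convention of $\ma W$ so that the signs agree; reversing it only replaces $\vct z$ by $\bar{\vct z}$.

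\textbf{Step 2: rational form.} For $n\neq 0$ I substitute $\vct\mu[n]$ into $\sum_k \vc c[k]\vct\mu[n]^k$ and multiply numerator and denominator by $(\vct z[n]-1)^K$. This gives \eqref{eq:rational} with $Q_K(z)=\sum_k \vc c[k](z+1)^k(z-1)^{K-k}$, a polynomial of degree at most $K$. The response is therefore of type $[K/K]$, and its only pole is $z=1$, with multiplicity $K$ whenever $\vc c[K]\neq 0$ (since $Q_K(1)=2^K\vc c[K]$). The DC mode is handled separately: there the filter returns $\vc c[0]$ by the convention of \eqref{eq:kernel-rational} and the Drazin inverse.

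\textbf{Step 3: global support (main obstacle).} On a finite cycle, any filter diagonal in the DFT basis is formally a polynomial in $\ma W$ of degree at most $N-1$. So ``global support'' must be shown concretely: the impulse response must be nonzero at (essentially) all nodes for generic coefficients, not merely be non-polynomial.

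I would argue by partial fractions. Write $Q_K(z)/(z-1)^K=\vc c[K]+\sum_{m=1}^{K} d_m (z-1)^{-m}$. On the non-DC modes, $(\vct z[n]-1)^{-1}$ is the spectrum of the Drazin (group) inverse of $\ma W-\ma I$. This has an explicit circulant impulse response: a discrete ramp $g[v]=(N-1-2v)/(2N)$, up to sign and orientation. It is supported on all $N$ nodes, vanishing at most at one node when $N$ is odd.

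Higher powers are iterated discrete sums of this ramp, polynomials in $v$ of degree $m$. A nonzero combination of them can vanish on only finitely many (at most $K$) nodes per period. The leading coefficient $d_K=2^K\vc c[K]\neq0$ guarantees that the combination is nonzero. Hence the impulse response is supported on all but $O(K)$ nodes, independently of the hop count $K$. This is the IIR behaviour, in contrast to the $K$-hop support of a degree-$K$ polynomial in $\ma W$.

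The delicate parts are the root-counting for these periodic polynomial-type sequences and the treatment of the zero-mean correction coming from removing DC. I would handle both by working directly with the closed forms (Bernoulli-polynomial type) of the iterated sums.
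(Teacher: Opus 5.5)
Your proposal is correct. Steps~1--2 coincide with the paper's proof: half-angle identities give $\vct\mu[n]=-j\cot(\vct\theta[n]/2)=(\vct z[n]+1)/(\vct z[n]-1)$, and then you clear $(\vct z[n]-1)^K$. You also add two details the paper skips. The first is $Q_K(1)=2^K\vc c[K]$, which is needed so that no pole cancels. The second is the separate DC mode, where the right-hand side of \eqref{eq:rational} is undefined.

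The real difference is Step~3. The paper settles global support in one line: a rational response ``corresponds to an IIR (ARMA) vertex-domain filter and hence has non-compact support''. That is DSP intuition rather than a proof on a finite cycle, where every circulant is a polynomial in $\ma W$.

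Your partial-fraction route makes the claim quantitative, and the steps you call delicate are easy. Take $\ma G=(\ma W-\ma I)^{D}$ and $\Pi_0=\tfrac1N\mathbf 1\mathbf 1^{T}$. Then $\vct\mu=1+2/(\vct z-1)$ gives $\ma L^{\uparrowcirc}=(\ma I-\Pi_0)+2\ma G$. You also have $(\ma W-\ma I)\ma G\vc\delta_0=\vc\delta_0-\tfrac1N\mathbf 1$ and $(\ma W-\ma I)\ma G^{m}\vc\delta_0=\ma G^{m-1}\vc\delta_0$ for $m\ge2$. Every $\ma G^{m}\vc\delta_0$ has zero mean, which is exactly the wrap-around consistency condition. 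Hence each $\ma G^{m}\vc\delta_0$ is a \emph{single} polynomial of exact degree $m$ on all of $\{0,\dots,N-1\}$, and the DC correction only adds a constant off node~$0$. So for $K\ge1$ and $\vc c[K]\neq0$, the impulse response on $\{1,\dots,N-1\}$ is a polynomial of exact degree $K$ and vanishes on at most $K$ nodes.

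One slip: the constant term of your partial-fraction expansion is $\sum_k\vc c[k]$ (the value at $\vct\mu=1$, i.e.\ $z\to\infty$), not $\vc c[K]$. It only changes the $\vc\delta_0$ term and that constant, so nothing breaks.

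The paper's route is short and conveys the pole-at-DC intuition. Yours gives ``global support'' a precise finite-$N$ meaning. As a by-product it recovers Lemma~\ref{lem:ramp} and the leading coefficient of Lemma~\ref{lem:degree}, valid on all of $1\le d\le N-1$ rather than only on $K<d<N-K$ with $N\ge3K+2$.
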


Although IIR for $\mathcal{C}_N$, on general graphs the \filtername filter is no longer a \filtername function of $\ma L$. Different from graph IIR filters~\citep{isufi2016autoregressive, liu2018filter}, it needs no iterative solver to perform filtering and inserts directly into any convolutional layer, retaining finite impulse response (FIR) efficiency while achieving IIR reach.

\paragraph{The reach of the \filtername filter.} We measure how far a filter propagates information. For $\vc y = h(\ma L)\vc x$ with $\vc x$ an impulse at node $0$, the influence of node $0$ on node $d$ (at distance $d$\,) is $h[d,0]:=h(\ma L)[d,0] = \partial\vc y[d]/\partial\vc x[0]$ (the sensitivity of~\citet{di2023over}), and we define the reach as the relative sensitivity
\begin{equation}
  S_h(d) = \frac{|h[d,0]|}{\max_{v\in\mathcal{V}}|h[v,0]|} \in [0,1].
  \label{eq:reach}
\end{equation}

\begin{theorem}[Reach of shift, heat, and \filtername filters]
\label{thm:reach-gap}
On $\mathcal{C}_N$ with nodes $\mathcal{V}=\{0,\dots,N-1\}$, let $d\in \mathcal{V} \setminus \{0\}$, and assume $N\geq 3K+2$. Then:
\begin{enumerate}
\item[(i)] \emph{(Shift filter)} If $h=\sum_{m=0}^{K}\gamma_m\ma{W}^m$, then $h[d,0]=\gamma_d$ for $d\leq K$ and 
\begin{equation}
  S_h(d)=0, \quad \text{for every } d>K.
\end{equation}
\item[(ii)] \emph{(Heat filter)} If $h=e^{-t\Lc}$ with $t>0$, then 
\begin{equation}
\ma S_h(d) \le\ \frac{(t/2)^d}{d!}.
\label{eq:heatbound}
\end{equation}
\item[(iii)] \emph{(Ratio filter)} If $h=h_\ratf:=\sum_{k=0}^{K}\vc c[k] (\ma L^{\uparrowcirc})^k$ then for $K<d<N-K$
\begin{equation}
  S_h(d) \approx \frac{|p(d/N)|}{\max_{v\in\mathcal{V}}|p(v/N)|}\, , \quad p(s)=\sum_{k=1}^{K}\vc c[k]\,\frac{(-1)^{k-1}2^{k}}{k!}\,N^{k-1}B_k(s),
\label{eq:ratsens}
\end{equation}
where $B_k(s)$ is the Bernoulli polynomial of degree $k$.
\end{enumerate}
\end{theorem}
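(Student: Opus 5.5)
We work throughout with the directed cycle's shift $\ma W$ (the cyclic permutation $\ma W\vc e_v=\vc e_{v+1}$, indices mod $N$), so $\ma D=\ma I$ and $\ma L=\ma I-\ma W$. This $\ma L$ is normal and circulant, diagonalized by the unitary DFT with $\ma W$-eigenvalues $\vct z[n]=e^{j\vct\theta[n]}$, $\vct\theta[n]=2\pi n/N$. By Theorem~\ref{thm:conjugate-properties} and Corollary~\ref{cor:split}, $\ma L^{\sharp}=\ma L^T$ and $\ma L^{\circ}=\ma I-\tfrac12(\ma W+\ma W^T)$. Every filter in the statement is therefore circulant, and $h[d,0]=\frac1N\sum_n h(\vct\lambda[n])\,\vct z[n]^{d}$. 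Each of the three parts reduces to evaluating or bounding such an entry. The reach $S_h(d)$ then follows by dividing by a lower bound on $\max_v|h[v,0]|$, for which we use the diagonal entry $h[0,0]$ in (ii) and an explicit maximizer in (iii).

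\textbf{(i)} We have $\ma W^m\vc e_0=\vc e_{m \bmod N}$, so $h[d,0]=\sum_{m\le K,\,m\equiv d}\gamma_m$. Because $N\ge 3K+2>K$, no wrap-around occurs. Hence $h[d,0]=\gamma_d$ for $d\le K$ and $h[d,0]=0$ for $K<d\le N-1$, which gives $S_h(d)=0$.

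\textbf{(ii)} Write $e^{-t\ma L^{\circ}}=e^{-t}e^{(t/2)(\ma W+\ma W^T)}=e^{-t}e^{(t/2)\ma W}e^{(t/2)\ma W^{T}}$, using that these matrices commute. Expanding both exponentials gives the method-of-images form
\[
h[d,0]=e^{-t}\sum_{i\in\mathbb{Z}}I_{|d+iN|}(t),
\]
with $I_\nu$ the modified Bessel function. From its series, $I_{\nu+d}(t)=\sum_k (t/2)^{2k+\nu+d}/(k!(k+\nu+d)!)\le \frac{(t/2)^d}{d!}I_\nu(t)$, because $(k+\nu)!\,d!\le(k+\nu+d)!$. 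We then pair each image term at distance $|d+iN|$ with the diagonal term at $|iN|$ or $|(i+1)N|$, whichever is closer, taking $d$ to be the graph distance $\min(d,N-d)$ so that the index gap is at least $d$. This gives $h[d,0]\le \frac{(t/2)^d}{d!}\,h[0,0]\le \frac{(t/2)^d}{d!}\max_v h[v,0]$. The only care needed is the bookkeeping of this pairing.

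\textbf{(iii)} By Theorem~\ref{thm:fir-iir}, $\vct\mu[n]=(\vct z[n]+1)/(\vct z[n]-1)=-j\cot(\pi n/N)$ for $n\ne0$, and the DC mode contributes only $\vc c[0]/N$, a constant. Thus
\[
h[d,0]=\frac{\vc c[0]}{N}+\sum_{k=1}^{K}\vc c[k]\,\frac1N\sum_{n\neq0}\bigl(-j\cot(\pi n/N)\bigr)^{k}e^{2\pi j nd/N}.
\]
We treat the inner sum in two steps.
\begin{itemize}
\item \emph{Exact step.} On $\mathbf 1^{\perp}$, $(\ma W-\ma I)^{-1}$ is a discrete antiderivative. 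So $(\ma L^{\uparrowcirc})^k=Q(\ma W)(\ma W-\ma I)^{-k}$ restricted to $\mathbf 1^\perp$, where $Q$ is a polynomial of degree $k$. Its column $0$ is then a polynomial in $d$ of degree $\le k$ (a discrete Bernoulli-type polynomial) away from the support of $Q(\ma W)\vc e_0$. That support has width $K$, which explains the window $K<d<N-K$.
\item \emph{Asymptotic step.} Use $\cot x=1/x+O(x)$ together with the Fourier series $\sum_{n\ne0}e^{2\pi jns}/n^k=-(2\pi j)^kB_k(s)/k!$, valid for $0<s<1$. The leading part of the $k$-th term is
\[
\tfrac1N\bigl(-jN/\pi\bigr)^k\bigl(-(2\pi j)^kB_k(s)/k!\bigr)=\pm N^{k-1}2^kB_k(s)/k!,\qquad s=d/N .
\]
The sign convention for the orientation $z\leftrightarrow\bar z$ produces the $(-1)^{k-1}$ in $p$.
\end{itemize}
Summing over $k$ gives $h[d,0]\approx p(d/N)$. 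Dividing by the maximum over $v$ yields \eqref{eq:ratsens}, provided the maximizer of $|h[\cdot,0]|$ also lies in the window or the boundary entries are of the same order.

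\textbf{Main obstacle.} The main difficulty is controlling the error in (iii). There are three sources: the $O(x)$ remainder of $\cot$ summed against $n^{-k+1}$, which gives $O(N^{k-3})$ relative corrections after the $N^{k-1}$ scaling; the aliasing of the truncated Fourier series; and the entries near $d=0$ inside the excluded band. We would first make the exact polynomial representation explicit via $(\ma W-\ma I)^{-k}$, and then compare its coefficients with those of $B_k$. We expect $N\ge 3K+2$ to be exactly what keeps the two boundary bands of width $K$ disjoint from the window and from each other.
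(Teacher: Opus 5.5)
Part (i) is the paper's argument. The gap is in (ii). Your key inequality $I_{\nu+d}(t)\le\frac{(t/2)^d}{d!}I_\nu(t)$ is the paper's termwise comparison. However, the paper applies it only to $h[d,0]=e^{-t}I_d(t)$, i.e.\ to the unwrapped kernel, so it never meets the images. Your image sum is the correct kernel on $\mathcal{C}_N$, but the pairing step fails.

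For $d\le N/2$ the image orders are $d,\,N-d,\,N+d,\,2N-d,\dots$. Each must be charged to a diagonal order \emph{below} it. Charging $N-d$ to the ``closer'' order $N$ goes the wrong way, since $I_{N-d}(t)\gg I_N(t)$ for small $t$. So both $d$ and $N-d$ land on $I_0$, which occurs only once in $h[0,0]=e^{-t}(I_0+2I_N+2I_{2N}+\cdots)$. You therefore only get $S_h(d)\le 2\,(t/2)^d/d!$.

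The factor $2$ cannot be removed in general. For even $N$ and $d=N/2$ the two images coincide, and $S_h(N/2)=2\,\frac{(t/2)^{N/2}}{(N/2)!}\,(1+O(t^2))$ as $t\to0$, which violates the stated bound. Read with the node label, the bound also fails for $d>N/2$, because $S_h(d)=S_h(N-d)$. So you were right to pass to graph distance, but you must also exclude the antipodal distance(s).

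For $d\le N/2-1$ the factor-one bound does hold:
\begin{enumerate}
\item[(a)] $N-d\ge d+2$ gives $I_{N-d}\le I_{d+2}$ by monotonicity in the order.
\item[(b)] $I_{d+2}$ is absorbed termwise by the slack $\frac{(t/2)^d}{d!}I_0-I_d$. For every $k\ge1$, the slack's coefficient $\frac{1}{d!(k!)^2}-\frac{1}{k!(k+d)!}$ of $(t/2)^{2k+d}$ dominates the coefficient $\frac{1}{(k-1)!(k+d+1)!}$ in $I_{d+2}$: multiply through by $k!(k+d)!$ and use $\binom{k+d}{d}-1\ge k$.
\item[(c)] The remaining images $mN+d$ and $(m+1)N-d$, $m\ge1$, then pair injectively with the two copies of $I_{mN}$.
\end{enumerate}

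For (iii) your route is genuinely different from the paper's, and once repaired it is arguably cleaner. The paper stays in the vertex domain. From $\Lc\Lrat=\Lu$ it gets the exact ramp $\vc g_1[\ell]=2\ell/N-1$. It then proves by induction on $-\tfrac12\Delta\vc g_k=\Lu\vc g_{k-1}$ that $\vc g_k$ is a degree-$k$ polynomial on $k<d<N-k$ with $a_k=-2a_{k-1}/k$. Finally it identifies $B_k$ through the continuum limit $\tilde P_k''=-2\tilde P_{k-1}'$, fixing the affine freedom by the zero-mean and reflection conditions. You instead obtain polynomiality from the M\"obius factorization $(\Lrat)^k=(\ma W+\ma I)^k(\ma W-\ma I)^{-k}$ on $\mathbf{1}^{\perp}$. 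You then read off $B_k$, normalisation included, from its Fourier series, so no boundary conditions are needed.

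Two slips must be fixed.
\begin{enumerate}
\item[(a)] \emph{Orientation.} With $\ma W\vc e_v=\vc e_{v+1}$ and $\ma W$-eigenvalues $e^{j\vct\theta[n]}$, the entry is $h[d,0]=\frac1N\sum_n h(\lambda_n)\,\vct z[n]^{-d}$, not $\vct z[n]^{d}$. Your orientation gives $-\frac{2^k}{k!}N^{k-1}B_k(d/N)$; at $k=1$ this is $1-2d/N$, the negative of the exact ramp. The correct orientation produces $(-1)^{k-1}$ via $B_k(1-s)=(-1)^kB_k(s)$, so the sign is forced, not a convention.
\item[(b)] \emph{The $k=0$ term.} It contributes $\vc c[0]\delta_{d,0}$, not $\vc c[0]/N$; you dropped its non-DC modes.
\end{enumerate}

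For the error, estimating $\cot x-1/x$ and the truncation to $|n|\le N/2$ separately gives the leading order but not the $O(N^{k-3})$ rate for small $k$. The clean fix is the Mittag-Leffler expansion $\cot x=\sum_{m\in\mathbb{Z}}(x-m\pi)^{-1}$ (symmetric summation). Write $\cot^k$ as a constant plus a combination of $\partial_x^{\ell-1}\cot$ with $\ell\le k$, $\ell\equiv k\pmod 2$. Re-indexing $n-mN\mapsto n$ then turns each DFT sum into the full Bernoulli series minus the terms with $N\mid n$. This gives the exact polynomial identity $\vc g_k[d]=(-1)^{k-1}\frac{2^k}{k!}N^{k-1}B_k(d/N)+O(N^{k-3})$ for all $1\le d\le N-1$; for example, $\vc g_2[d]=-2NB_2(d/N)-\frac{2}{3N}$. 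This makes the paper's heuristic continuum step rigorous.

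Finally, your caveat about the denominator is genuine, and the paper glosses over it. $h[0,0]$ contains $\vc c[0]$, and when that term dominates, $\max_v|h[v,0]|$ is not governed by $p$.
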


\begin{wrapfigure}{r}{0.5\linewidth}
  \vspace{-8pt}
  \centering
  \includegraphics[width=1\linewidth]{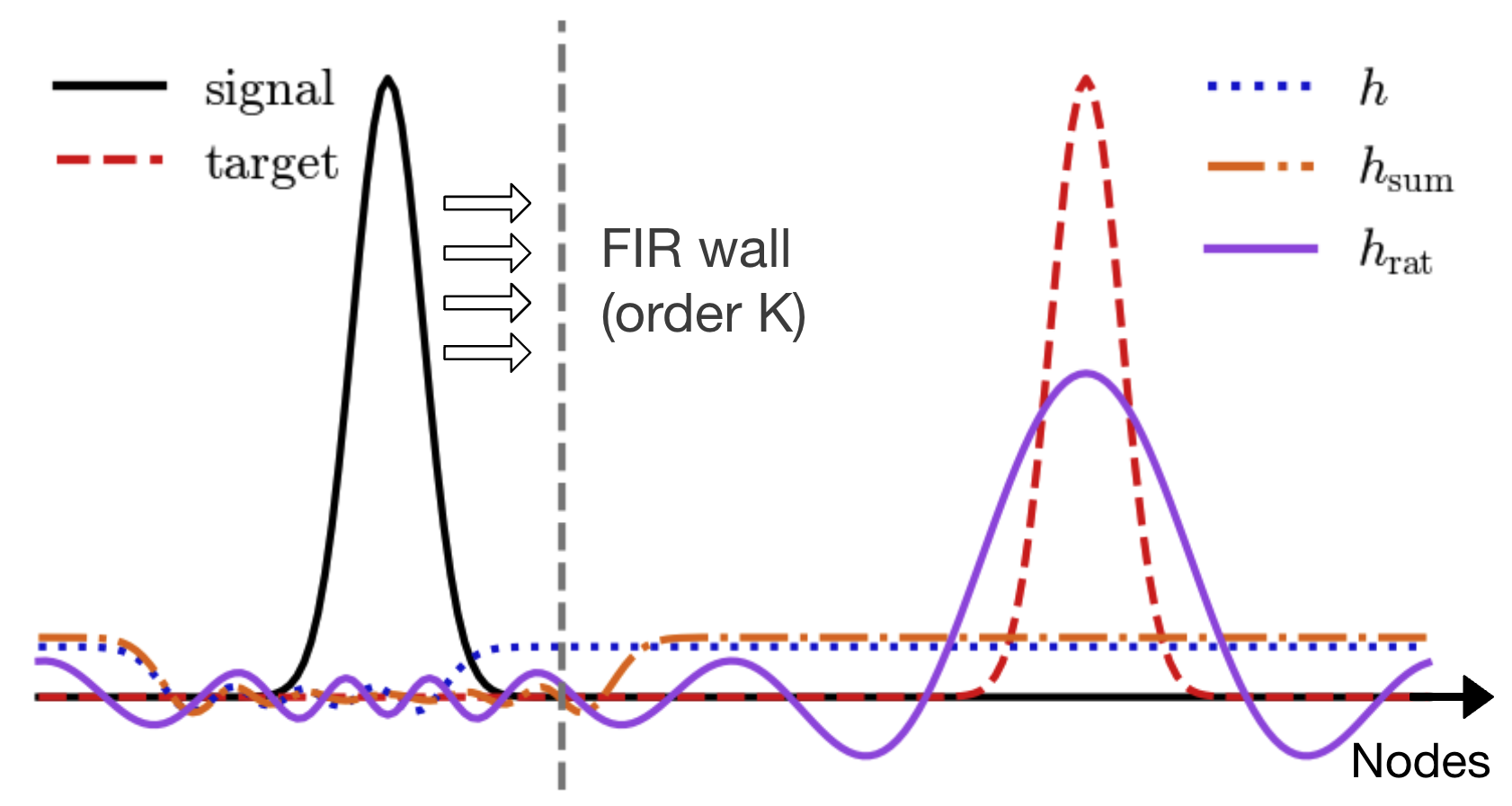}
  \caption{FIR models are obstructed by FIR wall whereas graph \filtername filter overcomes it on $\mathcal{C}_N$.}
  \label{fig:reach-schematic}
  \vspace{-8pt}
\end{wrapfigure}

The three regimes are qualitatively distinct. Reach of the shift polynomial (FIR) is identically zero beyond $K$ hops (what we refer to as the FIR wall shown in Figure~\ref{fig:reach-schematic}), the heat kernel (IIR) decays super-exponentially, and the \filtername filter only polynomially. At $K=1$, \eqref{eq:ratsens} gives $S_h(d)=|1-2d/N|$, nonzero at every distance except for $d=N/2$. Considering $\mathcal{G}(K)=\max_{h_{\ratf}}\min_{d\in\mathcal{V}}S_h(d)$, i.e., the worst-case reach at order $K$, we obtain $\mathcal{G}(K)>0.95$ for $K\geq4$ (Figure~\ref{app:reach-numerical}). 
While the $[K/K]$ characterization and bound on sensitivity are specific to $\mathcal{C}_N$, the escape from the $K$-hop class remains valid for general directed graph as the filters are not FIR. We empirically show this on general graphs through the graph transfer task~\citep{bodnar2021weisfeiler, di2023over} in Section~\ref{sec:receptive-oversmoothing}, and further experiments reported in Appendix~\ref{app:reach-generalization}. 

\section{Related Works}
\label{sec:related-works}
\paragraph{Spectral graph convolutional networks.}
FaberNet (HoloNet) \citep{koke2024holonets} performs spectral filtering on directed graphs by approximating holomorphic functions through Faber polynomials \citep{coleman1987faber, ellacott1983computation}, outperforming MagNet \citep{zhang2021magnet}, a directed GCN built on the Magnetic Laplacian \citep{furutani2020graph}. Both constructs filter directly from a graph operator without separating the mechanisms governing propagation, and in particular without the dissipative and asymmetric split of \eqref{eq:spectral-split}. This is reflected in the function class they span: being holomorphic, neither can represent the kernels of Theorem~\ref{thm:non-holomorphicity}. Dir-GNN \citep{rossi2024edge} aggregates separately over in- and out-neighborhoods with distinct weight matrices. Since $(\ma L,\ma L^T)\mapsto\bigl(\tfrac{1}{2}(\ma L+\ma L^T),\tfrac{1}{2}(\ma L-\ma L^T)\bigr)$ is bijective, this realises a Cartesian split, and Proposition~\ref{prop:cartesian-commute} applies: the components do not commute with the shift, so the filters are not LSI.

\paragraph{Advection-diffusion graph neural networks.}
For diagonalizable shifts, \eqref{eq:spectral-split} coincides with the graph diffusion and advection operators of~\citet{chan2026graph} who study them as signal-processing tools, without learning or propagation analysis. This places our construction alongside architectures of similar concept \citep{eliasof2024feature, wu2023supercharging}, which we include as baselines. Further conceptual comparisons with these networks are provided in Appendix~\ref{app:advdiff}.

\section{Experiments}
\label{sec:experiments}
Baselines include methods discussed in Section~\ref{sec:related-works}: MagNet, FaberNet, Dir-GNN, ADR-GNN and AdvDIFFormer. Methods that instead constrain the weight parametrization \citep{gravina2023adgn, heilig2025porthamiltonian} (discussed in Section~\ref{sec:introduction}) act on a complementary axis and can be combined with ours (Appendix~\ref{app:anti-symmetric weights}), so we treat them as orthogonal rather than competing.

\subsection{Receptive field reach and Oversmoothing}
\label{sec:receptive-oversmoothing}

\begin{figure}[H]
    \centering
    \captionsetup[subfigure]{justification=centering}
  \subfloat[Accuracy as a function of the number of layers \label{fig:long-range-mse-depth}]{%
       \includegraphics[width=0.98\linewidth]{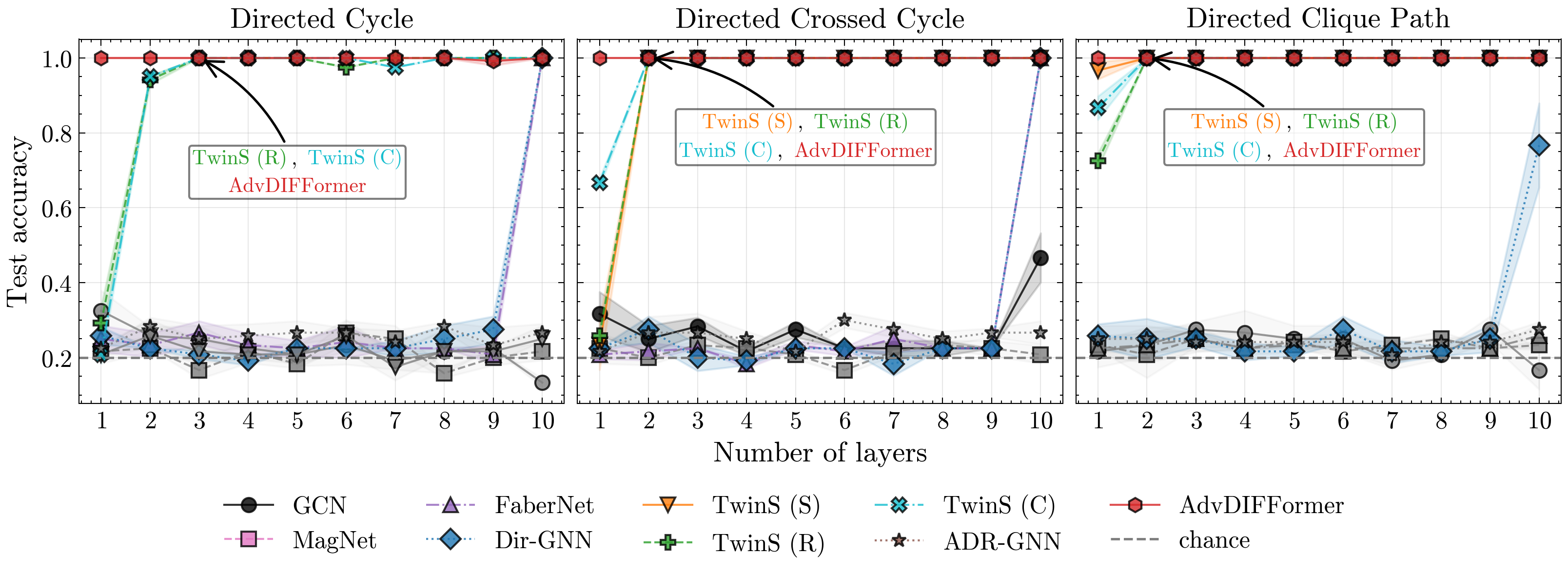}}
       \hfill \\
    \subfloat[Feature energy at each layer \label{fig:long-range-mse-depth-oversmoothing}]{%
       \includegraphics[width=0.97\linewidth]{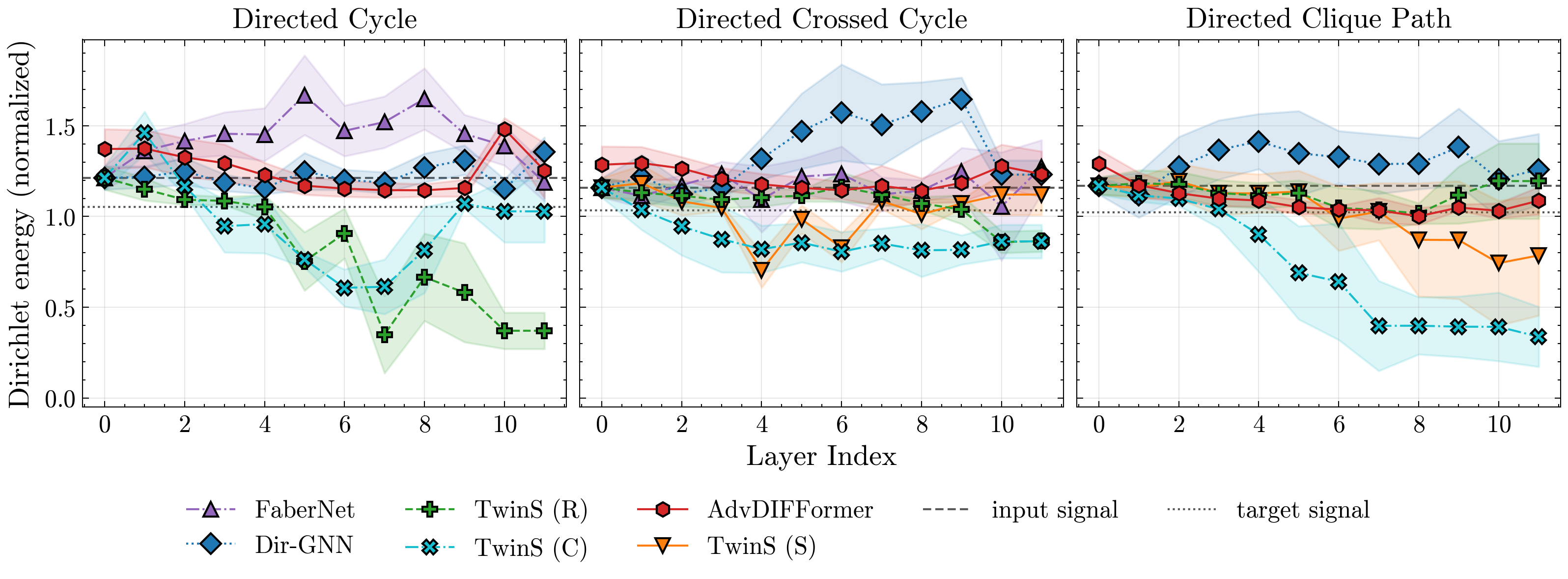}}
  \caption{a) Accuracy of different models on the graph transfer task. In the box are models with overlapping accuracy of $1$. Models with performance lower than chance level ($20\%$) at all depths are displayed in gray. b) Dirichlet energy computed after each convolution layer before activation, averaged within batch. Only models ($10$ layers) with accuracy beyond chance level are shown. Input and target signal's energy are also shown for each graph. In both case, error bars are over trials.}
\end{figure}

\paragraph{Settings.} We evaluate reach and oversmoothing jointly on the graph transfer task \citep{bodnar2021weisfeiler, di2023over}, adapted to directed graphs (Appendix~\ref{app:exp_details}). The task involves the propagation of a feature vector from a source node to a target node, which are separated by a certain number of hops in the graph. Following~\citet{choi2023gread}, we track the Dirichlet energy (\eqref{eq:dirichlet-energy}) of the node features across layers as a measure of feature distinctiveness. 

\paragraph{Results.} 
From Figure~\ref{fig:long-range-mse-depth}, \modelR and \modelC transfer the feature accurately with only a few layers on all three graphs, and \modelS does so on the crossed cycle and clique path, demonstrating \model's ability to capture long-range dependencies at minimal depth. AdvDIFFormer also solves the task at low depth, but at a higher cost in parameters and runtime (Table~\ref{tab:runtime-graph-transfer-best}). The remaining polynomial methods (MagNet, FaberNet, Dir-GNN, GCN) and ADR-GNN instead require a depth comparable to the distance to target, or does not solve the task. Figure~\ref{fig:long-range-mse-depth-oversmoothing} shows that on each graph at least one \model variant maintains its Dirichlet energy across layers, preserving feature distinctness while solving the task. These results are consistent with reach and non-attenuation properties of Section~\ref{subsec:theoretical-analysis} beyond directed cycle.

\subsection{Benchmark datasets}
\paragraph{Settings.} We evaluate performances on  directed node-classification benchmarks: Chameleon, Squirrel (filtered version as in~\citet{platonov2023critical}) representing links between Wikipedia pages related to chameleons and squirrels \citep{rozemberczki2021multi}, Texas, Wisconsin, and Cornell which are datasets showing links between websites of different universities \citep{pei2020geomgcn}. We also compare our model against MLP~\citep{rumelhart1986learning} and undirected graphs baselines --- GCN~\citep{kipf2017semisupervised} and GAT~\citep{brody2021attentive}.

\paragraph{Results.} The results of the benchmark experiments are summarized in Table~\ref{tab:benchmark-results}. Our proposed \model model matches the baseline methods across the datasets with best overall rank despite lower accuracy on Squirrel-F, demonstrating its effectiveness in capturing the underlying directed structures and improving node classification accuracy. Reasons for the increased performance could be attributed to the properties of long-range reach and mitigation of oversmoothing, as discussed in Section~\ref{sec:receptive-oversmoothing}, along with the class of function our model leverages.

\begin{table}[H]
\centering
\caption{Node classification accuracy (in percentage) across benchmark datasets. Best result per dataset in \textbf{bold}, second best \underline{underlined}. Additional information in Appendix~\ref{app:datasets}.}
\label{tab:benchmark-results}
\resizebox{0.92\columnwidth}{!}{%
\begin{tabular}{llcccccc}
\toprule
& \textbf{Model}
  & \textbf{Chameleon-F}   & \textbf{Squirrel-F}
  & \textbf{Texas}    & \textbf{Wisconsin} & \textbf{Cornell} & \textbf{Mean Rank}\\
\midrule
\multirow{3}{*}{\rotatebox[origin=c]{90}{\small }}
& MLP                &  $36.73\pm4.71$ & $36.55\pm1.82$ & $82.16\pm4.34$ & $85.16\pm4.95$ & $\underline{75.86}\pm\underline{3.94}$ & $6.70$ \\
\midrule
\multirow{3}[-4]{*}{\rotatebox[origin=c]{90}{\small Undir.}}
& GCN                &  $40.89\pm4.12$ & $39.47\pm1.47$ & $52.16\pm5.16$ & $48.92\pm3.06$ & $52.70\pm5.30$ & $10.20$\\
& GAT                & $ 39.21\pm3.08$ & $35.62\pm2.06$ & $58.38\pm6.63$ & $49.41\pm4.09$ & $54.32\pm5.05$ & $10.20$\\
\midrule
\multirow{3}[6]{*}{\rotatebox[origin=c]{90}{\small Dir.}}
& MagNet    & $42.36\pm3.53$ & $44.19\pm1.39$ & $\underline{85.31}\pm\underline{4.56}$ & $83.40\pm3.98$ & $73.33\pm4.17$ & $\underline{4.20}$\\
& Dir-GNN    & $44.13\pm2.92$ & $\underline{44.28}\pm\underline{2.17}$ & $ 70.54\pm6.52$ & $67.39\pm6.00$ & $59.82\pm6.90$ & $6.40$\\
& FaberNet    & $42.99\pm 3.33$ & $\textbf{45.13}\pm\textbf{2.06}$ & $83.15\pm5.30$ & $79.54\pm 5.18$ & $72.61\pm4.72$ & $4.60$\\
& AdvDIFFormer & $41.51\pm3.63$ & $41.41\pm1.59$ & $81.80\pm6.36$ & $76.54\pm4.84$ & $72.16\pm5.50$ & $7.20$ \\
& ADR-GNN            & $\textbf{46.07}\pm\textbf{4.07}$ & $44.09\pm1.97$ & $82.16\pm5.16$ & $84.51\pm4.54$ & $71.35\pm4.39$ & $4.90$\\
\midrule
\multirow{3}{*}{\rotatebox[origin=c]{90}{\small Ours}}
& \modelS       & $42.33\pm2.80$ & $ 39.61\pm1.55$& $84.59\pm3.43$ & $\underline{87.12}\pm\underline{4.13}$ &  $ 75.14\pm3.15$ & $4.60$\\
& \modelR       & $\underline{44.24}\pm\underline{3.23}$ & $40.15\pm1.83$ & $84.05\pm3.91$ & $\textbf{87.39}\pm\textbf{3.98}$ & $\textbf{76.76}\pm\textbf{3.01}$ & $\textbf{2.80}$  \\
& \modelC       & $41.64\pm3.59$ & $40.10\pm1.58$ & $\textbf{85.41}\pm\textbf{4.71}$ & $86.80\pm3.79$ & $75.68\pm4.01$ & $\underline{4.20}$ \\
\bottomrule
\end{tabular}%
}
\end{table}

\section{Conclusion}
We introduced \model, a model built upon the spectral conjugate split of the shift. We show that the resulting components and induced filters span a novel class of filters with kernels outside ones reachable by the holomorphic class underlying classical spectral convolutions. We illustrate \model's capability to preserve feature distinctiveness with far-reaching receptive fields through examples and the graph transfer task. \model also shows on real benchmarks competitive accuracy for node classification.

\paragraph{Outlooks and limitations.}
Although the transpose is not suitable for a shift-invariant split, is inexpensive, well conditioned, and its Cartesian components have spectra distinct from $\ma L$, which may add expressivity. Performing our splits on both $\ma L$ and $\ma L^T$ could thus improve accuracy at the cost of interpretability. Likewise, the antisymmetric weight parameterizations act on the channel-mixing matrices and are tangent to our operator-level construction. They can nevertheless be combined with our approach. Computationally, the spectral conjugate requires a Schur decomposition, a one-time cost per graph that nonetheless restricts the present implementation to moderate-size graphs. Krylov-Schur~\citep{stewart2002krylov} approximation of the leading modes is a route to scale.


\vfill

\pagebreak



\subsection*{Ethics statement}
No conflict of interest to report nor ethical issues are foreseen in this work.

\subsection*{Acknowledgments}
This work was supported by Swiss National Science Foundation, Sinergia project “Precision mapping of electrical brain network dynamics with application to
epilepsy”, under Grant 209470.

\subsection*{Reproducibility statement}

Here are listed the information to ensure reproducibility of our work:  
\begin{itemize}[leftmargin=5mm, itemsep=0em]
   \item Mathematical concepts are either defined or referenced. Proofs of all Theorems are provided in Appendix~\ref{app:proofs}.
   \item Accompanying information for statements and concepts in the main text are provided in Appendix~\ref{app:accompanying_infos}.
   \item We exactly detail our newly introduced (\model) framework in the main body of our paper Section~\ref{sec:architecture}. Variants of the framework are discussed in Section~\ref{sec:architecture}
   \item The experimental setups for our experiments are described in Appendix~\ref{app:exp_details}.
   \item The datasets used in our experiments are described in Appendix~\ref{app:datasets}.
   \item Hyperparameter settings of the methods used in the experiments are detailed in the Appendix~\ref{app:hyperparam-search-graph-transfer},~\ref{app:hyperparam-search-benchmarking}.
   \item Our code will be made publicly available conditionally on acceptance of the paper.
\end{itemize}


\bibliography{references}
\bibliographystyle{paperstyle}

\appendix
\vfill
\pagebreak
\section{Notations}
\label{app:notations}
\begin{table}[!h]
\centering
\caption{Summary of notations used in the main text.}
\small
\begin{tabular}{ll}
\toprule
\textbf{Notation} & \textbf{Description} \\
\midrule
\multicolumn{2}{l}{\textit{Graph and signals}} \\
$G = (\mathcal{V}, \mathcal{E}, \ma W)$ & Directed weighted graph with nodes $\mathcal{V}$, edges $\mathcal{E}$, weights $\ma W$ \\
$N = |\mathcal{V}|$ & Number of nodes \\
$C$ & Number of feature channels \\
$\bar{\mathcal{E}}$ & Symmetrized edge set, $\mathcal{E} \cup \{(m,n) : (n,m) \in \mathcal{E}\}$ \\
$\ma W \in \mathbb{R}^{N \times N}_{\geq 0}$ & Non-negative weight (adjacency) matrix \\
$\ma D = \operatorname{diag}(\ma W \mathbf{1})$ & Diagonal in-degree matrix \\
$\vc x \in \mathbb{R}^N$ & Graph signal (single channel) \\
$\ma X^{(l)} \in \mathbb{R}^{N \times C}$ & Node feature matrix at layer $l$ \\
$\hat{\vc x}[k]$ & $k$-th graph Fourier coefficient of $\vc x$ \\
$\mathbf{1}$ & All-ones vector (constant signal) \\
\midrule
\multicolumn{2}{l}{\textit{Operators and spectral quantities}} \\
$\ma L = \ma D - \ma W$ & Directed graph Laplacian \\
$\ma L = \ma U \ma\Lambda \ma U^{-1}, \ma L = \ma P \ma J \ma P^{-1}$ & Eigendecomposition and Jordan decomposition of $\ma L$ \\
$(\cdot)^{D}$ & Drazin inverse \\
$\ma P,\ \ma N$ & Generalized eigenvectors and nilpotent part (Jordan form) \\
$\ma U$ & Eigenvector matrix (non-unitary for directed $\ma L$) \\
$\ma\Lambda$ & Diagonal matrix of complex eigenvalues $\lambda_k$ \\
$\lR[n],\ \lI[n]$ & Real and imaginary parts of the $n$-th eigenvalue \\
$\ma L^{\circ}$ & Dissipative operator, $\tfrac{1}{2}(\ma L + \ma L^{\sharp})$ \\
$\ma L^{\uparrow}$ & Asymmetric operator, $\tfrac{1}{2}(\ma L - \ma L^{\sharp})$ \\
$\ma L^{\uparrowcirc}$ & Quotient operator, $\ma L^{\uparrow}(\ma L^{\circ})^{D}$ \\
$\ma L^{\sharp}$ & Spectral conjugate, $\ma P \overline{\ma\Lambda} \ma P^{-1}$ \\
$\sigma(\ma L)$ & Spectrum of $\ma L$ \\
$\sigma_R(\ma L)$ & Real axis spectrum of $\ma L$ \\
$\sigma_I(\ma L)$ & Imaginary axis spectrum of $\ma L$ \\
$\mathcal{S}(\cdot)$ & Smoothness measure\\
\midrule
\multicolumn{2}{l}{\textit{Kernels, filters and reparameterization}} \\
$\tau : \mathbb{C} \to \mathbb{C}$ & Conjugate kernel \\
$h : \mathcal{D} \subset \mathbb{C} \to \mathbb{C}$ & Spectral kernel \\
$h_{\sumf},\ h_{\ratf}$ & Sum and \filtername kernels \\
$K,\ K_1,\ K_2$ & Polynomial degrees \\
$(\vc a[k],\ \vc b[k])$, $(\vct\alpha[k],\ \vct\beta[k])$ & Sum filter coefficients (canonical and reparameterized) \\
$\vc c[k], \vct\eta[k]$ & Ratio filter coefficients (canonical and reparameterized) \\
$\vct\theta[k]$ & Standard polynomial filter coefficients \\
$T_k(\cdot)$ & Chebyshev polynomial of the first kind, degree $k$ \\
$\widetilde{\ma L}^{\circ},\ \widetilde{\ma L}^{\uparrow}$ & Spectrally rescaled $\Lc$ and $\Lu$ \\
$\ma Z = -j\,\ma L^{\uparrowcirc},\ \widetilde{\ma Z}$ & Real-spectrum quotient operator and its rescaling \\
$c_R,\ c_Q$ & Spectral centering constants \\
$R_R,\ R_I,\ R_Q$ & Spectral scaling radii \\
\midrule
\multicolumn{2}{l}{\textit{Architecture and analysis}} \\
$\ma\Theta_k^{(l,\circ)},\ \ma\Theta_k^{(l,\uparrow)}$ & Learnable weight matrices at layer $l$ \\
$\ma B^{(l)}$ & Learnable bias at layer $l$ \\
$\nu,\rho$ & Learnable parameter shared across layers \\
$\psi(\cdot)$ & Pointwise non-linear activation \\
$\mu_k,\ z_k$ & Argument-ordering and shift eigenvalues on $\mathcal{C}_N$ \\
$\mathcal{C}_N$ & Directed cycle graph on $N$ nodes \\
\midrule
\multicolumn{2}{l}{\textit{Complex analysis}} \\
$\Gamma,\ \Omega$ & Integration contour and enclosed region \\
$\bar{\partial} = \partial/\partial\bar{z}$ & Wirtinger derivative (antiholomorphic)\\
$\partial = \partial/\partial z$ & Wirtinger derivative (holomorphic)\\
$\overline{\mathcal{D}}$ & Closure of domain of definition $\mathcal{D}$ \\
$j$ & Imaginary unit, $j^2 = -1$ \\
\bottomrule
\end{tabular}
\end{table}

\pagebreak
\section{Proofs of Theorems and Propositions}
\label{app:proofs}

\subsection{Proof of Proposition~\ref{prop:cartesian-commute} (Cartesian split breaks shift invariance)}
\label{proof:cartesian-commute}

\begin{lemma}
Denoting the commutator $\bigl[\ma A, \ma B\bigr]=\ma A\ma B - \ma B\ma A$, we have that for any $\ma L\in\mathbb{R}^{N\times N}$,
\begin{equation}
    \bigl[\ma L,\;\tfrac{1}{2}(\ma L\pm\ma L^{T})\bigr]
    \;=\;\pm\tfrac{1}{2}\bigl[\ma L,\;\ma L^{T}\bigr].
    \label{eq:commutator-identity}
\end{equation}
\end{lemma}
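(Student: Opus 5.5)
The identity follows from bilinearity of the commutator together with the fact that every matrix commutes with itself. I will first note that for any square matrices $\ma A,\ma B,\ma C$ and scalars $s,t$ the map $\ma B\mapsto[\ma A,\ma B]$ is linear, so $[\ma A, s\ma B+t\ma C]=s[\ma A,\ma B]+t[\ma A,\ma C]$. This is immediate from expanding $\ma A(s\ma B+t\ma C)-(s\ma B+t\ma C)\ma A$ and regrouping terms.

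Next I apply linearity with $\ma A=\ma L$, $\ma B=\ma L$, $\ma C=\ma L^{T}$, $s=\tfrac12$ and $t=\pm\tfrac12$. This gives
\begin{equation*}
\bigl[\ma L,\tfrac12(\ma L\pm\ma L^{T})\bigr]=\tfrac12[\ma L,\ma L]\pm\tfrac12[\ma L,\ma L^{T}].
\end{equation*}
Since $[\ma L,\ma L]=\ma L\ma L-\ma L\ma L=\mathbf{0}$, only the second term survives, and this is exactly \eqref{eq:commutator-identity}. Both sign choices are handled simultaneously by carrying $\pm$ through.

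There is no genuine obstacle; the only care needed is to keep the sign consistent and to note that realness of $\ma L$ is not actually required. For Proposition~\ref{prop:cartesian-commute}, the lemma is then used as follows. Each Cartesian component commutes with $\ma L$ if and only if $[\ma L,\ma L^{T}]=\mathbf{0}$, and for real $\ma L$ this is precisely normality, since $\ma L^{T}=\ma L^{*}$.
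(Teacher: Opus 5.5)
Your proof is correct and is essentially the paper's argument: the paper expands $\tfrac12\bigl(\ma L(\ma L\pm\ma L^{T})-(\ma L\pm\ma L^{T})\ma L\bigr)$ directly, which is your linearity step with the $\ma L\ma L$ terms cancelling. Your side remark is also accurate: realness of $\ma L$ is not needed for the identity itself, only for identifying $[\ma L,\ma L^{T}]=\mathbf{0}$ with normality.
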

\begin{proof}
For any $\ma L\in\mathbb{R}^{N\times N}$, 
\begin{equation}
  \bigl[\ma L,\;\tfrac{1}{2}(\ma L\pm\ma L^{T})\bigr] = \tfrac{1}{2}\left(\ma L(\ma L\pm \ma L^{T}) - (\ma L\pm \ma L^{T})\ma L\right) = \pm\tfrac{1}{2}(\ma L\ma L^{T} - \ma L^{T}\ma L)= \pm\tfrac{1}{2}\bigl[\ma L,\;\ma L^{T}\bigr].
\end{equation}    
\end{proof}
If $\ma L$ is normal, then $\ma L\ma L^{T} = \ma L^{T}\ma L$ and hence $\bigl[\ma L,\;\tfrac{1}{2}(\ma L\pm\ma L^{T})\bigr]=0$, so the Cartesian components commute with $\ma L$. Conversely, if the Cartesian components commute with $\ma L$, then $[\ma L,\;\ma L^T]=0$ and $\ma L$ is normal. Hence the Cartesian components commute with $\ma L$ if and only if $\ma L$ is normal. \qed

\subsection{Proof of Theorem~\ref{thm:conjugate-properties} (Conjugate properties)}
\paragraph{Proof of main theorem.} We refer to \citet{nevanlinna2018non}[Proposition 5.1] for the commutativity of $\ma L$ and $\ma L^{\sharp}$. If $\ma L$ is normal, then $\ma L^{\sharp} = \ma U \overline{\ma\Lambda} \ma U^{H} = \ma L^{T}$. Conversely, if $\ma L^{\sharp} = \ma L^{T}$, then as $\ma L^{\sharp}$ commutes with $\ma L$, $\ma L$ is normal. 

What remains is the proof of the realness of $\ma L^{\sharp}$ when $\ma L$ is real.
\begin{lemma}\cite[Eq.~1.9 -- Block action]{nevanlinna2018non}
\label{lem:block-action}
Let $h\in\mathcal{C}^{N-1}(\overline{\mathcal{D}})$ and let $\ma J_n=\lambda_n\ma I+\ma N_n$ be a Jordan block of size $m$. Then
\begin{equation}
    h(\ma J_n)=\sum_{k=0}^{m-1}\frac{(\partial^{k}h)(\lambda_n)}{k!}\,\ma N_n^{k},
    \qquad \partial=\frac{\partial}{\partial z},
    \label{eq:block-wirtinger}
\end{equation}
where $\partial$ denotes the Wirtinger derivative instead of the complex derivative. This is a generalization of~\eqref{eq:jordan-filter} to non-holomorphic functions.
\end{lemma}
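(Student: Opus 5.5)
The plan is to compute $h(\ma J_n)$ directly from the Cauchy--Pompeiu functional calculus of~\eqref{eq:contour-non-holomorphic}, lifted to matrices as in~\citet{nevanlinna2018non}:
\begin{equation*}
h(\ma A)=\frac{1}{2\pi i}\oint_{\Gamma} h(z)(z\ma I-\ma A)^{-1}\,dz+\frac{1}{2\pi i}\iint_{\Omega}\bar\partial h(z)\,(z\ma I-\ma A)^{-1}\,dz\wedge d\bar z .
\end{equation*}
I would first substitute $\ma A=\ma J_n=\lambda_n\ma I+\ma N_n$. Since $\ma N_n^m=0$, the resolvent is the finite Neumann series
\begin{equation*}
(z\ma I-\ma J_n)^{-1}=\sum_{k=0}^{m-1}\frac{\ma N_n^{k}}{(z-\lambda_n)^{k+1}} .
\end{equation*}
Linearity then reduces the claim to a scalar identity for each $0\le k\le m-1$:
\begin{equation*}
I_k(\lambda):=\frac{1}{2\pi i}\oint_{\Gamma}\frac{h(z)}{(z-\lambda)^{k+1}}\,dz+\frac{1}{2\pi i}\iint_{\Omega}\frac{\bar\partial h(z)}{(z-\lambda)^{k+1}}\,dz\wedge d\bar z=\frac{(\partial^k h)(\lambda)}{k!}.
\end{equation*}
For $k=0$ this is exactly~\eqref{eq:contour-non-holomorphic}.

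For $k\ge1$ the area integrand is not absolutely integrable near $\lambda$. I would therefore read the area term as the limit over $\Omega\setminus D_\varepsilon(\lambda)$, where $D_\varepsilon(\lambda)$ is the disc of radius $\varepsilon$ about $\lambda$. This is the regularization implicit in Nevanlinna's definition, and it agrees with the ordinary integral when $k=0$.

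On $\Omega\setminus D_\varepsilon$ the form $\omega=h(z)(z-\lambda)^{-k-1}dz$ satisfies
\begin{equation*}
d\omega=\bar\partial h\,(z-\lambda)^{-k-1}\,d\bar z\wedge dz,
\end{equation*}
because $(z-\lambda)^{-k-1}$ is holomorphic there. Stokes' theorem therefore cancels the $\Gamma$ contribution against the area term, up to orientation and sign bookkeeping. What remains is
\begin{equation*}
I_k(\lambda)=\lim_{\varepsilon\to0}\frac{1}{2\pi i}\oint_{|z-\lambda|=\varepsilon}\frac{h(z)}{(z-\lambda)^{k+1}}\,dz .
\end{equation*}

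Next I would evaluate this small-circle limit by a real Taylor expansion of $h$ at $\lambda$ in the Wirtinger variables:
\begin{equation*}
h(z)=\sum_{a+b\le k}\frac{\partial^a\bar\partial^b h(\lambda)}{a!\,b!}(z-\lambda)^a\overline{(z-\lambda)}^{\,b}+o(|z-\lambda|^k).
\end{equation*}
This is valid since $h\in\mathcal C^{N-1}$ and $k\le m-1\le N-1$. On the circle, $\overline{(z-\lambda)}=\varepsilon^2/(z-\lambda)$, so each monomial becomes $\varepsilon^{2b}(z-\lambda)^{a-b}$. Its integral against $(z-\lambda)^{-k-1}$ is nonzero only when $a-b=k$, and then it equals $2\pi i\,\varepsilon^{2b}$.

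With $a+b\le k$, the condition $a-b=k$ forces $b=0$ and $a=k$. The only surviving term is therefore $\partial^k h(\lambda)/k!$. The remainder contributes $o(\varepsilon^k)\,\varepsilon^{-k-1}\cdot2\pi\varepsilon=o(1)$, so it vanishes in the limit. Summing over $k$ gives~\eqref{eq:block-wirtinger}. Holomorphic $h$ has $\bar\partial h=0$ and $\partial=d/dz$, so~\eqref{eq:jordan-filter} is recovered as a special case.

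I expect the main obstacle to be the justification of the singular area integral for $k\ge1$, together with the sign and orientation conventions in Stokes' theorem for $dz\wedge d\bar z$. I would handle both by fixing the principal-value interpretation explicitly at the outset and checking it against the $k=0$ case. An alternative I would keep in reserve is to avoid singular kernels altogether: first prove the identity for polynomials in $z,\bar z$, where the matrix function is unambiguous and the formula can be checked by direct computation on $\ma J_n$, and then pass to general $h$ by density in $\mathcal C^{N-1}$.
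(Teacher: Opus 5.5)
The paper gives no proof of this lemma; it imports the block formula from Nevanlinna (Eq.~1.9), where it serves as the working description of the non-holomorphic calculus. Your derivation is therefore a genuinely different, self-contained route, and it is correct.

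The individual steps hold up:
\begin{itemize}
\item The finite Neumann series for $(z\ma I-\ma J_n)^{-1}$ is right.
\item In the Stokes step, $d\omega=\bar\partial h\,(z-\lambda)^{-k-1}\,d\bar z\wedge dz$ and $\partial(\Omega\setminus D_\varepsilon)=\Gamma-\partial D_\varepsilon$. With these, the signs combine to leave exactly the positively oriented small-circle integral. As a check, the case $k=0$ reproduces the paper's Cauchy--Pompeiu sign convention.
\item The Wirtinger--Taylor expansion is legitimate because $k\le m-1\le N-1$. On the circle, $\bar w=\varepsilon^2/w$ lowers the pole order, so only the $(a,b)=(k,0)$ term survives.
\end{itemize}

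Compared with the citation, your route buys two things. First, it gives a transparent reason why only $\partial^k$, and never $\bar\partial$, enters the superdiagonals. Second, it treats honestly the fact that for $m\ge2$ the area term in the lifted Cauchy--Pompeiu formula is a singular integral with kernel of size $|z-\lambda|^{-k-1}$. The main text passes over this when it speaks of ``evaluating $\tau$ at $\ma L$ through'' that formula. What the citation buys in return is the rest of Nevanlinna's framework, for example the homomorphism property used later, which your argument does not supply.

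Two cautions.

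First, the circular excision is not a harmless technicality; it is part of the definition. Suppose you excise $\varepsilon E$ for a non-circular $E$, such as an ellipse. The same Stokes argument then gives $I_1=\partial h(\lambda)+c_E\,\bar\partial h(\lambda)$, where $c_E=\tfrac{1}{2\pi i}\oint_{\partial E}\bar w\,w^{-2}\,dw$. This constant is scale-invariant and nonzero for an ellipse with unequal axes. So the lemma holds precisely for the disc principal value. You should state that regularization as your definition of $h(\ma J_n)$, rather than assert without checking that it is the one implicit in Nevanlinna.

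Second, your fallback via polynomials in $(z,\bar z)$ is weaker than it looks.
\begin{itemize}
\item $p(\ma A)$ is unambiguous only once you fix what $\bar z$ is sent to. The natural choice, $\ma A^{\sharp}$, presupposes the block action: the paper itself deduces $\tau(\ma J_n)=\overline{\lambda_n}\ma I$ from this very lemma.
\item The density step needs $\mathcal C^{N-1}$-continuity of $h\mapsto h(\ma A)$. You would have to establish that from whatever definition you adopt. With the principal-value definition, you would be back to computing the same singular integrals.
\end{itemize}
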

Only $\partial$ appears in \eqref{eq:block-wirtinger} and given that the conjugate kernel $\tau(z)=\bar z$ satisfies $\partial\tau\equiv0$, so every term with $k\geq1$ vanishes and
\begin{equation}
    \tau(\ma J_n)=\overline{\lambda_n}\,\ma I,
    \label{eq:tau-block}
\end{equation}
as defined in Definition~\ref{def:spectral-conjugate}.

\begin{lemma}[Conjugation]
\label{lem:conjugation}
Let $h\in\mathcal{C}^{N-1}(\overline{\mathcal{D}})$. Setting $h^{*}(z):=\overline{h(\bar z)}$, we have $h^{*}\in\mathcal{C}^{N-1}(\overline{\mathcal{D}})$ and
\begin{equation}
    \overline{h(\ma A)}=h^{*}(\overline{\ma A})
    \qquad\text{for every }\ma A\in\mathbb{C}^{N\times N}.
    \label{eq:conjugation}
\end{equation}
\end{lemma}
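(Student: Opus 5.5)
We will prove Lemma~\ref{lem:conjugation} by working through the Jordan decomposition and the block action of Lemma~\ref{lem:block-action}.

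\textbf{Regularity of $h^{*}$.} Write $h=u+jv$ in the real coordinates $(x,y)$. Then $h^{*}(x,y)=u(x,-y)-jv(x,-y)$. This is a composition of $h$ with the reflection $(x,y)\mapsto(x,-y)$, followed by complex conjugation. Both maps are smooth, so $h^{*}$ inherits $\mathcal{C}^{N-1}$ regularity on the reflected domain. We take $\mathcal{D}$ symmetric about the real axis, which is the relevant case since the spectrum of a real matrix is conjugation-closed. The key identity is $\partial(h^{*})(z)=\overline{(\partial h)(\bar z)}$, and by induction $\partial^{k}h^{*}(z)=\overline{(\partial^{k}h)(\bar z)}$. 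To check the base case, use $\partial=\tfrac12(\partial_x-j\partial_y)$. Differentiating $\overline{h(x,-y)}$ gives $\tfrac12\bigl(\overline{h_x}+j\,\overline{h_y}\bigr)$ evaluated at $\bar z$. This equals $\overline{\tfrac12(h_x-jh_y)}(\bar z)$, as required. The sign bookkeeping here is the only delicate point, because conjugation maps $\partial$ to $\bar\partial$ while the reflection $z\mapsto\bar z$ maps it back to $\partial$.

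\textbf{Matrix identity.} Let $\ma A=\ma P\ma J\ma P^{-1}$ with Jordan blocks $\ma J_n=\lambda_n\ma I+\ma N_n$. Conjugating entrywise gives $\overline{\ma A}=\overline{\ma P}\,\overline{\ma J}\,\overline{\ma P}^{-1}$. Here $\overline{\ma J_n}=\bar\lambda_n\ma I+\ma N_n$, because the nilpotent part is real, with ones on the superdiagonal. This is therefore a Jordan decomposition of $\overline{\ma A}$ with eigenvalues $\bar\lambda_n$. We use that $h(\ma A)=\ma P\,h(\ma J)\,\ma P^{-1}$ is independent of the chosen Jordan basis, as in the definition of~\citet{nevanlinna2018non}. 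Lemma~\ref{lem:block-action} then gives
\[
h^{*}(\overline{\ma J_n})=\sum_{k=0}^{m-1}\frac{\partial^{k}h^{*}(\bar\lambda_n)}{k!}\ma N_n^{k}=\sum_{k=0}^{m-1}\frac{\overline{\partial^{k}h(\lambda_n)}}{k!}\ma N_n^{k}=\overline{h(\ma J_n)}.
\]
Assembling the blocks and conjugating by $\overline{\ma P}$ yields $h^{*}(\overline{\ma A})=\overline{\ma P}\,\overline{h(\ma J)}\,\overline{\ma P}^{-1}=\overline{h(\ma A)}$.

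\textbf{Application to realness of $\ma L^{\sharp}$.} For $\tau(z)=\bar z$ we have $\tau^{*}(z)=\overline{\overline{\bar z}}=\bar z=\tau(z)$. If $\ma L$ is real, the lemma gives $\overline{\ma L^{\sharp}}=\tau^{*}(\overline{\ma L})=\tau(\ma L)=\ma L^{\sharp}$, so $\ma L^{\sharp}$ is real.
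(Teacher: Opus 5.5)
Your proof is correct and follows the paper's argument. Both use the Wirtinger identity $\partial^{k}h^{*}(\bar\lambda)=\overline{\partial^{k}h(\lambda)}$ together with the block action of Lemma~\ref{lem:block-action}, applied to the conjugated Jordan form $\overline{\ma P}\,\overline{\ma J}\,\overline{\ma P}^{-1}$ with its real nilpotent parts. Your explicit base-case check, basis-independence remark and note that $h^{*}$ lives on the reflected domain (hence conjugation-symmetric $\mathcal{D}$) fill in details the paper glosses over without changing the route.
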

\begin{proof}
Conjugation exchanges the Wirtinger derivatives, $\partial\,\overline{F}=\overline{\bar\partial F}$, so $(\partial h^{*})(z)=\overline{(\partial h)(\bar z)}$ and, inductively, $(\partial^{k}h^{*})(\bar\lambda)=\overline{(\partial^{k}h)(\lambda)}$. If $\ma A=\ma P\ma J\ma P^{-1}$ then $\overline{\ma A}=\overline{\ma P}\,\overline{\ma J}\,\overline{\ma P}^{-1}$, and $\overline{\ma J}$ is a Jordan form whose blocks are $\bar\lambda_n\ma I+\ma N_n$, the nilpotent parts being real. Applying \eqref{eq:block-wirtinger} blockwise,
\begin{equation}
    \overline{h(\ma J_n)}
    =\sum_{k=0}^{m-1}\frac{\overline{(\partial^{k}h)(\lambda_n)}}{k!}\,\ma N_n^{k}
    =\sum_{k=0}^{m-1}\frac{(\partial^{k}h^{*})(\overline{\lambda_n})}{k!}\,\ma N_n^{k}
    =h^{*}(\overline{\ma J_n}),
\end{equation}
and conjugating $\ma P\,h(\ma J)\,\ma P^{-1}$ gives \eqref{eq:conjugation}.
\end{proof}
Applying Lemma~\ref{lem:conjugation} to the conjugate kernel: $\tau^{*}(z)=\overline{\tau(\bar z)}=\overline{\overline{\bar z}}=\bar z=\tau(z)$. Since $\ma L$ is real, $\overline{\ma L}=\ma L$, and \eqref{eq:conjugation} gives
\begin{equation}
    \overline{\ma L^{\sharp}}=\overline{\tau(\ma L)}=\tau^{*}(\overline{\ma L})=\tau(\ma L)=\ma L^{\sharp}.
\end{equation}
A matrix equal to its own conjugate is real.
\qed

\paragraph{Proof of corollary.} The decomposition $\ma L=\tfrac{1}{2}\bigl(\ma L+\ma L^{\sharp}\bigr) +\tfrac{1}{2}\bigl(\ma L-\ma L^{\sharp}\bigr)$ is exact. Both components commute with $\ma L$ and with each other since $\ma L^{\sharp}$ commutes with $\ma L$. As $\ma L^{\sharp}$ is real, both components of the decomposition are also real. Also by definition, $\sigma(\ma L^{\circ})=\sigma_R(\ma L)$, $\sigma(\ma L^{\uparrow})=\sigma_I(\ma L)$. Following, when $\ma L$ is normal, this decomposition reduces to the Cartesian split. \qed

\subsection{Proof of Theorem~\ref{thm:non-holomorphicity} (Vertex-domain sum and \filtername filters)}
\paragraph{Non-holomorphicity.} Decomposing $\Re(z) = (z+\bar{z})/2$ and $\Im(z) = (z-\bar{z})/(2j)$, the Wirtinger derivative $\bar{\partial}h_\sumf, \bar{\partial}h_\ratf$ simplifies to:
\begin{equation}
    \bar{\partial} h_\sumf(z) = \sum_{k=1}^{K_1} \frac{k\, \vc a[k]}{2^k} (z+\bar{z})^{k-1} + \sum_{k=1}^{K_2}  \frac{-k\, \vc b[k]}{(2j)^k} (z-\bar{z})^{k-1} \neq 0,
\end{equation}
\begin{equation}
  \bar{\partial} h_\ratf(z) = \sum_{k=1}^{K} \vc c[k] \frac{-2k z (z-\bar{z})^{k-1}}{j^k(z + \bar{z})^{k+1}} \neq 0,
\end{equation}
for non-zero $\vc a, \vc b, \vc c$. This shows that both kernels are not holomorphic, as their Wirtinger derivative is non-zero.

\paragraph{Vertex-domain expressions.}
Both filters follow from the functional calculus of \citet{nevanlinna2018non} applied to kernels built from the identity and the conjugation $\tau(z)=\bar z$.
\begin{lemma}[{\cite[Cor.~3.5]{nevanlinna2018non}}]
\label{lem:homomorphism}
Let $\ma L\in\C^{N\times N}$ and $\mathcal{D}\subset\C$ be open with $\mathcal{D}\supset\sigma(\ma L)$. The map $h\mapsto h(\ma L)$ from $\mathcal{C}^{N-1}(\overline{\mathcal{D}})$ to $\C^{N\times N}$ is a homomorphism.
\end{lemma}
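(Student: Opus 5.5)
The plan is to reduce the statement to a computation on a single Jordan block, using the block-action formula of Lemma~\ref{lem:block-action} as the definition of $h(\ma L)$. Write $\ma L=\ma P\ma J\ma P^{-1}$ with $\ma J=\operatorname{diag}(\ma J_1,\dots,\ma J_r)$ and $\ma J_n=\lambda_n\ma I+\ma N_n$. For $h\in\mathcal{C}^{N-1}(\overline{\mathcal{D}})$ we set
\begin{equation*}
h(\ma L)=\ma P\,\operatorname{diag}\bigl(h(\ma J_1),\dots,h(\ma J_r)\bigr)\,\ma P^{-1}.
\end{equation*}
Here each $h(\ma J_n)$ is given by \eqref{eq:block-wirtinger}. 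Every block size satisfies $m\le N$, so only Wirtinger derivatives up to order $N-1$ occur, and the regularity class $\mathcal{C}^{N-1}(\overline{\mathcal{D}})$ is exactly what makes the map well defined. I take independence from the choice of Jordan basis as part of the cited framework of \citet{nevanlinna2018non}.

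\textbf{Additive structure.} I check, in order, linearity, the unit, and multiplicativity. Linearity holds because $\partial^k$ is linear, so $(\alpha f+\beta g)(\ma J_n)=\alpha f(\ma J_n)+\beta g(\ma J_n)$ blockwise. Conjugation by $\ma P$ and block-diagonal assembly are linear as well. For the unit, the constant function $1$ has $\partial^k 1=0$ for $k\ge1$, so it maps to $\ma I$.

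\textbf{Multiplicativity.} This is the main step. Conjugation by $\ma P$ and block-diagonal assembly both respect products, so it suffices to show $(fg)(\ma J_n)=f(\ma J_n)\,g(\ma J_n)$ for a single block of size $m$. I would use the Leibniz rule for the Wirtinger derivative: since $\partial=\tfrac12(\partial_x-j\partial_y)$ is a first-order derivation on $\mathcal{C}^1$ functions, induction gives
\begin{equation*}
\partial^k(fg)=\sum_{i=0}^{k}\binom{k}{i}\,\partial^i f\;\partial^{k-i}g
\end{equation*}
for $k\le N-1$. Now expand the product $f(\ma J_n)g(\ma J_n)$. Because $\ma N_n$ commutes with itself and $\ma N_n^{m}=0$, the product is
\begin{equation*}
f(\ma J_n)\,g(\ma J_n)=\sum_{k=0}^{m-1}\Bigl(\sum_{i+l=k}\frac{(\partial^i f)(\lambda_n)}{i!}\,\frac{(\partial^l g)(\lambda_n)}{l!}\Bigr)\ma N_n^{k}.
\end{equation*}
The inner coefficient equals $(\partial^k(fg))(\lambda_n)/k!$ by the Leibniz rule, since $\frac{1}{i!\,l!}=\frac{1}{k!}\binom{k}{i}$. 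Hence the product is $(fg)(\ma J_n)$.

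\textbf{Expected obstacle.} The delicate point is conceptual rather than computational. One must justify that only $\partial$ appears in \eqref{eq:block-wirtinger}, and not $\bar\partial$, which is what makes the map a homomorphism at all. If mixed $\bar\partial$ terms were present, the Leibniz bookkeeping would still close, but the formula would have to be checked against the Cauchy--Pompeiu definition \eqref{eq:contour-non-holomorphic}. I would rely on Lemma~\ref{lem:block-action} as stated, so the argument reduces to the one-variable Leibniz identity above. A secondary check is commutativity of the image: $f(\ma L)g(\ma L)=g(\ma L)f(\ma L)$, which is immediate because all blocks are polynomials in the same $\ma N_n$.
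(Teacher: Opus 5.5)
Your argument is correct, and it takes a different route from the paper, which gives no proof of this lemma at all. The paper imports it, together with the block formula of Lemma~\ref{lem:block-action}, directly from \cite[Cor.~3.5]{nevanlinna2018non}.

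Your version makes the mechanism explicit. Once $h(\ma J_n)$ is the truncated Taylor series $\sum_{k<m}(\partial^k h)(\lambda_n)\,\ma N_n^k/k!$, multiplicativity is just the Cauchy product of two such series in the nilpotent $\ma N_n$, combined with the Leibniz rule for the derivation $\partial$, exactly as in the holomorphic case. The class $\mathcal{C}^{N-1}$ is what guarantees that $\partial^k$ exists for $k\le m-1\le N-1$.

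What the citation supplies, and you defer, is well-definedness: independence of the Jordan basis and consistency with the Cauchy--Pompeiu-based calculus. You can close the basis-independence part yourself. Write $\ma E_\lambda$ for the Riesz projection onto the generalized eigenspace of $\lambda$. Your definition then reads $h(\ma L)=\sum_{\lambda\in\sigma(\ma L)}\sum_{k=0}^{N-1}\frac{(\partial^k h)(\lambda)}{k!}(\ma L-\lambda\ma I)^k\ma E_\lambda$. Equivalently, $h(\ma L)=p_h(\ma L)$, where $p_h$ is the Hermite interpolant with $p_h^{(k)}(\lambda)=(\partial^k h)(\lambda)$ for $k$ below the index of $\lambda$. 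Both forms are manifestly basis-free. In the interpolation form multiplicativity is also immediate: by Leibniz, $p_{fg}$ and $p_f p_g$ carry the same Hermite data, so $p_{fg}(\ma L)=p_f(\ma L)\,p_g(\ma L)$.

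One correction to your closing remark. The homomorphism property comes only from the block formula being generated by a single derivation, not from that derivation being $\partial$ rather than $\bar\partial$. With $\bar\partial$ in its place you would still get a unital homomorphism, but one sending the identity function to the semisimple part $\ma P\ma\Lambda\ma P^{-1}$ instead of to $\ma L$. The choice of $\partial$ is what makes the calculus extend the holomorphic one and gives $\tau(\ma J_n)=\bar\lambda_n\ma I$. Your aside that mixed $\bar\partial$ terms would ``still close'' the Leibniz bookkeeping is also not true in general. It holds only when those terms assemble into powers of one derivation, such as $(\partial+\bar\partial)^k$.
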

Write $\kappa^{\circ}=\tfrac{1}{2}(\mathrm{id}+\tau)$ and $\kappa^{\uparrow}=\tfrac{1}{2}(\mathrm{id}-\tau)$, so that linearity of Lemma~\ref{lem:homomorphism} together with $\tau(\ma L)=\ma L^{\sharp}$ gives $\kappa^{\circ}(\ma L)=\Lc$ and $\kappa^{\uparrow}(\ma L)=\Lu$, and multiplicativity gives $(\kappa^{\circ})^{k}(\ma L)=(\Lc)^{k}$ and $(\kappa^{\uparrow})^{k}(\ma L)=(\Lu)^{k}$ for every $k\in\mathbb{N}$. Since $h_{\sumf}=\sum_{k=0}^{K_1}\vc a[k](\kappa^{\circ})^{k}+\sum_{k=0}^{K_2}\vc b[k](\kappa^{\uparrow})^{k}$, linearity yields \eqref{eq:graph-sum-filter} directly:
\begin{equation}
  h_{\sumf}(\ma L)=\sum_{k=0}^{K_1}\vc a[k]\,(\Lc)^{k}+\sum_{k=0}^{K_2}\vc b[k]\,(\Lu)^{k}.
\end{equation}
The \filtername kernel requires more care, since it involves a quotient and is discontinuous where $\Re(z)=0$. We assume throughout that $\lambda=0$ is the only eigenvalue of $\ma L$ on the imaginary axis, which holds for the Laplacian of any connected graph. As $\sigma(\ma L)$ is finite, fix $\varepsilon>0$ small enough that the discs $B(\lambda_n,\varepsilon)$ are pairwise disjoint and $\kappa^{\circ}$ does not vanish on any $B(\lambda_n,\varepsilon)$ with $\lambda_n\neq0$, and set $\mathcal{D}=\bigcup_n B(\lambda_n,\varepsilon)$. Being open and disconnected, $\mathcal{D}$ admits kernels prescribed independently on each set, which lets us define
\begin{equation}
    \xi(z)=
    \begin{cases}
      1/\kappa^{\circ}(z), & z\in B(\lambda_n,\varepsilon),\ \lambda_n\neq0,\\[2pt]
      0, & z\in B(0,\varepsilon).
    \end{cases}
    \label{eq:drazin-kernel}
\end{equation}
On each set $\xi$ is either constant or the reciprocal of a non-vanishing polynomial in $(z,\bar z)$, so $\xi\in\mathcal{C}^{N-1}(\overline{\mathcal{D}})$ and Lemma~\ref{lem:homomorphism} applies. By construction $\xi$ inverts the nonzero part of $\sigma(\Lc)$ and sets the rest to $0$, which is the spectral characterisation of the Drazin inverse~\citep{campbell2009generalized}, thence $\xi(\ma L)=(\Lc)^{D}$. Multiplicativity then gives
\begin{equation}
    (\kappa^{\uparrow}\xi)(\ma L)=\kappa^{\uparrow}(\ma L)\,\xi(\ma L)=\Lu(\Lc)^{D}=\ma L^{\uparrowcirc},
\end{equation}
and, applied repeatedly, $(\kappa^{\uparrow}\xi)^{k}(\ma L)=(\ma L^{\uparrowcirc})^{k}$ for all $k\geq0$. Finally $h_{\ratf}=\vc c[0]+\sum_{k=1}^{K}\vc c[k](\kappa^{\uparrow}\xi)^{k}$ on $\mathcal{D}$, where on the points with $\lambda_n\neq0$ we have $\kappa^{\uparrow}\xi=\kappa^{\uparrow}/\kappa^{\circ}$, recovering the quotient of \eqref{eq:kernel-rational}, while on $B(0,\varepsilon)$ every term with $k\geq1$ vanishes and the kernel reduces to $\vc c[0]$, matching $h_{\ratf}$ at $\Re(z)=0$. Linearity then gives
\begin{equation}
    h_{\ratf}(\ma L)=\sum_{k=0}^{K}\vc c[k]\,\bigl(\ma L^{\uparrowcirc}\bigr)^{k}.
\end{equation}
\qed

\subsection{Proof of Theorem~\ref{thm:real_matrix} (Real-valued twins filters)}
As in~\citet{chan2026graph}, leveraging the Vandermonde matrix, any real-valued filters expressible by $h_{\sumf}(\ma L)$ and $h_{\ratf}(\ma L)$ i.e. elements of respectively:
\begin{equation}
    \{h_{\sumf}(\ma L): (\vc a, \vc b)\in \C^{K_1+1}\times\C^{K_2+1}\}\cap\R^{N\times N} \quad \text{and} \quad \{h_{\ratf}(\ma L): \vc c\in \C^{K+1}\}\cap\R^{N\times N} 
\end{equation}
as defined in~\eqref{eq:graph-sum-filter}, can be expressed as elements of $\{h_\sumf(\ma L): (\vc a, \vc b)\in\R^{K_1+1}\times\R^{K_2+1}\}$ and $\{h_\ratf(\ma L): \vc c\in\R^{K+1}\}$ guaranteeing their equality --- this holds for both diagonalizable and non-diagonalizable $\ma L$ thanks to conjugate eigenvalues occurrence in the Jordan decomposition. Since we need $h_{\sumf}(\ma L), h_{\ratf}(\ma L)$ to be real, we start with $\vc a, \vc b, \vc c\in\R$. By identifying the unique coefficients for the Chebyshev expansions of $\wLc, \wLu, \widetilde{\ma Z}$, starting with maximal order $K$
\begin{align}
\vct \alpha[K] T_K(\wLc) &= \vct \alpha[K] \sum_{k=0}^{K} \vc a_K[k] (\wLc)^k = \vct \alpha[K] \sum_{k=0}^{K} \left(\frac{\vc a_K[k]}{R_R} \sum_{m=0}^k \binom{k}{m}(\Lc)^m(c_R \ma I)^{k-m}\right),\\
\vct \beta[K] T_K(\wLu) &= \vct \beta[K] \sum_{k=0}^{K} \vc b_K[k] (\wLu)^k = \vct \beta[K] \sum_{k=0}^{K} (-j)^k\frac{\vc b_K[k]}{R_I} (\Lu)^k,\\
\vct \eta[K] T_K(\widetilde{\ma Z}) &= \vct \eta[K] \sum_{k=0}^{K} \vc c_K[k] (\widetilde{\ma Z})^k = \vct \eta[K] \sum_{k=0}^{K}(-j)^k \frac{\vc c_K[k]}{R_Q}(\ma L^{\uparrowcirc})^k.
\end{align}
Given $T_K$ is of maximal order $K$, no other terms except for $\vct \alpha[K]$ contribute to the expansion in $\vc a[K]$, therefore we have exactly $\vct \alpha[K]\frac{\vc a_K[K]}{R_R} = \vc a[K]$. Similarly, considering the two remaining expressions $\vct \beta[K](-j)^K\frac{\vc b_K[K]}{R_I} = \vc b[K]$, and $\vct \eta[K](-j)^K\frac{\vc c_K[K]}{R_Q} = \vc c[K]$. Hence the following holds:
\begin{itemize}[leftmargin=10mm, itemsep=0em]
  \item $\vct \alpha[K]$ is real,
  \item $\vct \beta[K]$ is real for even $K$ and purely imaginary for odd $K$,
  \item $\vct \eta[K]$ is real for even $K$ and purely imaginary for odd $K$.
\end{itemize}
For lower order coefficients, we can apply the same argument recursively (while subtracting the previous orders for which we have already determined the coefficients) to show that $\vct \alpha[k]$ is real for all $k$ and $\vct \beta[k], \vct \eta[k]$ are real for even $k$ and purely imaginary for odd $k$. As this procedure can be performed for all $\vc a,\vc b, \vc c\in\R$,
\begin{align}
&\{h_{\sumf}(\ma L) : (\vct\alpha,\vct\beta)\in\C^{K_1+1}\times\C^{K_2+1}\}\cap\R^{N\times N} \nonumber\\
&\qquad= \{h_{\sumf}(\ma L) : \vct\alpha[k]\in\R,\ \vct\beta[2k]\in\R,\ \vct\beta[2k+1]\in j\R\},\\[4pt]
&\{h_{\ratf}(\ma L) : \vct\eta\in\C^{K+1}\}\cap\R^{N\times N} \nonumber\\
&\qquad= \{h_{\ratf}(\ma L) : \vct\eta[2k]\in\R,\ \vct\eta[2k+1]\in j\R\}.
\end{align}
This completes the proof of Theorem~\ref{thm:real_matrix}. \qed

\paragraph{Further intuition on reparameterized matrices.}
Additional intuition on the nature of the real-matrix condition is provided in the following lemma and expressions, which in one hand show how the Chebyshev polynomial parity interacts with the matrix parity of $\wLu$ and $\widetilde{\ma Z}$ to yield a real-matrix filter and in the other hand show how the spectral response is decomposed into real and imaginary parts that are even and odd in $\lI$ respectively.
\begin{lemma}[Parity of Chebyshev polynomials]
\label{lem:parity-cheb}
Let $x\in[-1,1]$, the k-th degree Chebyshev polynomials $T_k$ satisfies 
\begin{equation}
    T_k(-x)=(-1)^kT_k(x)\Rightarrow
    \begin{cases}
    \text{even function} & k\text{ even,}\\ \text{odd function} & k\text{ odd.}
    \end{cases}
\end{equation}
\end{lemma}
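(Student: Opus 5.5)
The plan is to prove the parity identity by strong induction on $k$, using only the three-term recurrence~\eqref{eq:cheb_rec}. The claimed classification into even and odd functions then follows immediately from the identity.

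\emph{Base cases.} $T_0(-x)=1=(-1)^0T_0(x)$ and $T_1(-x)=-x=(-1)^1T_1(x)$.

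\emph{Inductive step.} Assume $T_{k-1}(-x)=(-1)^{k-1}T_{k-1}(x)$ and $T_{k-2}(-x)=(-1)^{k-2}T_{k-2}(x)$ for some $k\ge2$. Substituting $-x$ into~\eqref{eq:cheb_rec} gives
\begin{equation}
T_k(-x)=2(-x)(-1)^{k-1}T_{k-1}(x)-(-1)^{k-2}T_{k-2}(x)=(-1)^k\bigl(2xT_{k-1}(x)-T_{k-2}(x)\bigr)=(-1)^kT_k(x),
\end{equation}
since $(-1)\cdot(-1)^{k-1}=(-1)^k$ and $(-1)^{k-2}=(-1)^k$. This closes the induction.

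An alternative I could cite is $T_k(\cos\theta)=\cos(k\theta)$ together with $-\cos\theta=\cos(\pi-\theta)$. This gives $T_k(-x)=\cos(k\pi-k\theta)=(-1)^k\cos(k\theta)$ directly on $[-1,1]$, and polynomial identity then extends it everywhere. The restriction $x\in[-1,1]$ in the statement is therefore inessential: the identity holds for all $x$, and also for matrix arguments such as $\wLu$ and $\widetilde{\ma Z}$, because it is an identity between polynomials.

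There is no real obstacle. The only point needing care is matching the sign exponents in the inductive step: both $(-1)^{k-1}$, via the extra $-x$, and $(-1)^{k-2}$ reduce to $(-1)^k$.
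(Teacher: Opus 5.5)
Your proof is correct, but your main argument takes a different route from the paper. The paper's proof is only the trigonometric computation $T_k(-x)=\cos(k\arccos(-x))=\cos(k(\pi-\arccos x))=(-1)^kT_k(x)$ on $[-1,1]$, which is what you list as your alternative.

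Your induction works directly from the three-term recurrence, which is how the paper actually defines $T_k$. It therefore does not rely on the identification $T_k(\cos\theta)=\cos(k\theta)$, which the paper never states or proves.

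The induction also gives the identity $T_k(-X)=(-1)^kT_k(X)$ as an identity of polynomials. So it applies verbatim to complex or matrix arguments such as $\widetilde{\mathbf L}^{\uparrow}$ and $\widetilde{\mathbf Z}$, and this is what the subsequent matrix-parity lemma really uses. The same induction shows directly that $T_k$ contains only monomials whose degree has the parity of $k$.

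The paper's route is a shorter one-line computation. However, it is confined to $[-1,1]$ and needs the polynomial-identity extension you mention before it can be used off that interval.
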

\begin{proof}
For $x\in[-1,1]$, $T_k(-x)=\cos(k\arccos(-x))=\cos(k(\pi-\arccos(x))) = (-1)^kT_k(x)$.
\end{proof}
\begin{lemma}[Matrix parity of Chebyshev polynomials]
\label{lem:parity}
Let $\ma Q$ be a matrix with purely imaginary entries, giving
\begin{equation}
  \ma Q^i \;\in\;
  \begin{cases}\R^{N\times N} & i\text{ even,}\\ j\R^{N\times N} & i\text{ odd.}\end{cases}
  \label{eq:Qpow}
\end{equation}
Parallely, from Lemma~\ref{lem:parity-cheb} the Chebyshev polynomial $T_k$ has the same parity as $k$, so it contains only monomials $x^i$ with $i\equiv k\pmod{2}$. Combining with~\eqref{eq:Qpow}:
\begin{equation}
  A_k = T_k(\ma Q) \;\in\;
  \begin{cases}
    \R^{N\times N}   & k\text{ even,}\\
    j\,\R^{N\times N}& k\text{ odd.}
  \end{cases}
  \label{eq:Ak_parity}
\end{equation}
\end{lemma}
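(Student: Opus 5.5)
The statement to prove is Lemma~\ref{lem:parity}, the matrix parity of Chebyshev polynomials: for $\ma Q$ with purely imaginary entries, $\ma Q^i$ is real for even $i$ and purely imaginary for odd $i$, and consequently $T_k(\ma Q)$ is real for even $k$ and purely imaginary for odd $k$. The plan is to handle the powers first and then the polynomial.

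\textbf{Powers of $\ma Q$.} Write $\ma Q = j\ma R$ with $\ma R\in\R^{N\times N}$, which is possible precisely because every entry of $\ma Q$ is purely imaginary. Since $j$ is a scalar, it commutes with $\ma R$, so
\begin{equation*}
\ma Q^i = j^i \ma R^i .
\end{equation*}
Here $\ma R^i$ is real, being a product of real matrices. The scalar $j^i$ equals $\pm 1$ for even $i$ and $\pm j$ for odd $i$. This gives \eqref{eq:Qpow} directly; one could equally argue by induction on $i$, using that real times imaginary is imaginary and imaginary times imaginary is real.

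\textbf{Structure of $T_k$.} From the recurrence \eqref{eq:cheb_rec}, induction on $k$ shows that $T_k$ has real (indeed integer) coefficients and contains only monomials $x^i$ with $i\equiv k \pmod 2$. The base cases $T_0=1$ and $T_1=x$ hold. For the inductive step, multiplying $T_{k-1}$ by $2x$ shifts every parity by one, so $2xT_{k-1}$ has only monomials of parity $k$, and $T_{k-2}$ already has parity $k$. This is the polynomial-identity form of Lemma~\ref{lem:parity-cheb}. Working with the recurrence avoids the subtlety that Lemma~\ref{lem:parity-cheb} is stated only for $x\in[-1,1]$; alternatively, an identity between polynomials holding on an infinite set holds everywhere.

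\textbf{Combining the two.} Write
\begin{equation*}
T_k(\ma Q)=\sum_{i\equiv k \,(2)} t_{k,i}\,\ma Q^i ,\qquad t_{k,i}\in\R .
\end{equation*}
Each term $t_{k,i}\ma Q^i$ lies in $\R^{N\times N}$ when $k$ is even and in $j\R^{N\times N}$ when $k$ is odd. Both of these sets are real vector spaces, so they are closed under real linear combinations, and \eqref{eq:Ak_parity} follows.

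The only place needing care is the claim that $T_k$ is an honest polynomial with real coefficients of the stated parity, as opposed to merely satisfying the functional identity on $[-1,1]$. The recurrence settles this, so no real obstacle is expected.
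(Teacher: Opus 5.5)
Your proof is correct and follows the paper's argument: expand $T_k$ in monomials of parity $k$ with real coefficients, and note that each term $a_i\ma Q^i$ lies in the same subspace ($\R^{N\times N}$ or $j\R^{N\times N}$), which is closed under real linear combinations. Your extra steps ($\ma Q^i=j^i\ma R^i$ for the powers, and induction on the recurrence for the coefficient parity of $T_k$ instead of the identity on $[-1,1]$) make explicit what the paper takes from the statement and from Lemma~\ref{lem:parity-cheb}.
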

\begin{proof}
Write $T_k(x)=\sum_{i\equiv k\,(2)}a_ix^i$ with $a_i\in\R$.  Then $A_k=\sum_{i\equiv k\,(2)}a_i\ma Q^i$. Each summand $a_i\ma Q^i$ lies in $\R^{N\times N}$ if $i$ even, in $j\R^{N\times N}$ if $i$ odd, and all summands share the same parity as $k$, so the whole sum is either in $\R^{N\times N}$ or $j\R^{N\times N}$.
\end{proof}

\paragraph{Frequency response as a function of Chebyshev polynomials.} 
\label{app:freq-response-cheb}
The spectral response at eigenvalue $\lambda[n]=\lR[n]+j\lI[n]$ of the sum filter $h_{\sumf}(\lambda[n])$ can be expressed in terms of Chebyshev polynomials as
\begin{align}
  \Re\bigl(h_{\sumf}(\lambda[n])\bigr)
  &= \sum_k\vct \alpha[k] T_k(x_{R,n})
   + \sum_i\vct \beta[2i]\,T_{2i}(x_{I,n}),
  \label{eq:re_part}\\
  \Im\bigl(h_{\sumf}(\lambda[n])\bigr)
  &= \sum_i\vct \beta[2i+1]\,T_{2i+1}(x_{I,n}),
  \label{eq:im_part}
\end{align}
where $x_{R,n}=(\lR[n]-c_R)/R_R$ and $x_{I,n}=\lI[n]/R_I$. For the \filtername filter $h_{\ratf}(\lambda[n])$ we have
\begin{align}
  \Re\bigl(h_{\ratf}(\lambda[n])\bigr)
  &= \sum_i\vct \eta[2i]\,T_{2i}(x_{Q,n}),
  \label{eq:re_part_rat}\\
  \Im\bigl(h_{\ratf}(\lambda[n])\bigr) &= \sum_i\vct \eta[2i+1]\,T_{2i+1}(x_{Q,n}),
  \label{eq:im_part_rat}
\end{align}
where $x_{Q,n}=\left(-j\tfrac{j\lI[n]}{\lR[n]}\right)/R_Q$.

\subsection{Proof of Theorem~\ref{thm:fir-iir} (IIR correspondence)}

\paragraph{M\"obius identity.}
On the cycle, $\lR[n]=1-\cos\vct \theta[n]$ and $\lI[n]=-\sin\vct \theta[n]$, so
\begin{equation} \label{eq:mobius}
\vct \mu[n]=\frac{j\lI[n]}{\lR[n]} =\frac{-j\sin\vct \theta[n]}{1-\cos\vct \theta[n]} = \frac{-2j \sin(\vct \theta[n]/2)\cos(\vct \theta[n]/2)}{2\sin^2(\vct \theta[n]/2)} = -j\cot\!\left(\frac{\vct \theta[n]}{2}\right).
\end{equation}
Writing $\vct z[k]=e^{j\vct \theta[n]}$ and using $\cot(\theta/2)=j(z+1)/(z-1)$,
\begin{equation}
\vct \mu[n]=-j\cdot j\,\frac{\vct z[n]+1}{\vct z[n]-1}=\frac{\vct z[n]+1}{\vct z[n]-1},
\end{equation}
which is \eqref{eq:mobius}. This is a M\"obius transform, projecting from the unit circle onto the imaginary axis.

\paragraph{Rational structure \eqref{eq:rational}.}
Substituting \eqref{eq:mobius} into a degree-$K$ polynomial,
\begin{equation}
\sum_{k=0}^{K}\vc c[k]\vct \mu[n]^k 
=\frac{\sum_{k=0}^{K}\vc c[k](\vct z[n]+1)^k(\vct z[n]-1)^{K-k}}{(\vct z[n]-1)^{K}}
=\frac{Q_K(\vct z[n])}{(\vct z[n]-1)^{K}},
\end{equation}
where $Q_K$ is a polynomial of degree $\le K$ obtained by multiplying the denominator. This is a rational function of $\vct z[n]$ of type $[K/K]$ whose denominator $(\vct z[n]-1)^K$ places all $K$ poles at $\vct z=1$, the DC mode. A rational spectral response corresponds to an IIR (ARMA) vertex-domain filter and hence has non-compact support. \qed

\subsection{Proof of Theorem~\ref{thm:reach-gap} (Reachability gap)}

\paragraph{Starting notation.}
$\Delta u[\ell]=u[\ell+1]-2u[\ell]+u[\ell-1]$ is the discrete Laplacian and $\vc{g}_k:=(\Lrat)^k\vc{\delta}_0$ the $k$-th kernel, so that $\vc{g}_k[d]$ is the impulse response of $(\Lrat)^k$ at distance $d$. Since $\ma{L}$ is normal on $C_N$, the spectral split gives
\begin{equation}
\Lc=\ma{I}-\tfrac12(\ma{W}+\ma{W}^T), \qquad \Lu=\tfrac12(\ma{W}^T-\ma{W}),
\label{eq:split}
\end{equation}
so that, acting on a signal $u\in\R^N$,
\begin{equation}
(\Lc u)[\ell]=-\tfrac12(\Delta u)[\ell], \qquad (\Lu u)[\ell]=\tfrac12\bigl(u[\ell+1]-u[\ell-1]\bigr):
\label{eq:stencils}
\end{equation}
a second difference and a centered first difference. 

\begin{prop}[Identity on \filtername operator]
\label{prop:poisson}
$\Lc\Lrat=\Lu$, and $\Lrat\vc{1}=\vc{0}$, $\vc{1}^T\Lrat=\vc{0}^T$.
\end{prop}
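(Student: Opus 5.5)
The plan is to work entirely in the spectral domain of the directed cycle. There $\ma L$ is normal (indeed circulant), so it is unitarily diagonalized by the DFT basis $\ma U$. By Theorem~\ref{thm:conjugate-properties} and Corollary~\ref{cor:split}, the operators $\Lc$, $\Lu$, $(\Lc)^{D}$ and $\Lrat=\Lu(\Lc)^{D}$ are all diagonal in this same basis. Their eigenvalues are $\lR[n]$, $j\lI[n]$, $\xi_n$ and $\vct\mu[n]$, where $\xi_n=1/\lR[n]$ if $\lR[n]\neq0$ and $\xi_n=0$ otherwise, and $\vct\mu[n]=j\lI[n]\xi_n$. This is exactly the kernel construction \eqref{eq:drazin-kernel} used in the proof of Theorem~\ref{thm:non-holomorphicity}, applied through the homomorphism of Lemma~\ref{lem:homomorphism}. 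Each claimed identity therefore reduces to a scalar identity holding for every $n$.

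For $\Lc\Lrat=\Lu$, the eigenvalue of the left-hand side at mode $n$ is $\lR[n]\,j\lI[n]\,\xi_n$. If $\lR[n]\neq0$, this equals $j\lI[n]$. The only delicate case is $\lR[n]=0$, where the product is $0$ and we need $\lI[n]=0$ as well. On $\mathcal{C}_N$ we have $\lR[n]=1-\cos\vct\theta[n]$, which vanishes only at $\vct\theta[n]=0$, and there $\lI[n]=-\sin 0=0$. More generally, this holds whenever $0$ is the only eigenvalue on the imaginary axis, which is the standing assumption in the proof of Theorem~\ref{thm:non-holomorphicity}. Hence the eigenvalues agree for every $n$, and by simultaneous diagonalization $\Lc\Lrat=\Lu$. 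This is the step I expect to need the most care, namely checking that the Drazin inverse annihilates exactly the DC mode and that $\Lu$ also vanishes there. If one prefers to avoid spectral language, the same point can be made as follows. $\Lc(\Lc)^{D}$ is the spectral projector onto the non-DC modes, namely $\ma I-\tfrac1N\vc 1\vc 1^T$, and $\Lu\vc 1=0$ by \eqref{eq:stencils}. It follows that $\Lc\Lrat=\Lu(\ma I-\tfrac1N\vc1\vc1^T)=\Lu$, using that $\Lc$, $\Lu$ and $(\Lc)^D$ commute.

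For the kernel identities, the DC eigenvector is $\vc 1$ (with $\vct\theta=0$), and $\vct\mu$ at that mode is $j\lI\,\xi=0$. So $\Lrat\vc1=\vc0$, or directly, $(\Lc)^{D}\vc1=0$ because the Drazin inverse vanishes on the null eigenspace. For the left kernel, $\ma U$ is unitary and $\Lrat$ is a real circulant (real by Theorem~\ref{thm:conjugate-properties}, circulant as a polynomial or diagonal function in the DFT basis). Hence $(\Lrat)^T$ is also diagonalized by $\ma U$, with conjugate eigenvalues, and $\vc1$ is again an eigenvector with eigenvalue $\overline{0}=0$. Transposing gives $\vc1^T\Lrat=\vc0^T$.
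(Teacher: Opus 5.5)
Your proof is correct and follows the paper's route: you simultaneously diagonalize $\Lc$, $\Lu$, $(\Lc)^{D}$ and $\Lrat$ and check each identity mode by mode, and you make explicit two points that the paper's one-line proof leaves implicit. These are that $\lI=0$ at the only mode where $\lR=0$, and that on $\mathcal{C}_N$ normality (reality plus a unitary eigenbasis) turns $\Lrat\vc{1}=\vc{0}$ into $\vc{1}^T\Lrat=\vc{0}^T$.

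One caveat concerns your aside on general graphs. Saying ``$0$ is the only eigenvalue on the imaginary axis'' is not quite sufficient. On a defective Jordan block at $0$, $\Lu$ acts as $\tfrac12\ma N$ while $\Lc\Lrat$ vanishes there, so you also need $0$ to be semisimple. This does hold for any graph Laplacian and is automatic on the cycle.
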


\begin{proof}
All operators commute, so it suffices to observe their spectral definition.
\end{proof}

\begin{lemma}[Linear form of $\vc g_1$]
\label{lem:ramp}
$\vc{g}_1[\ell]=2\ell/N-1$ for $1\le\ell\le N-1$, and $\vc{g}_1[0]=0$.
\end{lemma}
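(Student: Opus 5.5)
The plan is to characterize $\vc g_1=\Lrat\vc\delta_0$ as the unique solution of a discrete Poisson problem on $\mathcal{C}_N$, and then check that the proposed ramp solves it. By Proposition~\ref{prop:poisson}, $\Lc\Lrat=\Lu$, hence
\begin{equation}
\Lc\vc g_1=\Lu\vc\delta_0 .
\end{equation}
Also $\vc 1^T\Lrat=\vc 0^T$ gives $\vc 1^T\vc g_1=0$. On the cycle, $\Lc=\ma I-\tfrac12(\ma W+\ma W^T)$ is the symmetric Laplacian of a connected graph, so $\ker\Lc=\operatorname{span}(\vc 1)$. Therefore the Poisson equation determines $\vc g_1$ up to an additive constant, and the zero-sum condition fixes that constant. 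The argument thus reduces to two steps: write the equation in stencil form, and verify that the candidate satisfies both the equation and the normalization.

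\textbf{Stencil form.} Using \eqref{eq:stencils}, the left-hand side is $-\tfrac12(\Delta\vc g_1)[\ell]$. The right-hand side is $\tfrac12(\vc\delta_0[\ell+1]-\vc\delta_0[\ell-1])$, with indices mod $N$, which equals $\tfrac12$ at $\ell=N-1$, $-\tfrac12$ at $\ell=1$, and $0$ elsewhere. Hence the equation becomes
\begin{equation}
\Delta\vc g_1[1]=1,\qquad \Delta\vc g_1[N-1]=-1,\qquad \Delta\vc g_1[\ell]=0 \ \text{otherwise}.
\end{equation}
So $\vc g_1$ is affine on $\{1,\dots,N-1\}$, with two kinks adjacent to the source node $0$. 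This is the discrete analogue of a sawtooth, and it motivates the candidate $\vc g_1[\ell]=2\ell/N-1$ for $1\le\ell\le N-1$, with $\vc g_1[0]=0$.

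\textbf{Verification.} Three checks are needed.
\begin{itemize}
\item For $2\le\ell\le N-2$ the candidate is linear, so $\Delta\vc g_1[\ell]=0$.
\item At the special nodes, $\Delta\vc g_1[1]=(4/N-1)-2(2/N-1)+0=1$, and $\Delta\vc g_1[N-1]=0-2(1-2/N)+(1-4/N)=-1$. At the source, $\Delta\vc g_1[0]=(2/N-1)+(1-2/N)-0=0$.
\item The zero-sum condition holds: $\sum_{\ell=1}^{N-1}(2\ell/N-1)=(N-1)-(N-1)=0$.
\end{itemize}
Uniqueness modulo $\ker\Lc$ then gives the claim.

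The main point requiring care is bookkeeping rather than analysis. First, the orientation convention of $\ma W$ (that is, which of $\ma W,\ma W^T$ shifts forward) determines the signs in \eqref{eq:stencils}. Getting it wrong would flip the ramp to $1-2\ell/N$, so I would fix the convention consistently with the stencil already stated. Second, one must justify that $\vc 1^T\vc g_1=0$ is the right normalization. Since $\ma L$ is normal on $\mathcal{C}_N$, the Drazin inverse $(\Lc)^D$ coincides with the Moore--Penrose pseudoinverse and annihilates $\vc 1$. This is consistent with Proposition~\ref{prop:poisson}, which I will simply invoke.
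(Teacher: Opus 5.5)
Your proposal is correct and takes essentially the same route as the paper. Both rest on the Poisson identity $\Lc\vc g_1=\Lu\vc\delta_0$, written via the stencils as $\Delta\vc g_1=\vc\delta_1-\vc\delta_{-1}$, plus the zero-mean condition from $\vc 1^T\Lrat=\vc 0^T$. The only difference is that you verify the ramp and invoke uniqueness through $\ker\Lc=\operatorname{span}(\vc 1)$, whereas the paper solves for the affine coefficients directly by evaluating $\Delta\vc g_1$ at $\ell=0$ and $\ell=1$.
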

\begin{proof}Apply Proposition~\ref{prop:poisson} to $\vc{\delta}_0$ and use \eqref{eq:stencils}:
\begin{equation}
\Lc\vc{g}_1=\Lu\vc{\delta}_0=\tfrac12(\vc{\delta}_{-1}-\vc{\delta}_{1})
\qquad\Longleftrightarrow\qquad
\Delta\vc{g}_1=\vc{\delta}_1-\vc{\delta}_{-1}.
\label{eq:dipole}
\end{equation}
By \eqref{eq:dipole} the second difference of $\vc{g}_1$ vanishes at every vertex other than $\ell=\pm1$. A sequence with vanishing second difference on a discrete interval is affine on that interval, so $\vc{g}_1[\ell]=a+b\ell$ for $2\le\ell\le N-2$. Evaluating $\Delta \vc g_1$ at $\ell=0$ and inserting the affine form gives $\vc g_1[0]=a + \frac{bN}{2}$. An additional condition comes from the fact that $\Lrat$ sets the zero-th frequency to $0$, i.e., $\hat{\vc{g}}_1[0]=\mu[0]=0$ and therefore $\sum_{\ell}\vc{g}_1[\ell]=0$. Combining these two conditions gives $a=-\frac{bN}{2}$ and $\vc g_1[0] = 0$. Once again evaluating $\Delta \vc g_1$ at $\ell=1$ gives $\vc g_1[2]-2\vc g_1[1]+\vc g_1[0]=1$, which with the affine form gives $b=2/N$. Hence $\vc{g}_1[\ell]=2\ell/N-1$ for $1\le\ell\le N-1$.
\end{proof}

\begin{lemma}[Degree and leading coefficient]
\label{lem:degree}
For $N\ge3k+2$ there is a polynomial $P_k$ of degree exactly $k$ with $\vc{g}_k[d]=P_k(d)$ on the interval $k<d<N-k$, whose leading coefficient obeys
\begin{equation}
a_k=-\frac{2a_{k-1}}{k},\qquad a_1=\frac2N, \qquad\Longrightarrow\qquad a_k=\frac{(-1)^{k-1}2^{k}}{N\,k!}. \label{eq:leadrec}
\end{equation}
\end{lemma}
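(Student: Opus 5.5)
We proceed by induction on $k$, with Lemma~\ref{lem:ramp} as the base case. There $\vc g_1[d]=2d/N-1$ on $1\le d\le N-1$, so $P_1(d)=2d/N-1$ has degree exactly one and $a_1=2/N$.

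For the inductive step we use Proposition~\ref{prop:poisson}. Since $\vc g_k=\Lrat\vc g_{k-1}$ and $\Lc\Lrat=\Lu$, applying $\Lc$ gives
\begin{equation*}
\Lc\vc g_k=\Lu\vc g_{k-1}.
\end{equation*}
By the stencils \eqref{eq:stencils}, this reads
\begin{equation*}
-\tfrac12(\Delta\vc g_k)[\ell]=\tfrac12\bigl(\vc g_{k-1}[\ell+1]-\vc g_{k-1}[\ell-1]\bigr),
\end{equation*}
that is,
\begin{equation*}
(\Delta\vc g_k)[\ell]=-\bigl(\vc g_{k-1}[\ell+1]-\vc g_{k-1}[\ell-1]\bigr).
\end{equation*}

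Assume $\vc g_{k-1}[\ell]=P_{k-1}(\ell)$ on $k-1<\ell<N-k+1$. For $\ell$ with $k<\ell<N-k$, both neighbours $\ell\pm1$ lie in the inductive interval. The right-hand side is then the polynomial $-(P_{k-1}(\ell+1)-P_{k-1}(\ell-1))$. It has degree $k-2$ with leading coefficient $-2(k-1)a_{k-1}$, since the centered difference of $\ell^{k-1}$ has leading term $2(k-1)\ell^{k-2}$.

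Next we solve the discrete Poisson equation on the interval $k<\ell<N-k$. The second difference maps polynomials of degree $m$ onto polynomials of degree $m-2$: the leading term $\ell^m$ goes to $m(m-1)\ell^{m-2}+\dots$. Hence a particular polynomial solution $R_k$ of degree $k$ exists, with leading coefficient $a_k$ satisfying
\begin{equation*}
k(k-1)a_k=-2(k-1)a_{k-1},
\end{equation*}
i.e.\ $a_k=-2a_{k-1}/k$ for $k\ge2$.

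Any solution of $\Delta u=f$ on a discrete interval differs from $R_k$ by an affine sequence, because vanishing second difference forces affinity, as in the proof of Lemma~\ref{lem:ramp}. So $\vc g_k[\ell]=R_k(\ell)+\alpha+\beta\ell$ there. Setting $P_k=R_k+\alpha+\beta\ell$ keeps degree exactly $k$ whenever $k\ge2$, since the affine correction does not touch the leading term. Iterating the recursion from $a_1=2/N$ gives the closed form $a_k=(-1)^{k-1}2^k/(N\,k!)$.

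The main obstacle is bookkeeping of the interval, which is where $N\ge 3k+2$ enters. The relation holds for $\ell$ in $k<\ell<N-k$ only if the neighbouring values of $\vc g_{k-1}$ come from the polynomial region of the previous step, and the interval shrinks by one at each end per step. We must also ensure that interval is nonempty and long enough that "vanishing second difference implies affine" is meaningful, requiring at least three points. We also need the polynomial $P_k$ to be uniquely determined, i.e.\ more than $k+1$ points, so that "degree exactly $k$" is well defined. The interval $k<\ell<N-k$ has $N-2k-1$ points, and requiring this to exceed $k+1$ with margin is what the hypothesis $N\ge 3k+2$ provides. We would verify this count carefully, and note that we do not need to pin down $\alpha,\beta$ (they are fixed by the zero-mean condition $\vc 1^T\Lrat=\vc 0^T$ and the wrap-around, but only matter for lower-order coefficients).
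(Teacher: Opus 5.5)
Your proposal is correct and follows the paper's proof essentially step for step: induction from Lemma~\ref{lem:ramp}, the identity $\Lc\vc g_k=\Lu\vc g_{k-1}$ rewritten through the stencils as $\Delta\vc g_k=-\bigl(\vc g_{k-1}[\ell+1]-\vc g_{k-1}[\ell-1]\bigr)$, matching the $\ell^{k-2}$ coefficient to get $a_k=-2a_{k-1}/k$, and noting that the affine homogeneous solutions cannot change the degree-$k$ term. One small correction to your count: $N\ge3k+2$ gives $N-2k-1\ge k+1$ points on the interval, not strictly more than $k+1$ as you claimed, but $k+1$ points already determine a polynomial of degree at most $k$ uniquely, so the leading coefficient is still well defined.
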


\begin{proof}
We proceed by induction on $k$,

Base case $k=1$: Lemma~\ref{lem:ramp} gives $P_1(d)=2d/N-1$, of degree $1$
with $a_1=2/N$.

Inductive hypothesis: Suppose the statement holds at order $k-1$, so that $\vc{g}_{k-1}=P_{k-1}$ on $k-1<d<N-k+1$ with $\deg P_{k-1}=k-1$ and leading coefficient $a_{k-1}\neq0$.

Applying Proposition~\ref{prop:poisson} to $\vc{g}_k=\Lrat\vc{g}_{k-1}$ and using \eqref{eq:stencils},
\begin{equation}
-\tfrac12\,\Delta\vc{g}_k=\Lu\vc{g}_{k-1}.
\label{eq:krec}
\end{equation}
Let $d$ lie in the interval $k<d<N-k$. Then both $d-1$ and $d+1$ lie in $k-1<d<N-k+1$, where the inductive hypothesis applies, so by \eqref{eq:stencils} the right-hand side of \eqref{eq:krec} equals
\begin{equation}
Q(d):=\tfrac12\bigl(P_{k-1}(d+1)-P_{k-1}(d-1)\bigr).
\end{equation}
Expanding, the terms of degree $k-1$ cancel and $Q$ has degree exactly $k-2$ with leading coefficient $(k-1)a_{k-1}$.

On the interval, \eqref{eq:krec} reads $\Delta\vc{g}_k=-2Q$ and to it we seek a particular solution that is a polynomial of degree $k$ with leading coefficient $b$. Since
$\Delta(b\,d^{k})=b\,k(k-1)\,d^{k-2}+O(d^{k-3})$, matching the coefficients of $d^{k-2}$ gives
\begin{equation}
b\,k(k-1)=-2(k-1)a_{k-1} \qquad\Longrightarrow\qquad b=-\frac{2a_{k-1}}{k}\neq0 .
\end{equation}
The homogeneous solutions of $\Delta u=0$ on a discrete interval are the affine sequences, which for $k\ge2$ cannot cancel the term of degree $k$, so $a_k\neq 0$. Hence $\vc{g}_k$ agrees on the interval with a polynomial of degree exactly $k$, and $a_k=b$, which is \eqref{eq:leadrec}. The hypothesis $N\ge3k+2$ guarantees that the interval contains more than $k+1$ vertices, so the polynomial is not degenerate. Unrolling the recursion leads to \eqref{eq:leadrec}.
\end{proof}

\begin{lemma}[Bernoulli form]
\label{lem:bernoulli}
Let $s=d/N$ and $c_k=(-1)^{k-1}2^{k}/k!$. Then, uniformly on $k<d<N-k$,
\begin{equation}
\vc{g}_k[d]=N^{k-1}c_k\,B_k(s)+O\!\left(N^{k-3}\right),
\label{eq:bform}
\end{equation}
with equality for $k=1$.
\end{lemma}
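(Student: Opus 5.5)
The plan is to compute $\vc g_k=(\Lrat)^k\vc\delta_0$ exactly in the Fourier basis of $\mathcal{C}_N$. Then I compare the resulting finite trigonometric sum with the Fourier series of the periodic Bernoulli polynomials. By Theorem~\ref{thm:fir-iir} and \eqref{eq:mobius}, $\Lrat$ is diagonal in the DFT basis with eigenvalue $\vct\mu[n]=-j\cot(\pi n/N)$ at frequency $\vct\theta[n]=2\pi n/N$ for $n\neq0$, and $\vct\mu[0]=0$ by Proposition~\ref{prop:poisson}. Hence
\begin{equation*}
\vc g_k[d]=\frac1N\sum_{n=1}^{N-1}\bigl(-j\cot(\pi n/N)\bigr)^k e^{\pm 2\pi j nd/N},
\end{equation*}
with the sign of the exponent fixed by the orientation convention of $\ma W$. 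I will calibrate that sign against Lemma~\ref{lem:ramp}, which already gives the exact case $k=1$: $c_1B_1(s)=2s-1$.

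The key analytic step is to split $\cot^k(x)$ near $x=0$ into its Laurent principal part plus a regular remainder. Write $\cot^k(x)=x^{-k}-\tfrac{k}{3}x^{2-k}+\dots$ up to the $x^{-1}$ or $x^0$ term, plus a function analytic on $|x|<\pi$. Since $\cot^k(\pi n/N)$ is $N$-periodic in $n$, I will use the Mittag-Leffler expansion: the periodized principal part $\sum_{m\in\mathbb Z}(N/(\pi(n+mN)))^k$ reproduces the pole structure. The leading piece of the DFT then becomes a lattice sum over all nonzero integers $n$. At that point I invoke the classical Fourier series
\begin{equation*}
B_k(s)=-\frac{k!}{(2\pi j)^k}\sum_{n\neq0}\frac{e^{2\pi j ns}}{n^k},\qquad 0<s<1,
\end{equation*}
which gives exactly $N^{k-1}(\pm1)^k(-j/\pi)^k\cdot\bigl(-(2\pi j)^k/k!\bigr)B_k(s)=N^{k-1}c_kB_k(s)$ once the orientation sign is fixed as above.

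Next I bound the corrections. The term $x^{2-k}$ of the principal part produces, by the same identity, a multiple of $N^{k-3}B_{k-2}(s)$. This is $O(N^{k-3})$ uniformly in $s$, with the obvious reading for $k\le2$, where it degenerates to a constant or a delta at $d=0$. Deeper principal-part terms are smaller still. The analytic remainder has a rapidly decaying Fourier transform, so its contribution is concentrated at $d\equiv0$ and at a bounded number of neighbouring sites; the restriction $k<d<N-k$ is meant to exclude exactly these. Lemma~\ref{lem:degree} serves as a consistency check: on $k<d<N-k$ the sum is an exact polynomial of degree $k$ with leading coefficient $c_k/N$. This matches $N^{k-1}c_kB_k(d/N)$, so the error is a polynomial of degree at most $k-2$ in $d$.

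The step I expect to be the main obstacle is making the remainder estimate uniform and rigorous. The Laurent expansion of $\cot$ is valid only for $|x|<\pi$, and the finite DFT aliases all lattice points. I would try first to avoid asymptotics altogether. Lemma~\ref{lem:degree} already shows that $\vc g_k$ is an exact polynomial on the interval, so I would identify its coefficients of degree $k$ and $k-1$ directly. I would run the recursion $\Delta\vc g_k=-2\Lu\vc g_{k-1}$ from \eqref{eq:krec} alongside the continuous identity $B_k''=k(k-1)B_{k-2}$. The centered difference $\Lu$ is second-order accurate, and $\Delta$ agrees with the second derivative up to relative $O(N^{-2})$, which is the origin of the gap $N^{k-3}$ rather than $N^{k-2}$. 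I would fix the two free affine constants at each step by the normalisation $\vc 1^T\vc g_k=0$ (Proposition~\ref{prop:poisson}) and by matching $\int_0^1B_k=0$. If that matching near the boundary layer $d\le k$ proves delicate, I would fall back on the spectral principal-part argument above.
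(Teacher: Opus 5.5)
Your spectral route is correct and differs from the paper's. The paper works in the vertex domain. It sets $\tilde P_k(s)=P_k(Ns)/N^{k-1}$ and replaces the differences in $-\tfrac12\Delta\vc{g}_k=\Lu\vc{g}_{k-1}$ by $N^{-2}\,d^2/ds^2$ and $N^{-1}\,d/ds$ (relative error $O(N^{-2})$). This gives $\tilde P_k''=-2\tilde P_{k-1}'$ with $\tilde P_1=2B_1$, and it recognises $c_kB_k$ through $B_k''=k(k-1)B_{k-2}$. The two integration constants are fixed by zero mean and the reflection $\tilde P_k(1-s)=(-1)^k\tilde P_k(s)$.

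That argument is elementary and shows why Bernoulli polynomials appear, but the constants are its weak point. For odd $k$, antisymmetry about $s=\tfrac12$ already implies zero mean. So a term $b\,(s-\tfrac12)$ is left free, and showing $b=O(N^{-2})$ needs boundary-layer information near $d=0$, as $\Delta\vc{g}_1[1]=1$ supplied in Lemma~\ref{lem:ramp}.

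Your Mittag-Leffler decomposition of $\cot^k$, combined with the Fourier series of $B_m$, avoids this. It gives an exact finite Bernoulli expansion valid for all $1\le d\le N-1$, e.g.\ $\vc{g}_2[d]=-2NB_2(s)-\tfrac{2}{3N}$ and $\vc{g}_3[d]=\tfrac43N^2B_3(s)+2B_1(s)$. The parity of $\cot^k$ (no $x^{1-k}$ term in its Laurent series) is exactly why the error is $N^{k-3}$ rather than $N^{k-2}$. Keep this as the proof rather than the recursion fallback: as written, the fallback supplies one constraint for two unknowns, since $\vc{1}^T\vc{g}_k=0$ and $\int_0^1B_k=0$ are the same condition.

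Two details need tightening.
\begin{enumerate}
\item After subtracting all periodized principal parts, the remainder is entire, $\pi$-periodic and bounded as $|\Im x|\to\infty$, hence a constant $R$. Its contribution is exactly $(-j)^kR\,(\delta_{d,0}-1/N)$: a uniform $O(N^{-1})$ shift, not something localized near $d\equiv0$. This matters for $k=2$. Mere exponential decay of the remainder's Fourier coefficients would only give an $N$-independent error of size $e^{-ck}$ at $d\approx k$, which is not $O(N^{-1})$.
\item Your lattice sum runs over $n\not\equiv0\pmod N$, not over all $n\neq0$. Restoring the multiples of $N$ costs $O(N^{-1})$ for even-order terms and nothing for odd-order ones.
\end{enumerate}
Both corrections are $O(N^{k-3})$ for $k\ge2$. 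For $k=1$ both vanish (there $R=0$), which recovers the exact case.
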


\begin{proof}
Set $\tilde P_k(s):=P_k(Ns)/N^{k-1}$, the $k$-th kernel expressed in the fraction distance $s=d/N$. Its leading term is $a_k(Ns)^{k}/N^{k-1}=N a_k s^{k}$, so by \eqref{eq:leadrec} its leading coefficient is $Na_k=(-1)^{k-1}2^{k}/k!=:c_k$, which is independent of $N$. This normalization is what allows a limit to be taken when computing derivatives.

Under $d=Ns$, the second difference acts as $N^{-2}\,\mathrm{d}^2/\mathrm{d}s^2$ and the centered first difference as $N^{-1}\,\mathrm{d}/\mathrm{d}s$, each with relative error $O(N^{-2})$ on polynomials of fixed degree. Hence differentiating over $s$, \eqref{eq:krec} becomes $P_k''=-2N\,P_{k-1}'$, and substituting $P_k=N^{k-1}\tilde P_k$ cancels every power of $N$, leaving
\begin{equation}
\tilde P_k''=-2\,\tilde P_{k-1}',
\qquad
\tilde P_1(s)=2s-1=2B_1(s).
\label{eq:contrec}
\end{equation}

Try $\tilde P_k=c_kB_k$. The Bernoulli polynomials satisfy $B_k'=k\,B_{k-1}$, hence $B_k''=k(k-1)B_{k-2}$, so \eqref{eq:contrec} holds if and only if
\begin{equation}
c_k\,k(k-1)=-2\,c_{k-1}(k-1) \qquad\Longleftrightarrow\qquad c_k=-\frac{2c_{k-1}}{k},
\end{equation}
which is the recursion \eqref{eq:leadrec} of Lemma~\ref{lem:degree}. With $c_1=2$ from \eqref{eq:contrec}, this gives $c_k=(-1)^{k-1}2^{k}/k!$.

To fully determine solution of the second-order~\eqref{eq:contrec} it first satisfies, $\int_0^1\tilde P_k=0$, which follows from $\hat{\vc{g}}_k[0]=\mu[0]^k=0$. The second is $\tilde P_k(1-s)=(-1)^k\tilde P_k(s)$, which follows from $\vc{g}_k[N-\ell]=(-1)^k\vc{g}_k[\ell]$. This equality is surmised from the expression in Lemma~\ref{lem:ramp} where $\vc g_1$ is an odd sequence, furthermore $\vc{g}_k=\vc{g}_1^{*k}$, and the convolution of two odd sequences is even, so parity alternates with $k$. Both conditions are satisfied for $B_k$, by its vanishing mean and by the reflection formula $B_k(1-s)=(-1)^kB_k(s)$. The ansatz of Bernoulli polynomials therefore holds.

The only approximation is the replacement of differences by derivatives when considering polynomials, leading to $O(N^{-2})$ term. At $k$ steps relative error remains $O(N^{-2})$, hence the recursion gives \eqref{eq:bform}. For $k=1$ no approximation is made, since Lemma~\ref{lem:ramp} is exact.
\end{proof}

\paragraph{Proof of Theorem~\ref{thm:reach-gap}.}

(i): Since $\ma{W}^m\vc{\delta}_0=\vc{\delta}_m$, the kernel of $h=\sum_{m\le K}\gamma_m\ma{W}^m$ is $\sum_{m\le K}\gamma_m\vc{\delta}_m$. It therefore equals $\gamma_d$ at $d\le K$ and vanishes identically for $d>K$, hence $S_h(d)=0$ there. The coefficients $\gamma_m$ are unconstrained, so the response on the first $K$ hops is arbitrary.

(ii): By \eqref{eq:split}, $e^{-t\Lc}=e^{-t}\exp\bigl(\tfrac{t}{2}(\ma{W}+\ma{W}^T)\bigr)$. The generating identity $e^{\frac{t}{2}(z+z^{-1})}=\sum_{d}I_d(t)z^{d}$, where $I_d$ is the modified Bessel function of first kind \citep{watson1944}, leads to the impules response $h[d,0]$ as $e^{-t}I_d(t)$, and the peak is at $d=0$ because $I_0\ge I_d$ for all $d$. For the bound of $S_h(d)=I_d(t)/I_0(t)$, compare the defining series termwise: the $m$-th term of $I_d(t)$ is $(t/2)^{d+2m}/\bigl(m!\,(d+m)!\bigr)$ and that of $\tfrac{(t/2)^d}{d!}I_0(t)$ is $(t/2)^{d+2m}/\bigl(d!\,(m!)^2\bigr)$, so their ratio is $d!\,m!/(d+m)!\le1$. Summing over $m$ gives $I_d(t)\le\tfrac{(t/2)^d}{d!}I_0(t)$, and $S_h(d)\le \tfrac{(t/2)^d}{d!}$, which is the claim.

(iii): The term $\vct \eta[0]\ma{I}_N$ has kernel $\vct \eta[0]\vc{\delta}_0$, which vanishes on $K<d<N-K$; only the terms with $k\ge1$ contribute there. On the same interval $K<d<N-K$, Lemma~\ref{lem:bernoulli} applies to each $\vc{g}_k$ with $k\le K$. A weighted sum $\vc g_k$ with weights $\vct \eta[k]$ gives $p(s)$ in \eqref{eq:ratsens}. Substituting $p(s) \approx h[d,0]$ into $S_h(d)$, first yields $|p(s)|/\max_{q\in[0,1]}|p(q)|$. The remaining discrepancy is between the maximum over the vertices $v \in\mathcal{V}$ and the supremum over the continuum $[0,1]$, which for polynomials of fixed degree is $O(N^{-2})$.

\begin{figure}[H]
    \centering
    \captionsetup[subfigure]{justification=centering}
  \subfloat[Sensitivity profiles]{%
       \includegraphics[width=0.5\linewidth]{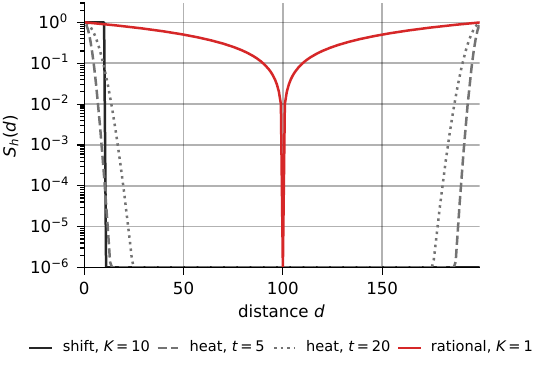}}
    \subfloat[Best filter sensitivity across distances]{%
       \includegraphics[width=0.5\linewidth]{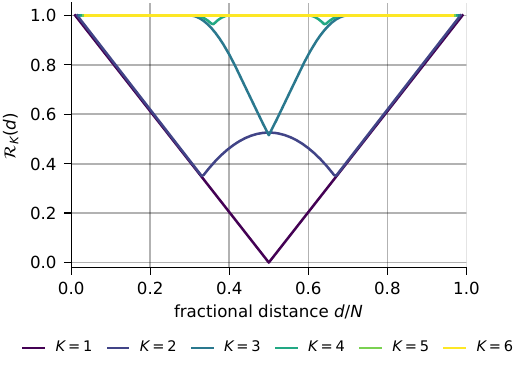}}
      \hfill \\
    \subfloat[$\mathcal{G}(K)$ function]{%
       \includegraphics[width=0.5\linewidth]{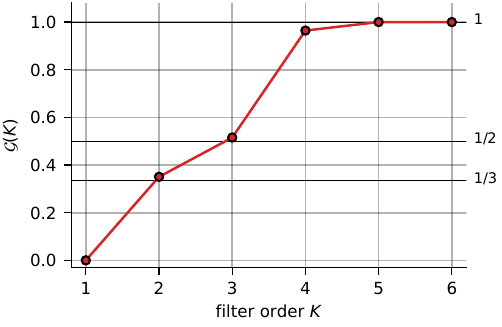}}
  \caption{a) Sensitivity profiles computed for shift, heat ($t=5$), heat ($t=20$), and \filtername ($K=1$) filters. b) Plot of the optimal filter sensitivity across distances. c) Plot of the $\mathcal{G}(K)$ function. Throughout $N$ is set to $200$. }
  \label{app:reach-numerical}
\end{figure}




\vfill
\pagebreak
\section{Accompanying information}
\label{app:accompanying_infos}
\subsection{Spectral conjugate definition}
\begin{definition}[Spectral conjugate]
\label{def:spectral-conjugate-detailed}
Let $\ma A \in \mathbb{C}^{n\times n}$ and consider its Jordan canonical form $\ma A = \ma P(\ma \Lambda + \ma N)\ma P^{-1}$, where $\ma \Lambda$ is diagonal and $\ma N$ is nilpotent, i.e., there exists $k\in\mathbb{N}$ s.t. $\ma N^k=\ma 0$. Then spectral conjugate $\ma A^{\sharp}$ is defined as
\begin{equation}
    \ma A^{\sharp} = \ma P\overline{\ma \Lambda}\ma P^{-1},
    \label{eq:spectral-conjugate-def}
\end{equation}
where $\overline{\ma \Lambda}$ is the complex conjugate of $\ma \Lambda$.
\end{definition}
Interestingly enough, the spectral conjugate $\ma A^{\sharp}$ is different from the Hermitian conjugate $\ma A^{H} = \overline{\ma A^T}$, which is defined as the complex conjugate of the transpose of $\ma A$. The spectral conjugate $\ma A^{\sharp}$ preserves the eigenvectors and generalized eigenvectors of $\ma A$, while only taking the complex conjugate of its eigenvalues. In contrast, the Hermitian conjugate $\ma A^{H}$ does not preserve the eigenvectors of $\ma A$ in general.

\paragraph{Satisfying properties.}
Let $\tau:\mathbb{C} \to \mathbb{C}$ be the map $\tau(z) = \overline{z}$, and functional extension to $\ma A \in \mathbb{C}^{n\times n}$ as $\tau(\ma A) = \ma A^{\sharp}$. Then the spectral conjugate satisfies the following properties:
\begin{itemize}
    \item $\tau(\ma A + \ma B) = \tau(\ma A) + \tau(\ma B)$, \quad if $\ma A$ and $\ma B$ commute,
    \item $\tau(\ma A)B = \ma B\,\tau(\ma A)$,  \quad if $\ma A$ and $\ma B$ commute,
    \item $\tau(\alpha \ma A) = \overline{\alpha}\,\tau(\ma A)$, \quad for any $\alpha \in \mathbb{C}$,
    \item $\tau(\ma P \ma A \ma P^{-1}) = \ma P\,\tau(\ma A)\,\ma P^{-1}$, \quad for any invertible matrix $\ma P$.
\end{itemize}

\paragraph{Computation.}
\label{app:computation-spectral-conjugate}
In practice $\ma A^{\sharp}$ is computed from the complex Schur decomposition $\ma A = \ma Q\ma T\ma Q^{H}$, with $\ma T$ upper triangular. The diagonal is set by conjugating the eigenvalues, $\ma F_{ii} = \overline{\ma T_{ii}}$, and the strictly upper-triangular entries are filled in super-diagonal by super-diagonal using the Parlett recurrence, which enforces the commutation $\ma F\ma T = \ma T\ma F$~\citep{parlett1976recurrence}. For clustered or repeated eigenvalues the scalar recurrence is ill-conditioned and the blocked variant is used instead, with each diagonal block set to $\overline{\mu}\ma I$ for the cluster mean $\mu$ and the off-diagonal blocks recovered from Sylvester equations~\citep{davies2003schur,higham2008functions}. Recomposing, $\ma A^{\sharp} = \ma Q\ma F\ma Q^{H}$, yields the spectral conjugate without forming the Jordan decomposition explicitly.

\subsection{Drazin inverse definition}

\begin{definition}[Drazin inverse]
\label{def:drazin}
Let $\ma A \in \mathbb{C}^{n\times n}$ and let $k = \operatorname{ind}(\ma A)$ denote its index, i.e.\ the smallest non-negative integer such that $\operatorname{rank}(\ma A^{k}) = \operatorname{rank}(\ma A^{k+1})$, equivalently the size of the largest Jordan block associated with the eigenvalue $0$. The Drazin inverse $\ma A^{D}$ is the unique matrix satisfying
\begin{equation}
    \ma A^{D}\ma A\,\ma A^{D} = \ma A^{D},
    \qquad
    \ma A\,\ma A^{D} = \ma A^{D}\ma A,
    \qquad
    \ma A^{k+1}\ma A^{D} = \ma A^{k}.
    \label{eq:drazin-axioms}
\end{equation}
\end{definition}
The commutativity condition $\ma A\ma A^{D} = \ma A^{D}\ma A$ is what distinguishes the Drazin inverse from the Moore--Penrose pseudoinverse $\ma A^{\dagger}$, which satisfies $\ma A\ma A^{\dagger}\ma A = \ma A$ but does not in general commute with $\ma A$.

\paragraph{Spectral characterisation.}
Writing the Jordan decomposition of $\ma A$ with its invertible and nilpotent parts separated,
\begin{equation}
    \ma A = \ma P
      \begin{pmatrix} \ma C & \ma 0 \\ \ma 0 & \ma N \end{pmatrix}
      \ma P^{-1},
    \qquad \ma C \text{ invertible},\quad \ma N^{k} = \ma 0,
    \label{eq:jordan-split}
\end{equation}
the Drazin inverse admits the closed form
\begin{equation}
    \ma A^{D} = \ma P
      \begin{pmatrix} \ma C^{-1} & \ma 0 \\ \ma 0 & \ma 0 \end{pmatrix}
      \ma P^{-1}.
    \label{eq:drazin-spectral}
\end{equation}
Equivalently, $\ma A^{D}$ acts on the spectrum of $\ma A$ through
\begin{equation}
    \lambda \;\longmapsto\;
    \begin{cases}
        \lambda^{-1}, & \lambda \neq 0,\\[2pt]
        0,            & \lambda = 0,
    \end{cases}
    \label{eq:drazin-eigenmap}
\end{equation}
inverting the non-zero part of the spectrum while annihilating the nilpotent part. In particular $\ma A^{D}$ shares the eigenvectors and generalized eigenvectors of $\ma A$, which is precisely why the two operators commute.

\paragraph{Computation.}
In practice $\ma A^{D}$ is obtained from a Schur decomposition $\ma A = \ma Q\ma T\ma Q^{H}$ whose eigenvalues are reordered so that the non-zero ones occupy the leading diagonal block: inverting that block, zeroing the trailing nilpotent block, and recovering the coupling term from a Sylvester equation yields $\ma A^{D}$ without ever forming the Jordan decomposition explicitly~\citep{stanimirovic2015,
campbell2009generalized}. When combined with the computation of the spectral conjugate (Appendix~\ref{def:spectral-conjugate-detailed}), the Schur decomposition is already available and hence only the pointwise scalar inversion and computation of coupling terms is required. Ultimately the computation of $\Lc,\Lu,\Lu(\Lc)^D$ is dominated by the Schur decomposition.

\subsection{Lifting holomorphic function to matrix}
\label{app:holomorphic-to-graphfilters}
The evaluation of a holomorphic function $h$ at a matrix $\ma L$ follows from~\eqref{eq:contour-holomorphic} where $\tfrac{1}{z-\lambda}$ is replaced by the resolvent $(z\ma I - \ma L)^{-1}$~\citep{auscher1997holomorphic}
\begin{equation}
    h(\ma L)
    = \frac{1}{2\pi i}\oint_{\Gamma}
      h(z)\,(z\ma I - \ma L)^{-1}\,dz,
    \label{eq:graph-filter-contour-holomorphic}
\end{equation}
where $\Gamma$ encloses the spectrum $\sigma(\ma L)$ and $(z\ma I - \ma L)$ is invertible for all $z \in \Gamma$.

\subsection{Wirtinger derivative and holomorphicity of polynomial and rational kernels}
\label{app:wirtinger-holomorphicity-poly-ratio}
\paragraph{Wirtinger derivative.} 
Consider a complex number $z=x+jy$ and its conjugate $\bar{z}=x-jy$, the Wirtinger derivatives are defined as follows:
\begin{equation}
    \partial_z = \frac{1}{2}(\partial_x -j \partial_y),\quad \partial_{\bar{z}} = \frac{1}{2}(\partial_x +j \partial_y).
\end{equation}
For notation simplicity, we write $\partial:=\partial_z$ and $\bar{\partial}:=\partial_{\bar{z}}$. In particular the derivatives satisfy $\bar{\partial}z=0$ and $\partial \bar{z}=0$.

\paragraph{Holomorphicity.}
Define a degree $K$ polynomial kernel as:
\begin{equation}
    p_K:\sigma(\ma L)\rightarrow \C,\quad  z \mapsto \sum_{k=0}^K c_kz^k,
\end{equation}
where $c_k\in\C$ are the coefficients. Define a $[K_1,K_2]$ rational kernel as:
\begin{equation}
    r_{K_1,K_2}:\sigma(\ma L)\rightarrow \C,\quad  z \mapsto \frac{p_{K_1}(z)}{p_{K_2}(z)}.
\end{equation}
Since no instance of $\bar{z}$ is in the expressions of $p_K$ and $r_{K_1,K_2}$, the Wirtinger derivatives $\bar{\partial} p_K, \bar{\partial} r_{K_1,K_2}$ are null. Thereby $p_K$ and $r_{K_1,K_2}$ are holomorphic. Consequently, as canonical and Chebyshev polynomials can be expressed in the form $p_K$, they are holomorphic. Similarly, Cayley polynomials \citet{levie2018cayleynets} are special case of the rational kernels, hence they also are holomorphic.

\subsection{Kernels gain for non-diagonalizable operator}
\paragraph{Upper bound on gain of heat kernel}
\label{app:heat-kernel-non-diag}
Consider the heat kernel $h(\lambda)=e^{-t\lambda}$, which is holomorphic since $\bar\partial h=0$, so that $\partial^{k}h=h^{(k)}$ and $h^{(k)}(\lambda)=(-t)^{k}e^{-t\lambda}$. Substituting into \eqref{eq:jordan-filter} for a Jordan block $\ma J_n$ of size $m$ gives
\begin{equation}
    h(\ma J_n)
    = e^{-t\lambda_n}\sum_{k=0}^{m-1}\frac{(-t)^{k}}{k!}\,\ma N_n^{k}
    = e^{-t\lambda_n}
    \begin{pmatrix}
      1 & -t & \tfrac{t^{2}}{2!} & \cdots & \tfrac{(-t)^{m-1}}{(m-1)!}\\
      0 & 1 & -t & \cdots & \tfrac{(-t)^{m-2}}{(m-2)!}\\
      0 & 0 & 1 & \cdots & \tfrac{(-t)^{m-3}}{(m-3)!}\\
      \vdots & \vdots & \vdots & \ddots & \vdots\\
      0 & 0 & 0 & \cdots & 1
    \end{pmatrix}.
\end{equation}
Since $\|\ma N_n^{k}\|_2=1$ (with $\|\cdot\|_2=\sigma_{\text{max}}(\cdot)$ the operator norm) for $k\le m-1$, the gain of the block obeys
\begin{equation}
  \bigl\|h(\ma J_n)\bigr\|_2
  \;\leq\; e^{-t\Re(\lambda_n)}\sum_{k=0}^{m-1}\frac{t^{k}}{k!}
  \;=\; e^{-t\Re(\lambda_n)}\,O\!\left(t^{m-1}\right),
\end{equation}
so defectiveness contributes at most a polynomial factor in $t$, dominated by the exponential whenever $\Re(\lambda_n)>0$. 
\qed

\paragraph{Lower bound on the gain of $e^{\beta \ma L^\uparrow}$.}
\label{app:phase-kernel-non-diag}

In the same way as for the heat kernel, on a Jordan block $\ma J_n$ of size $m$, $\ma L^\uparrow$ acts as
$j\lambda_I[n]\ma I + \tfrac12 \ma N_n$, so $e^{\beta \ma L^\uparrow}\big|_{\ma J_n} = e^{j\beta\lambda_I[n]}\, e^{\beta \ma N_n/2}$,
with inverse $e^{-j\beta\lambda_I[n]}\, e^{-\beta \ma N_n/2}$. Since $\|\ma N_n^k\|_2 = 1$ for $k \le m-1$, its smallest
singular value satisfies
\begin{equation}
    \min_{\|\hat{\vc x}\|_2=1}\big\|e^{\beta \ma L^\uparrow}\big|_{\ma J_n}\hat{\vc x}\big\|_2
    = \big\| e^{-\beta \ma N_n/2} \big\|_2^{-1}
    \;\ge\; \Bigg(\sum_{k=0}^{m-1} \frac{(|\beta|/2)^k}{k!}\Bigg)^{-1}
    \;\ge\; \Big(1 + \tfrac{|\beta|}{2}\Big)^{-(m-1)}.
\end{equation}
The gain is therefore lower bounded by a factor in $\beta$. For frequencies (eigenvalues) in Jordan blocks of size $m=1$, the gain is unitary. For diagonalizable $\ma L$ (i.e. $m = 1$ for all blocks), the gain is unitary for all frequencies. 
\qed

\subsection{Minimal polynomial}
\label{app:minimal_polynomial}
This subsection recalls few propositions and corollaries of \citet{chan2026graph} applicable for diagonalizable $\ma L$ on the minimal polynomial of the dissipative (diffusion) and asymmetric (advection) operators. This determines the maximal layer depth, providing additional guidance for the design of the \model architecture.
\begin{prop} \label{prop:maximal-poly-diff}
    Let $N_C$ be the number of pairs of conjugate eigenvectors, the degree of the minimal polynomial of $\ma L^{\circ}$ is $\text{deg}(m_{\ma L^{\circ}}(\lambda))=N-N_C$.
\end{prop}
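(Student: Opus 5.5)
The plan is to reduce the statement to counting distinct eigenvalues of $\ma L^{\circ}$. For diagonalizable $\ma L=\ma U\ma\Lambda\ma U^{-1}$, Definition~\ref{def:spectral-conjugate} gives $\ma L^{\sharp}=\ma U\overline{\ma\Lambda}\ma U^{-1}$, so
\begin{equation*}
\ma L^{\circ}=\ma U\,\operatorname{diag}\bigl(\lR[n]\bigr)\,\ma U^{-1}.
\end{equation*}
Hence $\ma L^{\circ}$ is diagonalizable with spectrum $\sigma_R(\ma L)$, consistent with Corollary~\ref{cor:split}.

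For a diagonalizable matrix, the minimal polynomial is $\prod_{\rho}(\lambda-\rho)$, where $\rho$ runs over the \emph{distinct} eigenvalues. Its degree therefore equals the number of distinct values in $\{\lR[n]\}$, and the whole task is to count these.

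Since $\ma L$ is real, non-real eigenvalues come in conjugate pairs $\lambda,\bar\lambda$, with conjugate eigenvectors $\vc u,\bar{\vc u}$. Each such pair shares the real part $\lR$, so each of the $N_C$ pairs merges two eigenvalues of $\ma L$ into a single eigenvalue of $\ma L^{\circ}$. This removes exactly one from the count per pair. Starting from $N$ eigenvalues of $\ma L$ and subtracting $N_C$ gives $N-N_C$.

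The main obstacle is making explicit the genericity hypotheses implicit in \citet{chan2026graph}. We need two of them.
\begin{enumerate}[leftmargin=*]
\item $\ma L$ has $N$ simple eigenvalues; otherwise the starting count is already below $N$.
\item No coincidence $\lR[n]=\lR[m]$ occurs other than between conjugate partners; in particular, no real eigenvalue may share its value with the real part of a complex pair.
\end{enumerate}
I would state these as the standing assumptions under which the proposition holds. With them in place, the count is exact: the equivalence classes of $\{\lambda_n\}$ under $\lambda\sim\bar\lambda$ are in bijection with the distinct values of $\Re(\lambda)$, and there are $N-N_C$ such classes. Without them, the argument still yields the upper bound $\deg m_{\ma L^{\circ}}\le N-N_C$, which I would also record.
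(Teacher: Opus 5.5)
Your argument is correct. It is the natural route and presumably the one in \citet{chan2026graph}: the paper gives no proof of its own and only recalls the result for diagonalizable $\ma L$. Its content is exactly your count of the distinct values among the $\lR[n]$, since $\ma L^{\circ}=\ma U\operatorname{diag}(\lR[n])\ma U^{-1}$ is diagonalizable.

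Your two genericity hypotheses are genuinely necessary, because diagonalizability alone does not suffice. For the directed graph with edges $1\to2$ and $1\to3$, the spectrum is real, so $\ma L=\ma L^{\circ}$; it is diagonalizable with only two distinct eigenvalues, giving $\deg m_{\ma L^{\circ}}=2\neq 3=N-N_C$. Your fallback bound $\deg m_{\ma L^{\circ}}\le N-N_C$ is therefore the correct unconditional statement.
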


\begin{corollary} \label{corr:maximal-poly-diff}
    There exists a set of coefficients $c_k\in\mathbb{C}$ such that the matrix polynomial 
    \begin{equation}
        m_{\ma L^{\circ}}(\ma L^{\circ})=(\ma L^{\circ})^{N-N_C} + \sum_{k=0}^{N-N_C-1} c_k(\ma L^{\circ})^{k}=0.
    \end{equation}
\end{corollary}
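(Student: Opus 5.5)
The corollary is a direct restatement of Proposition~\ref{prop:maximal-poly-diff} through the defining property of the minimal polynomial. I would proceed as follows.

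First, I would record the spectral structure of $\ma L^{\circ}$ in the diagonalizable case. Write $\ma L=\ma U\ma\Lambda\ma U^{-1}$. By Definition~\ref{def:spectral-conjugate} and Corollary~\ref{cor:split},
\begin{equation}
\ma L^{\circ}=\tfrac12\bigl(\ma L+\ma L^{\sharp}\bigr)=\ma U\,\Re(\ma\Lambda)\,\ma U^{-1}.
\end{equation}
So $\ma L^{\circ}$ is diagonalizable with real eigenvalues $\lR[n]$.

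Second, I would write down the minimal polynomial. For a diagonalizable matrix it is the product of linear factors over the distinct eigenvalues:
\begin{equation}
m_{\ma L^{\circ}}(\lambda)=\prod_{\rho\in\{\lR[n]\}}(\lambda-\rho).
\end{equation}
It is monic by construction. Proposition~\ref{prop:maximal-poly-diff} gives its degree as $N-N_C$: each conjugate pair $\lambda,\bar\lambda$ of $\ma L$ collapses to a single real part in $\sigma(\ma L^{\circ})$. Expanding the product then yields coefficients $c_k$ for $k<N-N_C$ together with a unit leading coefficient. These $c_k$ are in fact real, since all the roots are real, which is stronger than the claimed $c_k\in\mathbb{C}$.

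Third, I would check that $m_{\ma L^{\circ}}(\ma L^{\circ})=\ma 0$. This follows by conjugating with $\ma U$:
\begin{equation}
m_{\ma L^{\circ}}(\ma L^{\circ})=\ma U\,\operatorname{diag}\bigl(m_{\ma L^{\circ}}(\lR[n])\bigr)\,\ma U^{-1}.
\end{equation}
Every diagonal entry vanishes because each $\lR[n]$ is a root of $m_{\ma L^{\circ}}$. Equivalently, one can invoke the general fact that a minimal polynomial annihilates its matrix. Expanding the product gives exactly the displayed identity.

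The corollary itself involves no real difficulty. The only substantive step is the degree count, which is taken from Proposition~\ref{prop:maximal-poly-diff}. That count implicitly assumes the real parts are otherwise distinct: coincidences beyond conjugate pairs would only lower the degree, and the annihilation identity would still hold with the smaller degree.
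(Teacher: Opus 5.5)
Your proposal is correct and takes the same route the paper implicitly relies on: the paper gives no separate proof and simply recalls this corollary from prior work for diagonalizable $\ma L$, and your argument is the annihilation property of the monic minimal polynomial of the diagonalizable $\ma L^{\circ}=\ma U\,\Re(\ma\Lambda)\,\ma U^{-1}$, with the degree $N-N_C$ supplied by Proposition~\ref{prop:maximal-poly-diff}. Your closing caveat can be sharpened: even when real parts coincide beyond conjugate pairs, the displayed identity still holds at degree exactly $N-N_C$, either by multiplying $m_{\ma L^{\circ}}$ by a power of $\lambda$ or by taking $\prod(\lambda-\rho)$ over the $N-N_C$ real parts (one per conjugate pair and one per real eigenvalue), so only the label ``minimal'' would fail.
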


\begin{prop} \label{prop:maximal-poly-adv}
    Let $N_Z$ be the number of eigenvalues with zero imaginary part, then it holds that the degree of the minimal polynomial of $\ma L^{\uparrow}$ is $\text{deg}(m_{\ma L^{\uparrow}}(\lambda))=N-N_Z+1$.
\end{prop}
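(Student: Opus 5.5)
The plan is to reduce the statement to counting distinct eigenvalues, using that a diagonalizable matrix has a minimal polynomial with only simple roots. Under the standing assumption of this subsection that $\ma L$ is diagonalizable, $\ma L=\ma U\ma\Lambda\ma U^{-1}$. By Corollary~\ref{cor:split} and Definition~\ref{def:spectral-conjugate},
\begin{equation*}
\ma L^{\uparrow}=\tfrac12\ma U(\ma\Lambda-\overline{\ma\Lambda})\ma U^{-1}=\ma U\,\operatorname{diag}\bigl(j\lI[n]\bigr)\,\ma U^{-1}.
\end{equation*}
So $\ma L^{\uparrow}$ is itself diagonalizable, with spectrum $\sigma_I(\ma L)$.

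For a diagonalizable matrix, the minimal polynomial is $\prod_{\mu}(\lambda-\mu)$, where $\mu$ ranges over the \emph{distinct} eigenvalues. The first step is to recall this standard fact: the product annihilates each eigenvector, and no proper divisor does. It follows that
\begin{equation*}
\deg m_{\ma L^{\uparrow}}=\bigl|\{\,j\lI[n]:n=0,\dots,N-1\,\}\bigr|.
\end{equation*}

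The second step counts this set by splitting the eigenvalues of $\ma L$ into two groups.
\begin{itemize}
\item The $N_Z$ eigenvalues with $\lI[n]=0$ all map to the single value $0$, so they contribute exactly one element. This group is non-empty, since the Laplacian always has the DC eigenvalue $\lambda=0$; hence it contributes $1$ rather than $0$.
\item The remaining $N-N_Z$ eigenvalues are non-real and come in conjugate pairs $\lambda,\bar\lambda$. Their images under $z\mapsto j\Im z$ are $\pm j\lI[n]$, which are distinct from each other because $\lI[n]\neq0$. Provided no two non-real eigenvalues share an imaginary part, these images are also distinct across pairs, giving $N-N_Z$ further distinct values.
\end{itemize}
Summing the two groups gives $N-N_Z+1$.

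The main obstacle is that last genericity requirement. Diagonalizability alone does not prevent two distinct non-real eigenvalues, say $a+jb$ and $a'+jb$, from sharing an imaginary part; that would collapse them and lower the degree. I would handle this by stating explicitly the hypothesis implicit in \citet{chan2026graph}: the non-real eigenvalues have pairwise distinct imaginary parts, which holds for generic weights. This is the exact analogue of the condition behind Proposition~\ref{prop:maximal-poly-diff}, where only conjugate pairs are allowed to coincide in real part. Under that hypothesis the count is exact. Without it, the same argument shows only $\deg m_{\ma L^{\uparrow}}\le N-N_Z+1$.
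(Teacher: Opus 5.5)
The paper does not prove this proposition. It is recalled in the appendix from \citet{chan2026graph}, under the standing assumption that $\ma L$ is diagonalizable, with no argument given. Your route is the natural one and it is correct. Since $\Lu=\ma U\operatorname{diag}(j\lI[n])\ma U^{-1}$ is diagonalizable, $\deg m_{\Lu}$ is the number of distinct values $j\lI[n]$. The real eigenvalues all collapse to the single root $0$. This group is non-empty because $\ma L\mathbf{1}=\mathbf{0}$, and that is exactly where the $+1$ comes from. The non-real eigenvalues contribute at most $N-N_Z$ further roots.

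Your genericity hypothesis is not an optional convenience: the proposition as printed is false without it, and the paper's own running example shows this. Take the unit-weight directed cycle $\mathcal{C}_6$, where $\lR[n]=1-\cos\vct\theta[n]$ and $\lI[n]=-\sin\vct\theta[n]$ with $\vct\theta[n]=n\pi/3$. Then $\sigma(\ma L)=\{0,\,2,\,\tfrac12\pm j\tfrac{\sqrt3}{2},\,\tfrac32\pm j\tfrac{\sqrt3}{2}\}$ and $N_Z=2$, so the formula predicts degree $5$. However, $\Lu=\tfrac12(\ma W^T-\ma W)$ is normal with spectrum $\{0,\pm j\tfrac{\sqrt3}{2}\}$. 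Its minimal polynomial is therefore $\lambda(\lambda^2+\tfrac34)$, of degree $3$. The same collapse occurs on $\mathcal{C}_N$ for every even $N\ge 6$, since $\sin\theta=\sin(\pi-\theta)$.

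So what you have proved is the correct version of the statement: equality under your hypothesis, and $\deg m_{\Lu}\le N-N_Z+1$ unconditionally. The only tightening I would make is to phrase the hypothesis as ``the non-real eigenvalues of $\ma L$ are simple and have pairwise distinct imaginary parts.'' That way repeated non-real eigenvalues are excluded explicitly rather than only through counting with multiplicity. An example is two disjoint copies of the same cycle, which is still diagonalizable.
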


\begin{corollary} \label{corr:maximal-poly-adv}
    There exists a set of coefficients $c_k\in\mathbb{C}$ such that the matrix polynomial 
    \begin{equation}
        m_{\ma L^{\uparrow}}(\ma L^{\uparrow})=(\ma L^{\uparrow})^{N-N_Z+1} + \sum_{k=0}^{N-N_Z} c_k(\ma L^{\uparrow})^{k}=0.
    \end{equation}
\end{corollary}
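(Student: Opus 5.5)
The plan is to obtain the corollary directly from Proposition~\ref{prop:maximal-poly-adv}, using only the definition of the minimal polynomial. For any square matrix $\ma A$, the minimal polynomial $m_{\ma A}$ is the unique monic polynomial of least degree with $m_{\ma A}(\ma A)=\ma 0$. It exists because the annihilating polynomials form a nonzero ideal of $\C[\lambda]$: by Cayley--Hamilton, the characteristic polynomial lies in that ideal. Applying this to $\ma A=\ma L^{\uparrow}$, Proposition~\ref{prop:maximal-poly-adv} gives $\deg m_{\ma L^{\uparrow}}=N-N_Z+1$. Writing out the monic polynomial,
\begin{equation*}
m_{\ma L^{\uparrow}}(\lambda)=\lambda^{N-N_Z+1}+\sum_{k=0}^{N-N_Z}c_k\lambda^k ,\qquad c_k\in\C,
\end{equation*}
and evaluating at $\ma L^{\uparrow}$ yields the displayed identity $m_{\ma L^{\uparrow}}(\ma L^{\uparrow})=\ma 0$. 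That is the whole argument; the coefficients $c_k$ are simply those of $m_{\ma L^{\uparrow}}$.

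As a consistency check on the degree, I would recompute it spectrally under the standing diagonalizability assumption. If $\ma L=\ma U\ma\Lambda\ma U^{-1}$, then $\ma L^{\uparrow}=\ma U\,(j\,\Im\ma\Lambda)\,\ma U^{-1}$ is diagonalizable. Its minimal polynomial is therefore $\prod_{\mu}(\lambda-\mu)$, taken over the distinct values $\mu\in\sigma_I(\ma L)$. The $N_Z$ eigenvalues with zero imaginary part all collapse to the single value $0$. The remaining $N-N_Z$ eigenvalues contribute their values $j\lI[n]$, which are distinct in the generic situation that Proposition~\ref{prop:maximal-poly-adv} implicitly presumes. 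The count is then $(N-N_Z)+1$, matching the proposition.

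The same reasoning also shows that the $c_k$ can be taken real, although the corollary only asserts $c_k\in\C$. Since $\ma L^{\uparrow}$ is real by Corollary~\ref{cor:split}, its nonzero eigenvalues come in conjugate pairs, so the product above has real coefficients.

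No step is genuinely hard here. The only point requiring care is that the corollary inherits the assumptions of Proposition~\ref{prop:maximal-poly-adv}, namely diagonalizability and genericity of the imaginary parts. The argument itself uses nothing beyond the existence of a monic annihilating polynomial of the stated degree.
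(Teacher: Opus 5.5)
Your proposal is correct and matches the paper, which states this corollary without a separate proof (it is recalled from \citet{chan2026graph}) as simply the monic expansion of the degree-$(N-N_Z+1)$ minimal polynomial given by Proposition~\ref{prop:maximal-poly-adv}. Your spectral sanity check is also sound. It rightly makes explicit the hypotheses the proposition leaves implicit: diagonalizability, distinct nonzero imaginary parts, and $N_Z\ge 1$, which holds here because $\ma L\mathbf 1=\mathbf 0$.
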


\subsection{Justification on having both sum and \filtername Graph Filters}

In \citet{chan2026graph}, the sum and \filtername graph kernels are designed to perform different types of filtering. It is important to first note that the image of $h_{\sumf}$ and $h_{\ratf}$ are not mutually inclusive, and span different subspaces. Additionally, authors of \citet{chan2026graph} have shown that $h_{\sumf}$ are better suited for diffusive low-pass filtering, while $h_{\ratf}$ are better suited for advective low-pass filtering and phase manipulation \citet{chan2025hilbert, 11626552}. Finally they can be composed disregarding the order of composition, as the applications of both kernels to $\ma L$ commute.



\subsection{\model Combined architecture: \modelC}
\label{app:TwinSpecGCN-combined}
Following our sum and \filtername graph filters proposed in~\eqref{eq:graph-sum-filter}, one can simply combine both graph filters as they span different subspaces. Denote $\ma X^{(l)}$ the node features of dimension $N \times C$ (nodes times channels) at the $l$-th layer. The $(l+1)$ layer update reads as:
\begin{equation}
\label{eq:TwinSpecGCN-comb}
\begin{split}
    \ma X^{(l+1)} = \psi\Bigl(&\nu\sum_{k=0}^{K_1} T_k(\wLc)\ma X^{(l)}\ma \Theta_k^{(l, \circ)} 
    + (1-\nu)\sum_{k=0}^{K_2} j^k\, T_k(\wLu)\ma X^{(l)}\ma \Theta_k^{(l, \uparrow)} \\
    &+ \rho\sum_{k=0}^{K} j^k\, T_k(\widetilde{\ma Z})\ma X^{(l)}\ma \Theta_k^{(l)} + \ma B^{(l)}\Bigr),
\end{split}
\end{equation}
where $\psi$ is a nonlinear function, $\ma \Theta_{k}^{(l,\circ)}, \ma \Theta_{k}^{(l,\uparrow)}$ and $\ma \Theta_{k}^{(l)}$ are respectively learnable real weight matrices, $\ma B^{(l)}$ is a learnable bias term and learnable parameters $\nu,\rho\in[0,1]$. 

\subsection{Conceptual comparison with advection diffusion graph neural networks}
\label{app:advdiff}
As discussed in Section~\ref{sec:related-works}, for diagonalizable shifts, \eqref{eq:spectral-split} coincides with the graph diffusion and advection operators of \citet{chan2026graph} (separate discussion in following paragraph). Hence, we compare conceptually the graph diffusion and advection operators obtained through our construct for diagonalizable operators with three related works. \citet{berlureau2026advection} reweight the undirected Laplacian by a learned potential field, which remains symmetric and therefore anisotropically diffusive. ADR-GNN \citep{eliasof2024feature} learns the edges of its advection operator, following \citet{chapman2015advection}, whose construction conserves mass rather than emulating transport and is neither skew-symmetric nor read off the input graph. AdvDIFFormer \citep{wu2023supercharging} models diffusion by global attention and advection by message passing along directed edges. In each case the directional term is an architectural addition to a dissipative model and carries no purely imaginary spectrum.

While similar to the work from \citet{chan2026graph}, crucial differences include that our framework applies to non-diagonalizable operators, provide foundations in non-holomorphic calculus, and address problems of oversmoothing and receptive field reach. It is also in practice more stable to use thanks to our contribution in its Chebyshev reparameterization. Finally \citet{chan2026graph} belongs to directed GSP literature and does not tackle topics of neural networks.

\subsection{On polynomial interpolation of the spectral conjugate}
\label{app:finite-polynomial-interpolation}
 
On a finite and fixed graph, any spectral window is exactly realizable by a polynomial
in the shift operator. If $\ma L$ has distinct eigenvalues $\lambda_1,\dots,\lambda_K$ and minimal polynomial of degree $m$, then $h(\ma L)=p_{\ma L}(\ma L)$ for the unique interpolant of degree $\le m-1$ matching the data $\partial^{m}h(\lambda_j)$ \cite[Prop.~5.1, 5.3]{nevanlinna2018non}. Non-holomorphicity of $h$ supposedly could be overcome. We show by explicit computation that the resulting coefficients are nonetheless unusable, whereas for holomorphic they remain stable.
 
In both experiments below we fit a quadratic $p(z)=a_0+a_1z+a_2z^2$ through three eigenvalues by solving the Vandermonde system, and compare two target kernels: the holomorphic $h(z)=z^2+1$ and the conjugation map $\tau(z)=\bar z$, for which $\bar\partial\tau=1\neq0$. We write $z_0=1+i$ and let $\varepsilon>0$.
 
\paragraph{Experiment 1: two close eigenvalues. } Take the eigenvalues $\lambda_1=0$, $\lambda_2=z_0$ and $\lambda_3=z_0+\varepsilon d$ with $|d|=1$, so that $\lambda_3$ lies at distance $\varepsilon$ from $\lambda_2$ in the direction $d$. As $\varepsilon\to0$ the node set converges to $\{0,z_0,z_0\}$ for every choice of $d$.
 
For the holomorphic target kernel the interpolant is exact and independent of both
$\varepsilon$ and $d$,
\begin{equation}
  p(z)=z^2+1,\qquad (a_0,a_1,a_2)=(1,0,1).
\end{equation}
For $\tau$ one obtains $a_0=0$ and
\begin{align}\label{eq:exp1}
  d=1:\quad
  a_1=\frac{i(-\varepsilon-3-i)}{\varepsilon+1+i},\quad
  a_2=\frac{1+i}{\varepsilon+1+i};
  \\
  d=i:\quad
  a_1=\frac{i(-\varepsilon-3+i)}{\varepsilon+1-i},\quad
  a_2=\frac{1+i}{\varepsilon+1-i}.
\end{align}
These coefficients are bounded for small $\varepsilon$, but the denominators differ depending on the direction $d$
\begin{equation}
  \lim_{\varepsilon\to0}(a_1,a_2)=
  \begin{cases}
    (-1-2i,\;1), & d=1,\\
    (\;\;1-2i,\;i), & d=i.
  \end{cases}
\end{equation}
From this example, we see that small numerical inaccuracies from eigensolvers may lead to two completely different filters, caused by the direction of approach for which, smaller distance is ineffective.
 
\paragraph{Experiment 2: three close eigenvalues. }
 Take the eigenvalues $\lambda_1=z_0$, $\lambda_2=z_0+\varepsilon$, $\lambda_3=z_0+i\varepsilon$, so all three nodes lie within $\varepsilon$ of $z_0$. Again the holomorphic target gives $(a_0,a_1,a_2)=(1,0,1)$ for every $\varepsilon$, while for $\tau$
\begin{equation}\label{eq:exp2}
  a_0(\varepsilon)=\frac{2(-1+i)}{\varepsilon},\qquad
  a_1(\varepsilon)=-\frac{i(\varepsilon+4)}{\varepsilon},\qquad
  a_2(\varepsilon)=\frac{1+i}{\varepsilon},
\end{equation}
so that $\max_k|a_k|=(\varepsilon+4)/\varepsilon\sim4\varepsilon^{-1}$.
 
\begin{table}[h]
\centering
\begin{tabular}{lcccc}
\toprule
& $\varepsilon=10^{-1}$ & $10^{-2}$ & $10^{-3}$ & $10^{-4}$\\
\midrule
$\max_k|a_k|$, target $z^2+1$ & $1.0$ & $1.0$ & $1.0$ & $1.0$\\
$\max_k|a_k|$, target $\bar z$ & $4.1\times10^{1}$ & $4.0\times10^{2}$
  & $4.0\times10^{3}$ & $4.0\times10^{4}$\\
\bottomrule
\end{tabular}
\caption{Coefficient magnitude under a cluster of three.}
\end{table}
Beyond the non-uniquely determined interpolation from Experiment.1, the interpolating coefficients may also suffer from arbitrarily large values, making it numerically unstable. While we show ill-defined interpolation and numerical instability for the case of $\tau$ (spectral conjugate), it surmised to be extended to general non-holomorphic kernels.  Either of these examples are realistic especially for non-diagonalizable operators where there necessarily exist an eigenvalue with algebraic multiplicity greater than $1$.

\subsection{Combine with anti-symmetric weight}
\label{app:anti-symmetric weights}

A-DGN~\citep{gravina2023adgn} models each node feature $\vc x_n(t) \in \mathbb R^{C}$ as the state of an ODE whose channel-mixing matrix is constrained to be anti-symmetric (their equation ~4):
\begin{equation}
    \partial_t \vc x_n(t) = \sigma\big((\ma V - \ma V^\top)\,\vc x_n(t) + \Phi(\ma X(t), \mathcal N_n) + \vc b\big),
\end{equation}
where $\ma V \in \mathbb R^{C \times C}$ is learnable and $\Phi$ aggregates over the neighborhood $\mathcal N_n$. Anti-symmetry places the eigenvalues of $\ma V - \ma V^\top$ on the imaginary axis, which is the source of A-DGN's non-dissipative behavior. A forward-Euler discretization with step $\epsilon$ and a small damping $\gamma > 0$ for numerical stability (their equation~5) gives, in matrix form,
\begin{equation}
    \ma X^{(l+1)} = \ma X^{(l)} + \epsilon\, \psi\big(\ma X^{(l)}(\ma V - \ma V^\top - \gamma \ma I) + \Phi(\ma X^{(l)}) + \ma B\big).
\end{equation}

The two constructions act on different axes: A-DGN constrains how channels are mixed, whereas \modelR constrains how information moves across nodes. They therefore combine by using the ratio filter of~\eqref{eq:TwinSpecGCN-ratio} as the aggregation $\Phi$:
\begin{equation}
    \ma X^{(l+1)} = \ma X^{(l)} + \epsilon\, \psi\Big(\ma X^{(l)}(\ma V - \ma V^\top - \gamma \ma I) + \sum_{k=0}^{K} j^k\, T_k(\widetilde{\ma Z})\,\ma X^{(l)}\ma \Theta_k^{(l)} + \ma B^{(l)}\Big),
\end{equation}
where $\ma V$ is the anti-symmetric channel-mixing weight and $\ma \Theta_k^{(l)}$ are the filter weights of \modelR. \modelS and \modelC combine in the same way, by substituting~\eqref{eq:TwinSpecGCN-sum} or~\eqref{eq:TwinSpecGCN-comb} for $\Phi$. Whether the non-dissipativity guarantees of both components hold jointly is not immediate. We leave this analysis, and the empirical evaluation of the combination, to future work.
\vfill
\pagebreak

\section{Experimental details}
\label{app:exp_details}
\subsection{Graph transfer task}
\label{app:graph-transfer-task-details}
\paragraph{General settings.}
The task requires the model to learn a mapping from a source node to a target node on a graph of size $N$. Denote $\vc x[n]\in\mathbb{R}^C$ to be the feature vector at node $n$. Following~\citet{bodnar2021weisfeiler} the node features are set as follows:
\begin{equation}
  \vc x[n] =
  \begin{cases}
    \vc e_k, & \text{if } n\text{ is source node},\\
    \vc 0, & \text{if } n\text{ is target node},\\
    \vc 1 + \vct \epsilon , & \text{otherwise}.
  \end{cases}
\end{equation}
where $\vc e_k$ is set to be a one-hot vector such that $\vct e_k[n] = \delta_{k=n}$ with integer $k\in\{1,\ldots, C\}$ being randomly selected and $\vct \epsilon \sim \mathcal{N}(\vc 0, 0.1 \ma I_N)$. The source node and target node being selected such that the distance between them is $d(n_\text{source}, n_\text{target}) = r$. The ground truth for the target node's feature is set to be the same as the source node's. The model is evaluated through mean squared error (MSE) between the predicted target node feature and the ground truth target node feature. The predicted class is the index with highest value (since all others should be receiving less energy) and it is compared with the ground truth class $k$, leading to an accuracy for each sample. The graphs used for the task are shown in Figure~\ref{app:graph-transfer-task-graphs}. Without loss of generality, in our experiments we select node feature of dimension $C=5$, consider $200$ samples, split into (60\%) training, (20\%) validation and (20\%) testing. The source node is randomly selected, while the target node is selected such that the distance between them is $r=10$ which is the graph diameter.

\begin{figure}[hbt!]
    \centering
       \includegraphics[width=1\linewidth]{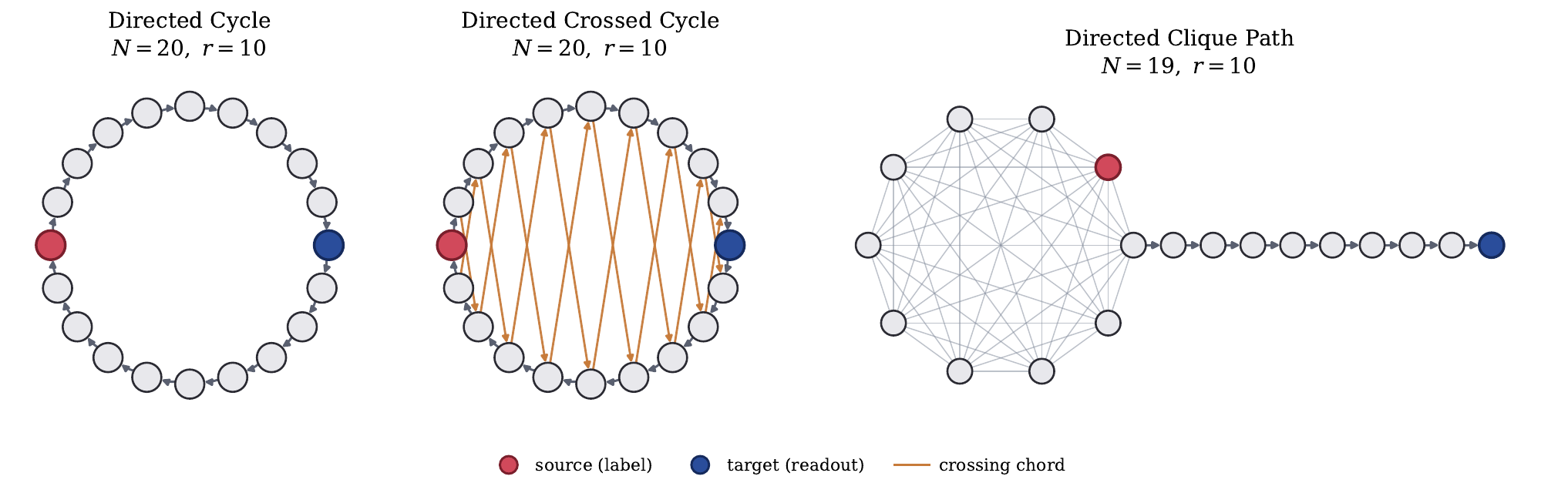}
  \caption{Graphs used for the task (same as in~\citet{gravina2025oversquashing}).}
  \label{app:graph-transfer-task-graphs}
\end{figure}

\paragraph{Details on Section~\ref{sec:receptive-oversmoothing}.}
For depth experiment: source and target are held a fixed distance of $10$ apart, and the sweep varies only network depth. For every topology, model and depth $\ma K\in \{1,\ldots,10\}$ we run an independent hyperparameter search (Appendix~\ref{app:hyperparam-search-graph-transfer}), so no depth is handicapped by settings tuned at another. Each configuration is trained on $200$ sampled graphs split $60\%/20\%/20\%$ into $120$ training, $40$ validation and $40$ test samples under a fixed data seed, with the one-hot label placed on the source node and all other nodes carrying an uninformative constant perturbed by Gaussian noise ($\sigma=0.1$). Training minimizes the mean squared error between the target node's output and the one-hot label, plus a suppression term penalizing the squared output of every non-target node, which prevents the network from spreading the label rather than transporting it. We report accuracy: a prediction counts as correct when its largest coordinate matches the label's. Runs are capped at $300$ epochs and evaluated every $8$ epochs, stopping early after $30$ consecutive evaluations (i.e. $240$ epochs) without an improvement in validation MSE.  Each configuration is scored at its lowest-validation-MSE epoch and the test accuracy is read at that same epoch, so the test split never informs selection. Every reported figure is the mean and standard deviation over $10$ random initializations.

\paragraph{Hyperparameter search.}
\label{app:hyperparam-search-graph-transfer}

The search uses Optuna's Tree-structured Parzen Estimator \citep{optuna_2019}, run independently for each (topology, model, depth) triple at $48$ trials, and selects on validation MSE. Depth is swept exhaustively over $\{1,\ldots,10\}$ rather than sampled, so each model is tuned $480$ times per topology. The searched ranges are: hidden dimension $\in\{4,8,16,32\}$, non-linearity $\in\{\text{ReLU},\text{None}\}$, normalization $\in\{\text{True},\text{False}\}$, bias $\in\{\text{True},\text{False}\}$, and learning rate $\in \{0.001,0.005\}$. Dropout and weight decay are held at $0$ throughout: on a task this small, regularization only slows transport without improving generalization. Three further axes are searched only by the architectures that read them — whether the branch-mixing weights $\alpha$ and $\beta$ are learned or fixed at $0.5$ (employed in Dir-GNN, FaberNet, TwinS-GCN), and MagNet's charge $q\in\{0,0.05,0.10, 0.15, 0.20,0.25\}$. While introduced as the standard combinatorial directed Laplacian, options for operator include $\{\text{left}, \text{combinatorial}, \text{right}, \text{symmetric}\}$ normalized directed Laplacian. Cluster tolerance (Appendix~\ref{app:computation-spectral-conjugate}) was set to $1e^{-9}$, as we expect no distinct eigenvalue clusters. A dash in the table (Table~\ref{tab:hyperparam-search-graph-transfer}) means that the hyperparameter is not relevant to the current architecture. Accuracies are the mean and standard deviation over random initializations. For the same model, when several depths achieve perfect accuracy, the table reports the shallowest, since on a task that many models saturate the meaningful winner is the cheapest one. Corresponding run-times for each of these models are shown in Table~\ref{tab:runtime-graph-transfer-best}. For Dir-GNN, FaberNet, MagNet \model, we consider only the degree of each layer to be $1$ (to fully isolate the effect of depth), meaning that we set $K=1$ for Dir-GNN, FaberNet, MagNet, while for \model, which start at index $K=1$ (programming convention) for the identity matrix, we set $K=2$ to represent degree $1$ polynomial.

\begin{table}[H]
\centering
\caption{Best hyperparameter configuration per topology for the {\bf graph transfer task}, selected over depth and all searched axes by validation MSE. A dash marks an axis the architecture does not take. $\alpha$/$\beta$ report whether the branch-mixing weights are learned; $q$ is MagNet's charge. }
\label{tab:hyperparam-search-graph-transfer}
\resizebox{\columnwidth}{!}{%
\begin{tabular}{clcccccccccc}
\toprule
& \textbf{Model} & \textbf{depth} & \textbf{\#hidden} & \textbf{non-lin.}
  & \textbf{norm.} & \textbf{bias} & \textbf{lr}
  & $\boldsymbol{\alpha}$ & $\boldsymbol{\beta}$ & $\boldsymbol{q}$
  & \textbf{Accuracy}\\
\midrule
\multicolumn{12}{l}{\emph{Directed Cycle}}\\
\midrule
& GCN & $1$ & $8$ & ReLU & No & No & $0.005$ & -- & -- & -- & $0.325 \pm 0.054$\\
& MagNet & $6$ & $4$ & -- & -- & Yes & $0.005$ & -- & -- & $0.05$ & $0.267 \pm 0.031$\\
& FaberNet & $10$ & $32$ & -- & -- & Yes & $0.005$ & Yes & -- & -- & $1.000 \pm 0.000$\\
& Dir-GNN & $10$ & $16$ & None & Yes & No & $0.005$ & Yes & -- & -- & $1.000 \pm 0.000$\\
& ADR-GNN & $2$ & $4$ & -- & -- & -- & $0.005$ & -- & -- & -- & $0.283 \pm 0.024$\\
& AdvDIFFormer & $1$ & $16$ & -- & -- & -- & $0.005$ & -- & -- & -- & $1.000 \pm 0.000$\\
\cmidrule(lr){2-12}
& \modelS & $2$ & $4$ & None & Yes & No & $0.001$ & Yes & -- & -- & $0.258 \pm 0.042$\\
& \modelR & $3$ & $8$ & None & Yes & No & $0.001$ & -- & -- & -- & $1.000 \pm 0.000$\\
& \modelC & $3$ & $16$ & None & Yes & Yes & $0.005$ & No & Yes & -- & $1.000 \pm 0.000$\\
\midrule
\multicolumn{12}{l}{\emph{Directed Crossed Cycle}}\\
\midrule
& GCN & $10$ & $32$ & ReLU & No & Yes & $0.005$ & -- & -- & -- & $0.467 \pm 0.066$\\
& MagNet & $3$ & $8$ & -- & -- & No & $0.001$ & -- & -- & $0.1$ & $0.233 \pm 0.012$\\
& FaberNet & $10$ & $16$ & -- & -- & Yes & $0.005$ & Yes & -- & -- & $1.000 \pm 0.000$\\
& Dir-GNN & $10$ & $32$ & ReLU & No & No & $0.005$ & Yes & -- & -- & $1.000 \pm 0.000$\\
& ADR-GNN & $6$ & $8$ & -- & -- & -- & $0.005$ & -- & -- & -- & $0.300 \pm 0.020$\\
& AdvDIFFormer & $1$ & $32$ & -- & -- & -- & $0.005$ & -- & -- & -- & $1.000 \pm 0.000$\\
\cmidrule(lr){2-12}
& \modelS & $2$ & $32$ & ReLU & Yes & Yes & $0.005$ & Yes & -- & -- & $1.000 \pm 0.000$\\
& \modelR & $2$ & $16$ & None & Yes & No & $0.001$ & -- & -- & -- & $1.000 \pm 0.000$\\
& \modelC & $2$ & $32$ & None & Yes & Yes & $0.001$ & No & No & -- & $1.000 \pm 0.000$\\
\midrule
\multicolumn{12}{l}{\emph{Directed Clique Path}}\\
\midrule
& GCN & $3$ & $16$ & None & Yes & Yes & $0.001$ & -- & -- & -- & $0.275 \pm 0.020$\\
& MagNet & $3$ & $4$ & -- & -- & Yes & $0.005$ & -- & -- & $0.0$ & $0.250 \pm 0.020$\\
& FaberNet & $10$ & $16$ & -- & -- & No & $0.001$ & Yes & -- & -- & $0.258 \pm 0.031$\\
& Dir-GNN & $10$ & $16$ & None & Yes & No & $0.005$ & Yes & -- & -- & $0.767 \pm 0.112$\\
& ADR-GNN & $10$ & $8$ & -- & -- & -- & $0.005$ & -- & -- & -- & $0.275 \pm 0.020$\\
& AdvDIFFormer & $1$ & $32$ & -- & -- & -- & $0.005$ & -- & -- & -- & $1.000 \pm 0.000$\\
\cmidrule(lr){2-12}
& \modelS & $2$ & $32$ & None & Yes & No & $0.005$ & Yes & -- & -- & $1.000 \pm 0.000$\\
& \modelR & $2$ & $16$ & ReLU & Yes & Yes & $0.005$ & -- & -- & -- & $1.000 \pm 0.000$\\
& \modelC & $2$ & $16$ & None & Yes & Yes & $0.001$ & Yes & Yes & -- & $1.000 \pm 0.000$\\
\bottomrule
\end{tabular}%
}
\end{table}

\subsection{Datasets benchmarking}
\paragraph{General settings.}
\label{app:datasets}
The benchmarking datasets used in the experiments are summarized in Table~\ref{tab:benchmarking-datasets-infos}. We evaluate on transductive node classification over directed graphs. Cornell, Texas and Wisconsin are small web-page networks that are strongly heterophilic (edge homophily 0.12, 0.06 and 0.17 respectively), so neighboring nodes usually carry different labels and edge direction is informative. We use the provided ten train/validation/test splits distributed with the datasets. Every configuration is trained on the training nodes for at most 300 epochs, evaluated every 5 epochs, with early stopping after 20 evaluations without validation improvement. Following standard practice, we score each run at the epoch of highest validation accuracy and report the test accuracy at that same epoch, so the test nodes never influence model selection. Each reported figure is the mean and standard deviation of classification accuracy over 30 runs — the ten splits crossed with three random initializations.

\begin{table}[H]
\centering
\caption{Statistics of the node classification benchmarks.}
\label{tab:benchmarking-datasets-infos}
\resizebox{0.6\columnwidth}{!}{
\begin{tabular}{lrrrr}
\toprule
\textbf{Dataset} & \textbf{\#Nodes} & \textbf{\#Edges} & \textbf{\#Feat.} & \textbf{\#Cls.} \\
\midrule
Cornell                & 183      & 295       & 1{,}703 & 5  \\
Texas                  & 183      & 309       & 1{,}703 & 5  \\
Wisconsin              & 251      & 499       & 1{,}703 & 5  \\
Chameleon$^{\dagger}$  & 2{,}277  & 36{,}101  & 2{,}325 & 5  \\
Squirrel$^{\dagger}$   & 5{,}201  & 217{,}073 & 2{,}089 & 5  \\
\bottomrule
\end{tabular}
}
\\[4pt]
{\footnotesize $^{\dagger}$ Original versions. Filtered variants
\citep{platonov2023critical}: Chameleon (890 / 13{,}584),
Squirrel (2{,}223 / 46{,}998).}
\end{table}

Among the results provided in Table~\ref{tab:benchmark-results} here are the ones retrieved from literature:
\begin{itemize}
    \item Results of GCN, GAT on Wisconsin, Cornell, Texas are taken from~\citet{eliasof2024feature} and on Chameleon-F, Squirrel-F, comes from~\citet{platonov2023critical}.
    \item Results of MLP on Chameleon-F, Squirrel-F, comes from~\citet{platonov2023critical} (ResNet).
\end{itemize}
All remaining entries are computed from re-implementation following original repository. Disclaimer: the results reproduced for MLP, MagNet, ADR-GNN on Wisconsin, Cornell, Texas may differ from accuracies found in their respective original paper~\citet{zhang2021magnet},~\citet{eliasof2024feature}. We have noticed in their open repository that the employed Wisconsin, Cornell, Texas datasets were from the pre-2022 geom-gcn~\citep{pei2020geomgcn} release while in our experiments we employ the latest release~\citep{geomgcn_repo}, hence we reproduce the results on these datasets.

\paragraph{Hyperparameter search.} 
\label{app:hyperparam-search-benchmarking}
Selected hyperparameters for the benchmarking datasets are reported in Table~\ref{tab:hyperparam-search-benchmarking}. The search uses Optuna's TPE sampler (multivariate), and configurations are selected on mean validation accuracy. The shared search space is: depth $\in\{2,3,4,5\}$, hidden dimension $\in\{32,64,128,256\}$, dropout $\in\{0,0.2,0.4,0.5,0.6\}$,  normalization and bias $\in\{\text{True},\text{False}\}$, learning rate $\in\{0.001,0.005,0.01,0.05\}$, weight decay $\in\{0,5.10^{-4},5.10^{-3}\}$, input feature scaling $\in\{\text{raw},\text{row},\text{std}\}$, jumping knowledge $\in\{\text{None},\text{max},\text{cat}\}$ and, for spectral models, order $K\in\{1,2,3,4\}$. The non-linearity is fixed to ReLU. Each model also searches its own parameters: MagNet's charge $q\in\{0,0.05,\dots,0.25\}$; the Dir-GNN and FaberNet mixing weight $\alpha\in\{0,0.5,1\}$ and whether it is learned;  FaberNet's zero-order term; the ADR-GNN step size $\in\{0.02,0.05,0.1,0.2\}$; and the AdvDIFFormer heads  $\in\{1,2,4\}$ and order $K_{\mathrm{ord}}\in\{1,2,3\}$. For \model, the search covers the branch-mixing weights $\alpha,\beta$, the Laplacian normalization $\in\{\text{left},\text{combinatorial},\text{right},\text{symmetric}\}$, and the eigenvalue-clustering tolerance $\in\{\text{default},10^{-6},10^{-3}\}$. Operators are rescaled by their maximum absolute entry. Next we discuss few differences in protocol: on WebKB, \model uses $72$ trials, against $96$ for MLP, MagNet, Dir-GNN and FaberNet and $72$ for ADR-GNN and AdvDIFFormer to find hyperparameters. On Chameleon-F and Squirrel-F, \model uses $32$ trials, against $248$ and $48$ respectively. \model thus has the smallest budget on every dataset. Its validation score, however, averages over all $10$ splits, whereas the baselines average over $3$ ($5$ for the attention-based baselines on Chameleon-F and Squirrel-F). Finally, for Texas and Squirrel-F the \model search was widened to depth $1$, $K\in\{5,\dots,8\}$, weight decay $10^{-2}$ and patience $50$, which accounts for the settings in Table~\ref{tab:hyperparam-search-benchmarking} that are not within the shared search space.

\begin{table}[H]
\centering
\caption{Hyperparameter search results for {\bf benchmarking datasets}. A dash marks an axis the architecture does not take. $\alpha$/$\beta$ report whether the branch-mixing weights are learned; $q$ is MagNet's charge. Cluster tol are $1e^{-9}$ unless specified.}
\label{tab:hyperparam-search-benchmarking}
\resizebox{\columnwidth}{!}{%
\begin{tabular}{clcccccccccccclc}
\toprule
& \textbf{Model} & \textbf{depth} & \textbf{\#hidden} & \textbf{dropout} & \textbf{norm.} & \textbf{bias} & \textbf{lr} & \textbf{w.d.} & \textbf{feat.} & \textbf{JK} & $\boldsymbol{K}$ & \textbf{learn} $\boldsymbol{\alpha}$ & \textbf{learn} $\boldsymbol{\beta}$ & \textbf{model-specific} & \textbf{Accuracy}\\
\midrule
\multicolumn{16}{l}{\emph{Chameleon-F}}\\
\midrule
& MagNet & $3$ & $128$ & $0.2$ & -- & Yes & $0.005$ & $0.005$ & raw & -- & $1$ & -- & -- & $q{=}0$ & $0.424 \pm 0.035$\\
& Dir-GNN & $5$ & $128$ & $0.5$ & No & Yes & $0.005$ & $0.0005$ & row & cat & -- & No & -- & $\alpha{=}0.5$ & $0.441 \pm 0.029$\\
& FaberNet & $4$ & $128$ & $0.6$ & -- & Yes & $0.01$ & $0.0005$ & row & max & $1$ & No & -- & $\alpha{=}0.5$, zero-ord.: No & $0.430 \pm 0.033$\\
& AdvDIFFormer & $4$ & $128$ & $0.4$ & -- & -- & $0.05$ & $0$ & std & -- & -- & -- & -- & heads $2$, $K_{\mathrm{ord}}{=}1$ & $0.415 \pm 0.036$\\
& ADR-GNN & $4$ & $64$ & $0.6$ & -- & -- & $0.01$ & $0.005$ & std & -- & -- & -- & -- & step $0.05$ & $0.461 \pm 0.041$\\
\cmidrule(lr){2-16}
& \modelS & $2$ & $256$ & $0.6$ & Yes & No & $0.001$ & $0$ & std & cat & $2$ & Yes & -- & right & $0.423 \pm 0.028$\\
& \modelR & $2$ & $256$ & $0.5$ & No & Yes & $0.001$ & $0.005$ & std & cat & $4$ & -- & -- & none, tol $10^{-6}$ & $0.442 \pm 0.032$\\
& \modelC & $2$ & $128$ & $0.4$ & Yes & No & $0.01$ & $0$ & std & none & $4$ & No & Yes & none, tol $10^{-3}$ & $0.416 \pm 0.036$\\
\midrule
\multicolumn{16}{l}{\emph{Squirrel-F}}\\
\midrule
& MagNet & $5$ & $128$ & $0.2$ & -- & Yes & $0.05$ & $0$ & raw & -- & $2$ & -- & -- & $q{=}0.05$ & $0.442 \pm 0.014$\\
& Dir-GNN & $4$ & $64$ & $0.5$ & No & Yes & $0.005$ & $0.0005$ & row & cat & -- & No & -- & $\alpha{=}0.5$ & $0.443 \pm 0.022$\\
& FaberNet & $3$ & $64$ & $0.6$ & -- & Yes & $0.001$ & $0.005$ & raw & None & $1$ & No & -- & $\alpha{=}0.5$, zero-ord.: No & $0.451 \pm 0.021$\\
& AdvDIFFormer & $2$ & $64$ & $0.6$ & -- & -- & $0.005$ & $0$ & std & -- & -- & -- & -- & heads $1$, $K_{\mathrm{ord}}{=}1$ & $0.414 \pm 0.016$\\
& ADR-GNN & $5$ & $64$ & $0.4$ & -- & -- & $0.05$ & $0.005$ & std & -- & -- & -- & -- & step $0.2$ & $0.441 \pm 0.020$\\
\cmidrule(lr){2-16}
& \modelS & $3$ & $64$ & $0.6$ & Yes & No & $0.001$ & $0.0005$ & std & cat & $4$ & Yes & -- & left, tol $10^{-3}$ & $0.396 \pm 0.015$\\
& \modelR & $2$ & $32$ & $0.6$ & Yes & No & $0.005$ & $0.005$ & std & None & $2$ & -- & -- & left, tol $10^{-3}$ & $0.402 \pm 0.018$\\
& \modelC & $2$ & $128$ & $0.6$ & Yes & Yes & $0.005$ & $0$ & std & none & $5$ & Yes & Yes & left & $0.401 \pm 0.016$\\
\midrule
\multicolumn{16}{l}{\emph{Texas}}\\
\midrule
& MLP & $4$ & $32$ & $0.4$ & -- & Yes & $0.01$ & $0.005$ & raw & -- & -- & -- & -- & -- & $0.822 \pm 0.043$\\
& MagNet & $2$ & $256$ & $0.2$ & -- & Yes & $0.01$ & $0.005$ & raw & -- & $1$ & -- & -- & $q{=}0.1$ & $0.853 \pm 0.046$\\
& Dir-GNN & $2$ & $64$ & $0.6$ & Yes & No & $0.05$ & $0.005$ & raw & cat & -- & Yes & -- & $\alpha{=}0$ & $0.705 \pm 0.065$\\
& FaberNet & $2$ & $64$ & $0.5$ & -- & Yes & $0.005$ & $0.005$ & raw & cat & $2$ & No & -- & $\alpha{=}1$, zero-ord.: Yes & $0.832 \pm 0.053$\\
& AdvDIFFormer & $3$ & $32$ & $0.2$ & -- & -- & $0.05$ & $0.0005$ & raw & -- & -- & -- & -- & heads $4$, $K_{\mathrm{ord}}{=}1$ & $0.818 \pm 0.064$\\
& ADR-GNN & $2$ & $64$ & $0.2$ & -- & -- & $0.05$ & $0.005$ & raw & -- & -- & -- & -- & step $0.02$ & $0.822 \pm 0.052$\\
\cmidrule(lr){2-16}
& \modelS & $2$ & $64$ & $0.5$ & Yes & Yes & $0.01$ & $0.0005$ & raw & max & $3$ & No & -- & left & $0.846 \pm 0.034$\\
& \modelR & $1$ & $128$ & $0.5$ & Yes & No & $0.01$ & $0.0005$ & row & cat & $8$ & -- & -- & comb. & $0.841 \pm 0.039$\\
& \modelC & $2$ & $256$ & $0.2$ & Yes & Yes & $0.01$ & $0.01$ & raw & cat & $8$ & Yes & No & left, tol $10^{-3}$ & $0.854 \pm 0.047$\\
\midrule
\multicolumn{16}{l}{\emph{Wisconsin}}\\
\midrule
& MLP & $4$ & $64$ & $0.4$ & -- & Yes & $0.01$ & $0.005$ & raw & -- & -- & -- & -- & -- & $0.852 \pm 0.050$\\
& MagNet & $2$ & $32$ & $0.4$ & -- & Yes & $0.01$ & $0.005$ & raw & -- & $1$ & -- & -- & $q{=}0.15$ & $0.834 \pm 0.040$\\
& Dir-GNN & $5$ & $64$ & $0.4$ & Yes & No & $0.05$ & $0.0005$ & raw & max & -- & No & -- & $\alpha{=}0.5$ & $0.674 \pm 0.060$\\
& FaberNet & $2$ & $256$ & $0.5$ & -- & Yes & $0.001$ & $0.005$ & raw & cat & $2$ & No & -- & $\alpha{=}1$, zero-ord.: Yes & $0.795 \pm 0.052$\\
& AdvDIFFormer & $2$ & $256$ & $0.4$ & -- & -- & $0.01$ & $0.0005$ & raw & -- & -- & -- & -- & heads $1$, $K_{\mathrm{ord}}{=}2$ & $0.765 \pm 0.048$\\
& ADR-GNN & $3$ & $64$ & $0.2$ & -- & -- & $0.05$ & $0.005$ & raw & -- & -- & -- & -- & step $0.02$ & $0.845 \pm 0.045$\\
\cmidrule(lr){2-16}
& \modelS & $2$ & $64$ & $0.2$ & Yes & No & $0.005$ & $0.0005$ & row & cat & $2$ & Yes & -- & right & $0.871 \pm 0.041$\\
& \modelR & $2$ & $256$ & $0.2$ & Yes & Yes & $0.005$ & $0.005$ & raw & cat & $3$ & -- & -- & right & $0.874 \pm 0.040$\\
& \modelC & $2$ & $64$ & $0$ & Yes & No & $0.01$ & $0.005$ & raw & max & $2$ & Yes & No & right & $0.868 \pm 0.038$\\
\midrule
\multicolumn{16}{l}{\emph{Cornell}}\\
\midrule
& MLP & $3$ & $64$ & $0.2$ & -- & Yes & $0.01$ & $0.005$ & raw & -- & -- & -- & -- & -- & $0.759 \pm 0.039$\\
& MagNet & $2$ & $32$ & $0.2$ & -- & Yes & $0.01$ & $0.005$ & raw & -- & $1$ & -- & -- & $q{=}0.25$ & $0.733 \pm 0.042$\\
& Dir-GNN & $2$ & $128$ & $0.2$ & Yes & No & $0.05$ & $0.005$ & std & None & -- & No & -- & $\alpha{=}0.5$ & $0.598 \pm 0.069$\\
& FaberNet & $3$ & $32$ & $0.2$ & -- & No & $0.01$ & $0.005$ & raw & cat & $4$ & No & -- & $\alpha{=}0$, zero-ord.: Yes & $0.726 \pm 0.047$\\
& AdvDIFFormer & $2$ & $128$ & $0.4$ & -- & -- & $0.01$ & $0.0005$ & raw & -- & -- & -- & -- & heads $2$, $K_{\mathrm{ord}}{=}1$ & $0.722 \pm 0.055$\\
& ADR-GNN & $2$ & $128$ & $0$ & -- & -- & $0.005$ & $0.005$ & raw & -- & -- & -- & -- & step $0.02$ & $0.714 \pm 0.044$\\
\cmidrule(lr){2-16}
& \modelS & $2$ & $64$ & $0.5$ & Yes & Yes & $0.01$ & $0.0005$ & raw & max & $3$ & No & -- & sym. & $0.751 \pm 0.032$\\
& \modelR & $2$ & $64$ & $0.5$ & Yes & Yes & $0.01$ & $0.0005$ & raw & max & $3$ & -- & -- & right & $0.768 \pm 0.030$\\
& \modelC & $2$ & $64$ & $0.5$ & Yes & Yes & $0.01$ & $0.0005$ & raw & max & $3$ & No & No & right & $0.757 \pm 0.040$\\
\bottomrule
\end{tabular}%
}
\end{table}

\vfill
\pagebreak

\section{Models Runtime}

\paragraph{Graph transfer task.}
We report the runtime of the models used in the graph transfer task. The runtime are provided for all different graphs. The runtime are measured on Apple M1 Max. Table~\ref{tab:runtime-graph-transfer-best} shows the minimal needed runtime/parameters of each model to solve the graph transfer task. Table~\ref{tab:runtime-graph-transfer-comparable} shows for comparable configuration, the models runtime/parameters.

\vfill

\begin{table}[H]
\centering
\caption{Runtime and parameter count per topology for the \textbf{graph transfer
task} as specified in Appendix~\ref{app:graph-transfer-task-details}, for the \textbf{best configuration} selected in Table~\ref{tab:hyperparam-search-graph-transfer}. Train time is per-epoch, averaged over $100$ epochs (train repeats is $5$) and inference time (infer repeats is $50$) is per forward pass over the full graph ($40$ graphs/batch). Build time is provided specifically to \model models which require a one-time precomputing, per graph, of the operators. Accuracy is reported for reference. }
\label{tab:runtime-graph-transfer-best}
\resizebox{\columnwidth}{!}{%
\begin{tabular}{clccccc}
\toprule
& \textbf{Model} & \textbf{\#Params} & \textbf{Train (ms/epoch)} & \textbf{Inference (ms)} & \textbf{Build (ms)} &\textbf{Accuracy}\\
\midrule
\multicolumn{7}{l}{\emph{Directed Cycle}}\\
\midrule
& GCN & $25$ & $0.67 \pm 0.34$ & $0.09 \pm 0.00$ & -- & $0.325 \pm 0.054$\\
& MagNet & $269$ & $9.12 \pm 0.28$ & $1.80 \pm 0.01$ & -- & $0.267 \pm 0.031$\\
& FaberNet & $39{,}110$ & $26.94 \pm 0.78$ & $4.44 \pm 0.00$ & -- & $1.000 \pm 0.000$\\
& Dir-GNN & $4{,}417$ & $10.58 \pm 0.06$ & $1.96 \pm 0.00$ & -- & $1.000 \pm 0.000$\\
& ADR-GNN & $345$ & $11.67 \pm 0.52$ & $1.61 \pm 0.01$ & -- & $0.283 \pm 0.024$\\
& AdvDIFFormer & $1{,}317$ & $3.66 \pm 0.23$ & $0.48 \pm 0.14$ & -- & $1.000 \pm 0.000$\\
\cmidrule(lr){2-7}
& \modelS & $161$ & $2.88 \pm 0.43$ & $0.43 \pm 0.04$ & $0.0$ & $0.258 \pm 0.042$\\
& \modelR & $288$ & $2.60 \pm 0.14$ & $0.34 \pm 0.00$ & $0.0$ & $1.000 \pm 0.000$\\
& \modelC & $2{,}719$ & $6.40 \pm 0.71$ & $1.17 \pm 0.07$ & $0.0$ & $1.000 \pm 0.000$\\
\midrule
\multicolumn{7}{l}{\emph{Directed Crossed Cycle}}\\
\midrule
& GCN & $8{,}805$ & $14.49 \pm 0.54$ & $2.89 \pm 0.16$ & -- & $0.467 \pm 0.066$\\
& MagNet & $421$ & $4.92 \pm 0.09$ & $0.98 \pm 0.00$ & -- & $0.233 \pm 0.012$\\
& FaberNet & $10{,}342$ & $23.55 \pm 0.17$ & $4.39 \pm 0.14$ & -- & $1.000 \pm 0.000$\\
& Dir-GNN & $17{,}025$ & $10.47 \pm 0.26$ & $1.80 \pm 0.14$ & -- & $1.000 \pm 0.000$\\
& ADR-GNN & $3{,}213$ & $53.38 \pm 0.69$ & $9.21 \pm 0.20$ & -- & $0.300 \pm 0.020$\\
& AdvDIFFormer & $4{,}677$ & $4.45 \pm 0.08$ & $0.63 \pm 0.00$ & -- & $1.000 \pm 0.000$\\
\cmidrule(lr){2-7}
& \modelS & $1{,}429$ & $2.98 \pm 0.01$ & $0.52 \pm 0.06$ & $0.5$ & $1.000 \pm 0.000$\\
& \modelR & $320$ & $1.73 \pm 0.04$ & $0.23 \pm 0.00$ & $0.9$ & $1.000 \pm 0.000$\\
& \modelC & $2{,}142$ & $3.88 \pm 0.05$ & $0.79 \pm 0.06$ & $0.9$ & $1.000 \pm 0.000$\\
\midrule
\multicolumn{7}{l}{\emph{Directed Clique Path}}\\
\midrule
& GCN & $453$ & $3.79 \pm 0.09$ & $0.69 \pm 0.00$ & -- & $0.275 \pm 0.020$\\
& MagNet & $161$ & $4.84 \pm 0.18$ & $0.96 \pm 0.00$ & -- & $0.250 \pm 0.020$\\
& FaberNet & $9{,}702$ & $21.28 \pm 0.15$ & $4.01 \pm 0.00$ & -- & $0.258 \pm 0.031$\\
& Dir-GNN & $4{,}417$ & $10.55 \pm 0.08$ & $1.95 \pm 0.00$ & -- & $0.767 \pm 0.112$\\
& ADR-GNN & $5{,}293$ & $113.09 \pm 0.73$ & $21.52 \pm 0.85$ & -- & $0.275 \pm 0.020$\\
& AdvDIFFormer & $4{,}677$ & $4.92 \pm 0.11$ & $0.65 \pm 0.01$ & -- & $1.000 \pm 0.000$\\
\cmidrule(lr){2-7}
& \modelS & $1{,}281$ & $2.81 \pm 0.04$ & $0.52 \pm 0.06$ & $0.2$ & $1.000 \pm 0.000$\\
& \modelR & $362$ & $1.71 \pm 0.03$ & $0.27 \pm 0.03$ & $0.4$ & $1.000 \pm 0.000$\\
& \modelC & $1{,}088$ & $4.30 \pm 0.04$ & $0.78 \pm 0.06$ & $0.4$ & $1.000 \pm 0.000$\\
\bottomrule
\end{tabular}%
}
\end{table}

\begin{table}[H]
\centering
\caption{Runtime and parameter count per topology for the \textbf{graph transfer
task} as specified in Appendix~\ref{app:graph-transfer-task-details}. The count is under an \textbf{equivalent configuration} (hidden=32, depth=4, polynomial degree (K)=1,  dropout=0, bias=True, non-linearity=ReLU). Remaining information is as in Table~\ref{tab:runtime-graph-transfer-best}.}
\label{tab:runtime-graph-transfer-comparable}
\resizebox{0.66\columnwidth}{!}{
\begin{tabular}{clccc}
\toprule
& \textbf{Model} & \textbf{\#Params} & \textbf{Train (ms/epoch)} & \textbf{Inference (ms)}\\
\midrule
\multicolumn{5}{l}{\emph{Directed Cycle}}\\
\midrule
& GCN & $2{,}469$ & $8.93 \pm 1.80$ & $1.02 \pm 0.24$\\
& MagNet & $6{,}917$ & $17.34 \pm 1.30$ & $3.01 \pm 0.68$\\
& FaberNet & $27{,}205$ & $38.95 \pm 0.34$ & $6.77 \pm 0.60$\\
& Dir-GNN & $4{,}938$ & $9.33 \pm 0.21$ & $1.40 \pm 0.12$\\
& ADR-GNN & $30{,}181$ & $134.74 \pm 3.63$ & $22.85 \pm 4.63$\\
& AdvDIFFormer & $17{,}445$ & $34.32 \pm 0.39$ & $5.04 \pm 1.11$\\
\cmidrule(lr){2-5}
& \modelS & $9{,}876$ & $11.96 \pm 0.25$ & $2.05 \pm 0.72$\\
& \modelR & $4{,}938$ & $10.03 \pm 4.00$ & $1.61 \pm 0.97$\\
& \modelC & $14{,}814$ & $22.52 \pm 2.21$ & $3.22 \pm 1.56$\\
\midrule
\multicolumn{5}{l}{\emph{Directed Crossed Cycle}}\\
\midrule
& GCN & $2{,}469$ & $12.64 \pm 0.43$ & $2.20 \pm 0.89$\\
& MagNet & $6{,}917$ & $19.03 \pm 1.85$ & $4.64 \pm 1.57$\\
& FaberNet & $27{,}205$ & $49.51 \pm 3.24$ & $7.91 \pm 1.93$\\
& Dir-GNN & $4{,}938$ & $10.00 \pm 0.33$ & $1.79 \pm 0.81$\\
& ADR-GNN & $30{,}181$ & $193.44 \pm 16.22$ & $35.31 \pm 7.38$\\
& AdvDIFFormer & $17{,}445$ & $41.97 \pm 2.84$ & $5.76 \pm 2.45$\\
\cmidrule(lr){2-5}
& \modelS & $9{,}876$ & $13.75 \pm 0.83$ & $2.21 \pm 0.37$\\
& \modelR & $4{,}938$ & $8.47 \pm 0.47$ & $1.11 \pm 0.34$\\
& \modelC & $14{,}814$ & $20.97 \pm 1.74$ & $3.28 \pm 0.91$\\
\midrule
\multicolumn{5}{l}{\emph{Directed Clique Path}}\\
\midrule
& GCN & $2{,}469$ & $17.46 \pm 0.99$ & $2.81 \pm 1.18$\\
& MagNet & $6{,}917$ & $20.83 \pm 3.74$ & $4.27 \pm 1.24$\\
& FaberNet & $27{,}205$ & $47.63 \pm 6.61$ & $7.50 \pm 1.42$\\
& Dir-GNN & $4{,}938$ & $9.92 \pm 0.80$ & $1.56 \pm 0.41$\\
& ADR-GNN & $30{,}181$ & $303.24 \pm 15.23$ & $53.27 \pm 23.50$\\
& AdvDIFFormer & $17{,}445$ & $33.62 \pm 1.16$ & $4.62 \pm 1.47$\\
\cmidrule(lr){2-5}
& \modelS & $9{,}876$ & $14.47 \pm 0.36$ & $2.58 \pm 0.95$\\
& \modelR & $4{,}938$ & $9.20 \pm 0.43$ & $1.71 \pm 0.34$\\
& \modelC & $14{,}814$ & $18.87 \pm 0.24$ & $3.49 \pm 0.94$\\
\bottomrule
\end{tabular}%
}
\end{table}

\paragraph{Benchmarking datasets.}
We report the runtime of the models used in the benchmarking of datasets. The runtime are measured on Apple M1 Max. Table~\ref{tab:runtime-benchmark-best} shows the runtime/parameters used by each model to solve the benchmarking task. Table~\ref{tab:runtime-benchmark-comparable} shows for comparable configuration, the models runtime/parameters.

\begin{table}[H]
\centering
\caption{Runtime and parameter count, illustrated with \textbf{Wisconsin} dataset \textbf{benchmarking of datasets}, for the \textbf{best configuration} selected in Table~\ref{tab:hyperparam-search-benchmarking}. Train time is per-epoch, averaged over $100$ epochs (train repeats is $5$) and inference time (infer repeats is $50$) is per forward pass over the full graph. Accuracy is reported to distinguish relevance across models. }
\label{tab:runtime-benchmark-best}
\resizebox{0.7\columnwidth}{!}{%
\begin{tabular}{clcccc}
\toprule
& \textbf{Model} & \textbf{\#Params} & \textbf{Train (ms/epoch)} & \textbf{Inference (ms)} &\textbf{Accuracy}\\
\midrule
& MLP      & $117{,}701$   & $1.44 \pm 0.26$  & $0.19 \pm 0.05$  & $0.852 \pm 0.050$\\
& MagNet   & $111{,}429$   & $6.78 \pm 0.25$  & $3.76 \pm 0.74$  & $0.834 \pm 0.040$\\
& Dir-GNN  & $251{,}072$   & $4.15 \pm 0.12$  & $1.49 \pm 0.20$  & $0.674 \pm 0.060$\\
& FaberNet & $6{,}029{,}317$ & $55.83 \pm 2.13$ & $23.56 \pm 6.79$ & $0.795 \pm 0.052$\\
& AdvDIFFormer & $1{,}095{,}941$ & $13.45 \pm 0.50$ & $2.67 \pm 0.36$ & $0.765 \pm 0.048$\\
& ADR-GNN      & $197{,}125$     & $18.80 \pm 0.58$ & $8.50 \pm 0.29$ & $0.845 \pm 0.045$\\
\cmidrule(lr){2-6}
& \modelS & $452{,}993$ & $4.19 \pm 0.81$ & $1.17 \pm 0.37$ & $0.871 \pm 0.041$\\
& \modelR & $1{,}508{,}613$ & $16.27 \pm 0.16$ & $8.10 \pm 1.72$ & $0.874 \pm 0.040$\\
& \modelC & $678{,}849$ & $5.23 \pm 0.11$ & $1.80 \pm 0.36$ & $0.868 \pm 0.038$\\
\bottomrule
\end{tabular}%
}
\end{table}

\begin{table}[H]
\centering
\caption{Build time (ms) per \model model, per graph. }
\label{tab:build-time-benchmark}
\resizebox{0.8\columnwidth}{!}{%
\begin{tabular}{clccccc}
\toprule
& \textbf{Model} & \textbf{Cornell} & \textbf{Texas} & \textbf{Wisconsin} & \textbf{Chameleon-F} &\textbf{Squirrel-F}\\
\midrule
& \modelS & $20$ & $10$ & $50.0$ & $1,230$ & $22,540$\\
& \modelR & $50$ & $40$ & $90.0$ & $3,810$ & $58,540$\\
& \modelC & $50$ & $40$ & $90.0$ & $3,810$ & $58,540$\\
\bottomrule
\end{tabular}%
}
\end{table}

\begin{table}[H]
\centering
\caption{Runtime and parameter count, illustrated with \textbf{Wisconsin} dataset \textbf{benchmarking of datasets}, for \textbf{equivalent configuration} (hidden=32, depth=4, dropout=0, K=2, bias=True). Train time is per-epoch, averaged over $50$ epochs (train repeats is $3$) and inference time (infer repeats is $50$) is per forward pass over the full graph. }
\label{tab:runtime-benchmark-comparable}
\resizebox{0.66\columnwidth}{!}{%
\begin{tabular}{clccc}
\toprule
& \textbf{Model} & \textbf{\#Params} & \textbf{Train (ms/epoch)} & \textbf{Inference (ms)}\\
\midrule
& MLP & $56{,}805$ & $1.27 \pm 0.44$ & $0.17 \pm 0.04$\\
& Dir-GNN & $113{,}610$ & $1.72 \pm 0.09$ & $0.39 \pm 0.00$\\
& MagNet & $173{,}157$ & $10.67 \pm 0.28$ & $5.85 \pm 0.30$\\
& FaberNet & $692{,}677$ & $14.61 \pm 0.27$ & $6.29 \pm 0.12$\\
& ADR-GNN & $84{,}517$ & $15.03 \pm 0.14$ & $6.03 \pm 0.67$\\
& AdvDIFFormer & $79{,}973$ & $6.93 \pm 0.36$ & $1.54 \pm 0.00$\\
\cmidrule(lr){2-5}
& \modelS & $227{,}220$ & $2.97 \pm 0.07$ & $0.68 \pm 0.00$\\
& \modelR & $113{,}610$ & $1.58 \pm 0.08$ & $0.41 \pm 0.03$\\
& \modelC & $340{,}830$ & $4.16 \pm 0.02$ & $1.07 \pm 0.03$\\
\bottomrule
\end{tabular}%
}
\end{table}

\vfill
\pagebreak

\section{Additional experiment for receptive field reach}
\label{app:reach-generalization}



\subsection{Graph signal transfer on cycle graph}
\label{app:graph-signal-cycle}
\paragraph{Settings.} To gain further insight into the results of Section~\ref{sec:receptive-oversmoothing}, we conduct a long-range dependency experiment on the directed cycle graph $\mathcal{C}_N$ with $N=200$ nodes. The goal of this task is to find the filter coefficients to transfer a localized signal from a source node to a target node at various distances. In particular, we study the target reconstruction error of the different filters (\eqref{eq:graph-sum-filter}) underlying \model, as their behavior determines the receptive field of the model.

We start with a localized signal defined as a gaussian centered at an arbitrarily chosen source node, and we select a target node such that the number of hops (shortest path distance) between them is $r$. The starting signal and the ground truth target signal are thus respectively defined as
\begin{equation}
    \vc x_{\text{source}}[n] = e^{-\frac{(n - n_{\text{source}})^2}{2\sigma^2}},
    \qquad
    \vc x_{\text{target}}[n] = e^{-\frac{(n - n_{\text{target}})^2}{2\sigma^2}},
    \qquad \text{with } d(n_{\text{source}}, n_{\text{target}}) = r,
\end{equation}
where $\sigma$ is the bump's width, and $d$ shortest path distance from $n_\text{source}$ to $n_\text{target}$.

First, we generate different pairs of source and target nodes, spanning all distances up to the graph radius $r=100$ (half the graph), and construct signals with $\sigma=2$. Then, for each fixed order $k \in \{10,20,50\}$, we estimate the coefficients of the sum and ratio filters $h_{\sumf}$ and $h_{\ratf}$ introduced in Section \ref{sec:non-holomorphicity} and the polynomial filter $h$ of the same order. 
The coefficients are optimized for the normalized MSE (NMSE) between the filtered source signal and the ground truth target signal, where 
\begin{equation}
    \text{NMSE}(\vc x_{\text{target}}, \hat{\vc x}_{\text{source},k})= \frac{\bigl\| \vc x_{\text{target}}- \hat{\vc x}_{\text{source},k}\bigr\|_2^2}{\bigl\| \vc x_{\text{target}}\bigr\|_2^2} 
    \label{eq:mse-reconstruct-filter}
\end{equation}
where $ \hat{\vc x}_{\text{source},k} = \hat{h}_{i,k}(\ma L)\,{\vc x}_{\text{source}}$, and $\hat{h}_{i,k}$ is the estimated $k$-th order filter of class $h_i \in \{h, h_\sumf,h_\ratf\}$. The results are shown in Figure~\ref{app:long-range-dep-mse-all-orders}.

\begin{figure}[H]
    \centering
       \includegraphics[width=.95\linewidth]{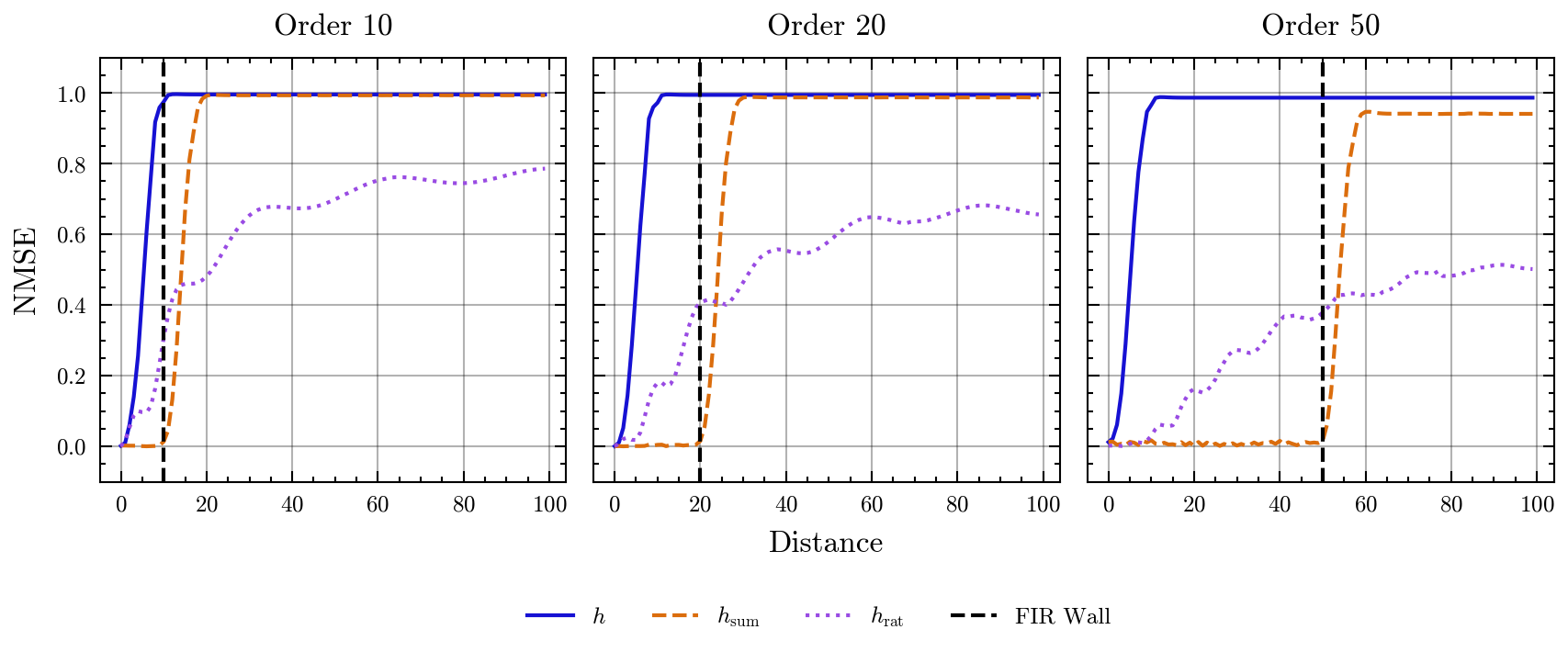}
  \caption{Long-range dependency experiment on cycle graph with increasing orders.}
  \label{app:long-range-dep-mse-all-orders}
\end{figure}


\paragraph{Results.}
We clearly observe the effect of the FIR wall in Figure~\ref{app:long-range-dep-mse-all-orders}, where the kernel $h$ generally struggles to reach $0$ NMSE (perfect transfer) even when its order (degree) is higher than the distance between the source and the target, which in the Figure is shown as percentage of the graph radius). The kernel $h_\sumf$ also performs marginally better than $h$, consistent with the capacity of leveraging $\Lu$ stated in Section~\ref{subsubsec:oversmoothing}, preventing oversmoothing. Hence in contrast to $h$, $h_\sumf$ keeps the initial signal localized, reducing NMSE. Finally, $h_\ratf$ overcomes this wall as even when its order is largely lower than the distance, the signal transfer still happen ($\text{NMSE}< 1$). It is however performing slightly worse than polynomial kernels when its order is larger than the distance, as expected from Theorem~\ref{thm:reach-gap}. 

\subsection{Graph signal transfer on temperature graph}
\paragraph{Settings.} Similarly to the experiment in Section ~\ref{app:graph-signal-cycle}, we conduct a long-range dependency experiment on a temperature graph with $N=212$ nodes, built as described in Figure~\ref{app:temp-graph} \citep{chan2026graph}. This time around, we set the source node randomly on the east side of the graph (source regions as the wind goes from east to west) and a target node randomly on the west side. The starting signal and the ground truth target signal are thus respectively defined as
\begin{equation}
\vc x_{\text{source}}[n] = e^{-\frac{(\vc c[n] - \vc c[n_\text{source}])^2}{2\sigma^2}}, \quad 
\vc x_{\text{target}}[n] = e^{-\frac{(\vc c[n] - \vc c[n_\text{target}])^2}{2\sigma^2}}, \qquad \text{with } d(n_{\text{source}}, n_{\text{target}}) = r,
\end{equation}
with $\vc c[n]$ being the coordinates of node $n$, $\sigma$ the bump's width, and $d$ the shortest path distance from $n_\text{source}$ to $n_\text{target}$. We generate $10$ different pairs of source and target nodes, leading to $10$ pairs of signals with $\sigma=1$. An example of source-target pair is shown in Figure~\ref{app:temperature_task}. The coefficients are optimized for the NMSE between the filtered source signal and the ground truth target signal (\eqref{eq:mse-reconstruct-filter}). The results are shown in Figure~\ref{app:illustration_task_result} and~\ref{app:long-range-dep-mse-all-orders-temperature-graph}. We observe in particular that $h_{\ratf}$ breaks the FIR wall and is able to better reach the target node despite a similar order than other kernels. With increasing order, the performance of $h_{\ratf}$ improves, but is surpassed by $h_{\sumf}$. 

\begin{figure}[H]
    \centering
    \captionsetup[subfigure]{justification=centering}
  \subfloat[Measured wind velocity field\label{app:velocity-field-temp}]{%
       \includegraphics[width=0.4\linewidth]{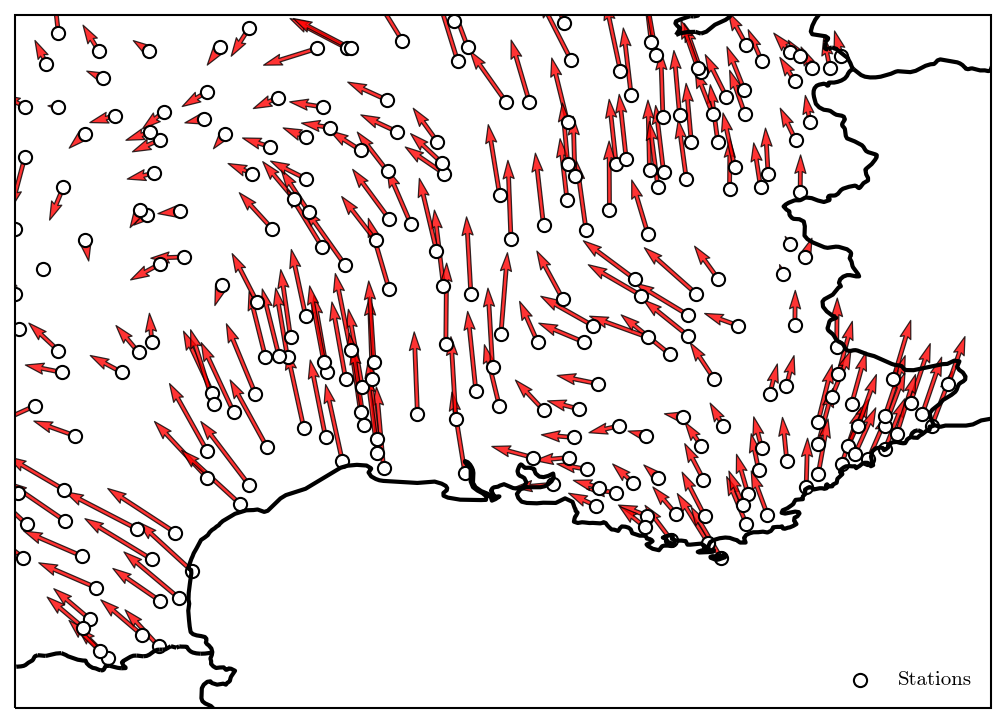}}
    \subfloat[Derived temperature graph \label{app:temp-graph}]{%
       \includegraphics[width=0.4\linewidth]{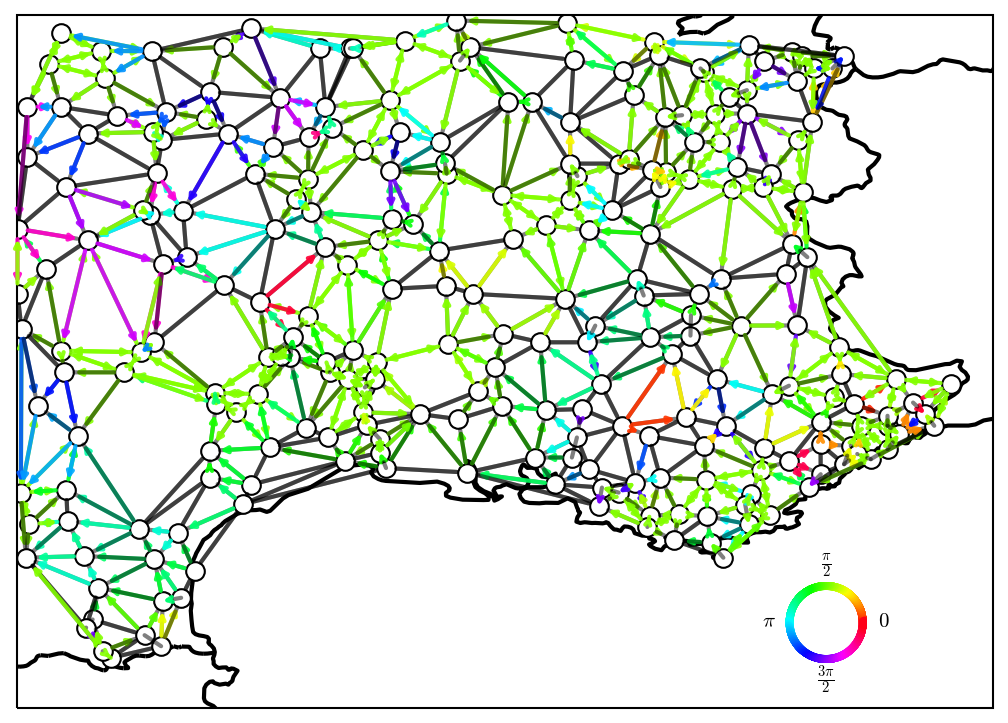}}
       \\
    \subfloat[Signal centered on source and target nodes used for the temperature displacement task \label{app:temperature_task}]{%
       \includegraphics[width=0.8\linewidth]{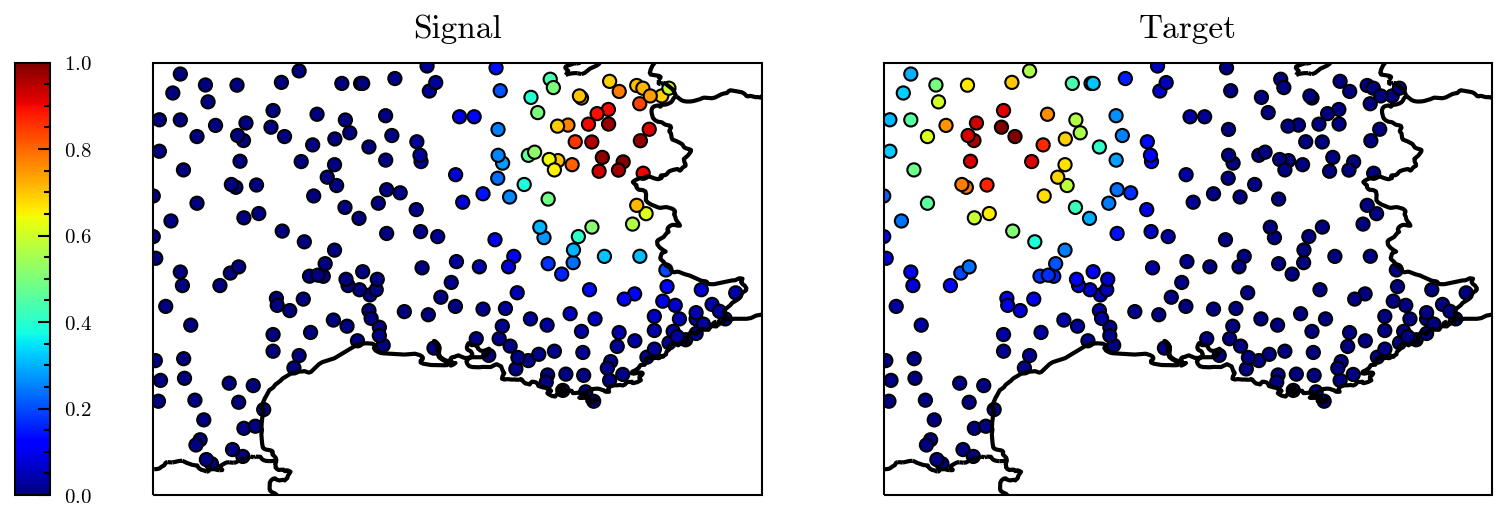}}
  \caption{a) Measured wind velocity field. b) Derived temperature graph. c) Signal centered on source and target nodes used for the temperature displacement task.}
\end{figure}

\begin{figure}[H]
    \centering
    \captionsetup[subfigure]{justification=centering}
    \subfloat[Estimated target signal (filter order: $K=10$) \label{app:illustration_task_result}]{%
       \includegraphics[width=1\linewidth]{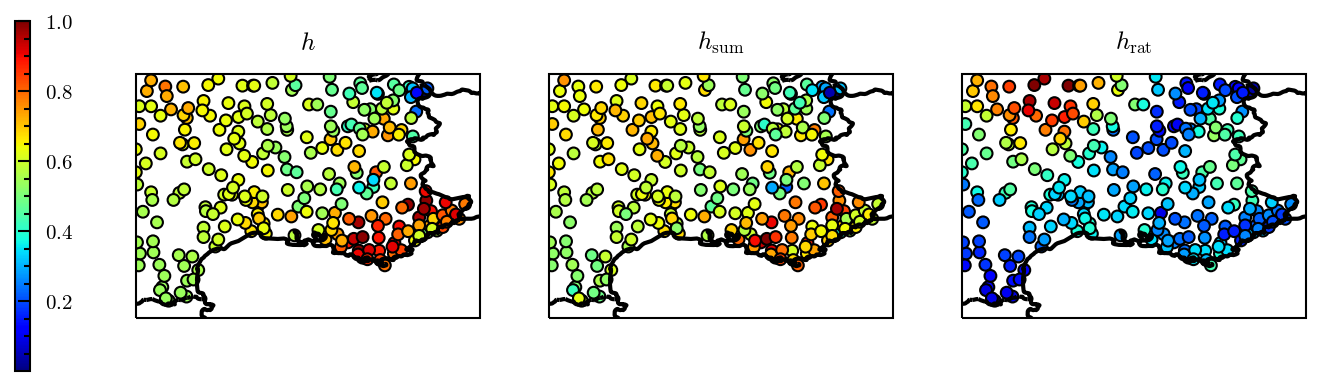}}
       \\
    \subfloat[NMSE for target reconstruction as a function of filter order \label{app:long-range-dep-mse-all-orders-temperature-graph}]{%
       \includegraphics[width=0.75\linewidth]{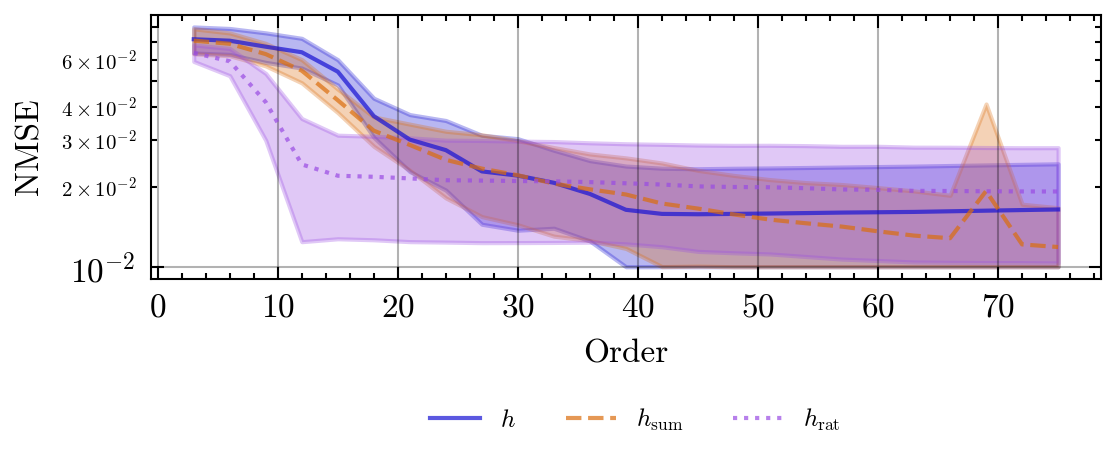}}
  \caption{a) Estimated target signal using kernels $h,h_\sumf,h_\ratf$ with $K=10$. b) Mean NMSE across the 10 (source,target) pairs for estimated target signal as a function of filter order.}
\end{figure}

\paragraph{Results.} 
In this experiment we show similar takeaway as in Appendix~\ref{app:graph-signal-cycle} but instead for a real graph, that is a temperature graph with direction given by the wind. We see once more a better NMSE performance for $h_\ratf$ on low filter order regime w.r.t. $h$ and $h_\sumf$. On the other hand, $h_\sumf$ is able to achieve an overall better NMSE than $h$.

\end{document}

%% file: math_commands.tex
\usepackage{amsmath,amsfonts,bm}

\def\eqref#1{equation~\ref{#1}}

\def\1{\bm{1}}

\DeclareMathAlphabet{\mathsfit}{\encodingdefault}{\sfdefault}{m}{sl}
\SetMathAlphabet{\mathsfit}{bold}{\encodingdefault}{\sfdefault}{bx}{n}

\newcommand{\R}{\mathbb{R}}



%% file: macros.tex
\newcommand{\vc}[1]{{\mathbf #1}}
\newcommand{\vct}[1]{\pmb{#1}}
\newcommand{\ma}[1]{{\mathbf #1}}

\newcommand{\uparrowcirc}{
\tikz[baseline=0ex]{
\draw[->] (0,-0.08) -- (0,0.2);
\draw (0,0.05) circle (0.06);
}}

\newcommand{\C}{\mathbb{C}}

\newcommand{\Lc}{{\mathbf L}^{\circ}}
\newcommand{\Lu}{{\mathbf L}^{\uparrow}}
\providecommand{\Lrat}{\ma{L}^{\uparrowcirc}}
\newcommand{\wLc}{\widetilde{\mathbf L}^{\circ}}
\newcommand{\wLu}{\widetilde{\mathbf L}^{\uparrow}}
\newcommand{\lR}{\lambda_{\mathrm{R}}}
\newcommand{\lI}{\lambda_{\mathrm{I}}}

\newcommand{\sumf}{\text{sum}}
\newcommand{\ratf}{\text{rat}}

\newcommand{\model}{\textsc{TwinS-GCN}\xspace}
\newcommand{\modellayer}{\textsc{TwinS}\xspace}
\newcommand{\modelS}{\modellayer (S)\xspace}   
\newcommand{\modelR}{\modellayer (R)\xspace}   
\newcommand{\modelC}{\modellayer (C)\xspace}   

\newcommand{\filtername}{ratio\xspace}   

\providecommand{\R}{\mathbb{R}}

\newtheorem{theorem}{Theorem}
\newtheorem{lemma}[theorem]{Lemma}

\theoremstyle{definition}
\newtheorem{definition}{Definition}[section]

\newtheorem{prop}{Proposition}
\theoremstyle{definition}

\newtheorem{corollary}{Corollary}